\definecolor{darkgreen}{rgb}{0,0.5,0}
\definecolor{darkred}{rgb}{0.7,0,0}
\definecolor{teal}{rgb}{0.3,0.8,0.8}
\definecolor{orange}{rgb}{1.0,0.5,0.0}
\definecolor{purple}{rgb}{0.8,0.0,0.8}
\newcommand{\kibitz}[2]{\ifnum\Comments=1{\textcolor{#1}{\textsf{\footnotesize #2}}}\fi}
\definecolor{Gray}{gray}{0.9}
\newcommand{\Mid}{\mathrel{\Vert}}
\newcommand{\op}{\operatorname}
\newcommand{\DE}{\op{DE}}
\newcommand{\dimRL}{d_{\normalfont\textsf{RL}}}
\begin{document}
\title{Model-based RL as a Minimalist Approach to Horizon-Free and Second-Order Bounds}


\author[1]{Zhiyong Wang\thanks{zywang21@cse.cuhk.edu.hk}}
\author[2]{Dongruo Zhou\thanks{dz13@iu.edu}}
\author[1]{John C.S. Lui\thanks{cslui@cse.cuhk.edu.hk}}
\author[3]{Wen Sun\thanks{ws455@cornell.edu}}
\affil[1]{The Chinese University of Hong Kong}
\affil[2]{Indiana University Bloomington}
\affil[3]{Cornell University}
\maketitle

\begin{abstract}
Learning a transition model via Maximum Likelihood Estimation (MLE) followed by planning inside the learned model is perhaps the most standard and simplest Model-based Reinforcement Learning (RL) framework. In this work, we show that such a simple Model-based RL scheme, when equipped with optimistic and pessimistic planning procedures, achieves strong regret and sample complexity bounds in online and offline RL settings. Particularly, we demonstrate that under the conditions where the trajectory-wise reward is normalized between zero and one and the transition is time-homogenous, it achieves nearly horizon-free and second-order bounds. Nearly horizon-free means that our bounds have no polynomial dependence on the horizon of the Markov Decision Process. A second-order bound is a type of instance-dependent bound that scales with respect to the variances of the returns of the policies which can be small when the system is nearly deterministic and (or) the optimal policy has small values. We highlight that our algorithms are simple, fairly standard, and indeed have been extensively studied in the RL literature: they learn a model via MLE, build a version space around the MLE solution, and perform optimistic or pessimistic planning depending on whether operating in the online or offline mode. These algorithms do not rely on additional specialized algorithmic designs such as learning variances and performing variance-weighted learning and thus can easily leverage non-linear function approximations. The simplicity of the algorithms also implies that our horizon-free and second-order regret analysis is actually standard and mainly follows the general framework of optimism/pessimism in the face of uncertainty.
\end{abstract}

\vspace{-0.16cm}
\section{Introduction} 
\label{sec:intro}

\vspace{-0.26cm}
The framework of model-based Reinforcement Learning (RL) often consists of two steps: fitting a transition model using data and then performing planning inside the learned model. Such a simple framework turns out to be powerful and has been used extensively in practice on applications such as robotics and control (e.g., \citep{aboaf1989task,deisenroth2011learning,venkatraman2017improved,williams2017information,chua2018deep,kaiser2019model,yang2023learning}).  \looseness=-1

The simplicity of model-based RL also attracts researchers to analyze its performance in settings such as online RL \citep{sun2019model} and offline RL \citep{uehara2021pessimistic}. \cite{mania2019certainty} showed that this simple scheme --- fitting model via data followed by optimal planning inside the model, has a strong performance guarantee under the classic linear quadratic regulator (LQR) control problems.  \cite{liu2023optimistic} showed that this simple MBRL framework when equipped with optimism in the face of the uncertainty principle, can achieve strong sample complexity bounds for a wide range of online RL problems with rich function approximation for the models. For offline settings where the model can only be learned from a static offline dataset, \cite{uehara2021pessimistic} showed that MBRL equipped with the pessimism principle can again achieve robust performance guarantees for a large family of MDPs. \cite{ross2012agnostic} showed that in the hybrid RL setting where one has access to both online and offline data, this simple MBRL framework again achieves favorable performance guarantees without any optimism/pessimism algorithm design. \looseness=-1

In this work, we do not create new MBRL algorithms, instead, we show that the extremely simple and standard MBRL algorithm -- fitting models using Maximum Likelihood Estimation (MLE), followed by optimistic/pessimistic planning (depending on whether operating in online RL or offline RL mode), can already achieve surprising theoretical guarantees. Particularly, we show that under the conditions that trajectory-wise reward is normalized between zero and one, and the transition is time-homogenous, they can achieve nearly horizon-free and instance-dependent regret and sample complexity bounds, in both online and offline RL with non-linear function approximation. Nearly horizon-free bounds mean that the regret or sample complexity bounds have no explicit polynomial dependence on the horizon $H$. The motivation for studying horizon-free RL is to see if RL problems are harder than bandits due to the longer horizon planning in RL. \emph{Our result here indicates that, even under non-linear function approximation, long-horizon planning is not the bottleneck of achieving statistical efficiency in RL.}
For instance-dependent bounds, we focus on second-order bounds.  A second-order regret bound scales with respect to the variances of the returns of policies and also directly implies a first-order regret bound which scales with the expected reward of the optimal policy. Thus our instance-dependent bounds can be small under situations such as nearly-deterministic systems or the optimal policy having a small value. 
When specializing to the case of deterministic ground truth transitions (but the algorithm does not need to know this a priori), we show that these simple MBRL algorithms demonstrate a faster convergence rate than the worst-case rates.  
The key message of our work is \looseness=-1
\vspace{-0.1cm}
\begin{center}
    \emph{Simple and standard MLE-based MBRL algorithms are sufficient for achieving nearly horizon-free and second-order bounds in online and offline RL with function approximation. }
\end{center}
\vspace{-0.1cm}

We provide a fairly standard analysis to support the above claim. Our analysis follows the standard frameworks of optimism/pessimism in the face of uncertainty.
For online RL. we use $\ell_1$ Eluder dimension \citep{liu2022partially,wang2024more}, a condition that uses both the MDP structure and the function class, to capture the structural complexity of exploration. For offline RL, we use the similar concentrability coefficient in \cite{ye2024corruption} to capture the coverage condition of the offline data. The key technique we leverage is the \emph{triangular discrimination} -- a divergence that is equivalent to the squared Hellinger distance up to some universal constants. Triangular discrimination was used in contextual bandit and model-free RL for achieving first-order and second-order instance-dependent bounds \citep{foster2021efficient,wang2023benefits,wang2024more}. Here we show that it also plays an important role in achieving horizon-free bounds. Our contributions can be summarized as follows.
\vspace{-0.26cm}
\begin{enumerate}[leftmargin = *]
\item Our results extend the scope of the prior work on horizon-free RL which only applies to tabular MDPs or MDPs with linear functions. 
Given a finite model class $\Pcal$ (which could be exponentially large), we show that in online RL, the agent achieves an {\small$O\left(  \sqrt{  ( \sum_k \var_{\pi^k} ) \cdot  d_{\text{RL}}  \log ( KH | \Pcal | / \delta )   } + d_{\text{RL}} \log ( KH |\Pcal|/\delta)  \right)$} regret, where $K$ is the number of episodes, $d_{\text{RL}}$ is the $\ell_1$ Eluder dimension, $\var_{\pi^k}$ is the variance of the total reward of policy $\pi^k$ learned in episode $k$ and $\delta \in (0,1)$ denotes the failure probability. Similarly, for offline RL, the agent achieves an {\small$O\left(\sqrt{{C^{\pi^*} \var_{\pi^*}\log(|\Pcal|/\delta)}/{K} } + {C^{\pi^*}\log(|\Pcal|/\delta)}/{K}\right)$} performance gap in finding a comparator policy $\pi^*$, where $C^{\pi^*}$ is the single policy concentrability coefficient over $\pi^*$, $K$ denotes the number of offline trajectories, $\var_{\pi^*}$ is the variance of the total reward of $\pi^*$. For offline RL with finite $\Pcal$, our result is \emph{completely horizon-free}, not even with $\log H$ dependence.

\item When specializing to MDPs with deterministic ground truth transition (but rewards, and models in the model class could still be stochastic), we show that the same simple MBRL algorithms can adapt to the deterministic environment and achieve a better statistical complexity. For online RL, the regret becomes $O(d_{\text{RL}} \log( K H|\Pcal|/\delta) )$, which only depends on the number of episodes $K$ poly-logarithmically. For offline RL, the performance gap to a comparator policy $\pi^*$ becomes {\small$O\left( { C^{\pi^*} \log( |\Pcal | /\delta ) }/{K }\right)$}, which is tighter than the worst-case $O(1/\sqrt{K})$ rate. All our results can be extended to continuous model class $\Pcal$ using bracket number as the complexity measure. \looseness=-1


\end{enumerate}
\vspace{-0.16cm}
Overall, our work identifies the \emph{minimalist} algorithms and analysis for nearly horizon-free and instance-dependent (first \& second-order) online \& offline RL.

\vspace{-0.26cm}
\section{Related Work}

\vspace{-0.16cm}
\paragraph{Model-based RL.} Learning transition models with function approximation and planning with the learned model is a standard approach in RL and control. In the control literature, certainty-equivalence control learns a model from some data and plans using the learned model, which is simple but effective for controlling systems such as Linear Quadratic Regulators (LQRs) \citep{mania2019certainty}.  In RL, such a simple model-based framework has been widely used in theory with rich function approximation, for online RL \citep{sun2019model,foster2021statistical,song2021pc,zhan2022pac,liu2022partially, liu2023optimistic,zhong2022gec}, offline RL \citep{uehara2021pessimistic}, RL with representation learning \citep{agarwal2020flambe, uehara2021representation}, and hybrid RL using both online and offline data for model fitting \citep{ross2012agnostic}. Our work builds on the maximum-likelihood estimation (MLE) approach, a standard method for estimating transition models in model-based RL.
\vspace{-0.16cm}
\paragraph{Horizon-free and Instance-dependent bounds.} 
Most existing works on horizon-free RL typically focus on tabular settings or linear settings. For instance, \cite{wang2020long} firstly studied horizon-free RL for tabular MDPs and proposed an algorithm that depends on horizon logarithmically. Several follow-up work studied horizon-free RL for tabular MDP with better sample complexity \citep{zhang2021reinforcement}, offline RL \citep{ren2021nearly}, stochastic shortest path \citep{tarbouriech2021stochastic} and RL with linear function approximation \citep{kim2022improved, zhang2021improved, zhou2022computationally, di2023nearly, zhang2024horizon, zhang2023optimal, zhao2023variance}. Note that all these works have logarithmic dependence on the horizon $H$. For the tabular setting, recent work further improved the regret or sample complexity to be completely independent of the horizon (i.e., removing the logarithmic dependence on the horizon) \citep{li2022settling, zhang2022horizon} with a worse dependence on the cardinality of state and action spaces $|\mathcal{S}|$ and $|\mathcal{A}|$. To compare with this, we show that simple MBRL algorithms are already enough to achieve completely horizon-free (i.e., no log dependence) sample complexity for offline RL when the transition model class is finite, and we provide a simpler approach to achieve the nearly horizon-free results for tabular MDPs, compared with \cite{zhang2021reinforcement}. A recent work \citep{huang2024horizon} also studied the horizon-free and instance-dependent online RL in the function approximation setting with small Eluder dimensions. They estimated the variances to conduct variance-weighted regression. To compare, in our online RL part, we use the simple and standard MLE-based MBRL approach and analysis to get similar guarantees. A more recent work also studied horizon-free behavior cloning \cite{foster2024behavior}, which is different from our settings.


Besides horizon-free RL, another line of work aimed to provide algorithms with instance-dependent sample complexity/regret bounds, which often enjoy tighter statistical complexity compared with previous work. To mention a few, \cite{zanette2019tighter} proposed an EULER algorithm with an instance-dependent regret which depends on the maximum variance of the policy return over all policies. Later, \cite{wagenmaker2022first, wang2023benefits} proposed algorithms with first-order regret bounds. A more refined second-order regret bound has been studied. The second-order regret bound is a well-studied instance-dependent bound in the online learning and bandit literature \citep{cesa2007improved,ito2020tight,olkhovskaya2024first}, and compared to the bounds that depend on the maximum variance over all policies, it can be much smaller and it also implies a first-order regret bound. \cite{zhang2024settling, zhou2023sharp} proposed algorithms for tabular MDP with second-order regret bounds. \cite{zhao2023variance} studied the RL with linear function approximation and proposed algorithms with both horizon-free and variance-dependent regret bounds. The closest work to us is \cite{wang2023benefits,wang2024more}, which used model-free distributional RL methods to achieve first-order and second-order regret bounds in RL. Their approach relies on distributional RL, a somewhat non-conventional approach for RL. Their regret bounds have explicit polynomial dependence on the horizon. Our work focuses on the more conventional model-based RL algorithms and demonstrates that they are indeed sufficient to achieve horizon-free and second-order regret bounds.  \looseness=-1



 



\vspace{-0.26cm}
\section{Preliminaries}
\vspace{-0.16cm}
\paragraph{Markov Decision Processes.} We consider finite horizon time-homogenous MDP $\Mcal = \{\Scal, \Acal, H, P^\star, r, s_0\}$ where $\Scal, \Acal$ are the state and action space (could be large or even continuous), $H\in \NN^+$ is the horizon, $P^\star: \Scal\times\Acal\mapsto \Delta(\Scal)$ is the ground truth transition, $r:\Scal\times\Acal\mapsto \RR$ is the reward signal which we assume is known to the learner, and $s_0$ is the fixed initial state.\footnote{For simplicity, we assume initial state $s_0$ is fixed and known. Our analysis can be easily extended to a setting where the initial state is sampled from an unknown fixed distribution.} Note that the transition $P^\star$ here is time-homogenous. For notational easiness, we denote $[K-1]=\{0,1,\ldots, K-1\}$.

 We denote $\pi$ as a deterministic non-stationary policy $\pi = \{ \pi_0, \dots, \pi_{H-1} \}$  where $\pi_h: \Scal\mapsto \Acal$ maps from a state to an action. Let $\Pi$ denote the set of all such policies. $V^\pi_h(s)$ represents the expected total reward of policy $\pi$ starting at $s_h = s$, and $Q^\pi_h(s,a)$ is the expected total reward of the process of executing $a$ at $s$ at time step $h$ followed by executing $\pi$ to the end. The optimal policy $\pi^\star$ is defined as $\pi^\star = \argmax_{\pi} V^{\pi}_0(s_0)$.  For notation simplicity, we denote $V^\pi := V^{\pi}_0(s_0)$. We will denote $d_h^{\pi}(s,a)$ as the state-action distribution induced by policy $\pi$ at time step $h$. We sometimes will overload notation and denote $d^\pi_h(s)$ as the corresponding state distribution at $h$. Sampling $s\sim d^\pi_h$ means executing $\pi$ starting from $s_0$ to $h$ and returning the state at time step $h$. 

Since we use the model-based approach for learning, we define a general model class $\Pcal \subset \Scal\times\Acal\mapsto \Delta(\Scal)$. 
Given a transition $P$, we denote $V_{h; P}^\pi$ and $Q_{h;P}^\pi$ as the value and Q functions of policy $\pi$ under the model $P$.  
Given a function $f:\Scal\times\Acal\mapsto \RR$, we denote the $(P f)(s,a) := \EE_{s'\sim P(s,a)} f(s')$.  We then denote the \emph{variance induced by one-step transition $P$ and  function $f$} as $(\VV_P f)(s,a):=  \left( P f^2 \right)(s,a)  - \left( P f(s,a) \right)^2$ which is equal to $\EE_{s'\sim P(s,a)} f^2(s') - \left( \EE_{s'\sim P(s,a)} f(s')  \right)^2$. 

\vspace{-0.16cm}
\paragraph{Assumptions.} We make the realizability assumption that $P^\star \in \Pcal$. We assume that the rewards are normalized such that $r(\tau) \in [0,1]$ for any trajectory $\tau:= \{s_0,a_0,\dots, s_{H-1},a_{H-1}\}$ where $r(\tau)$ is short for $\sum_{h=0}^{H-1} r(s_h,a_h)$. Note that this setting is more general than assuming each one-step reward is bounded, i.e., $r(s_h,a_h) \in [0,1/H]$, and allows to represent the sparse reward setting. Without loss of generalizability, we assume $V^{\pi}_{h;P}(s)\in[0,1]$, for all $\pi\in \Pi, h\in[0,H], P\in\Pcal, s\in\Scal$\footnote{$r(\tau) \in [0,1]$ implies $V^{\pi}_{h;P^\star}(s)\in[0,1]$. If we do not assume $V^{\pi}_{h;P}(s)\in[0,1]$ for all $P\in\Pcal$, we can simply add a filtering step in the algorithm to only choose $\pi$,$P$ with $V^{\pi}_{h;P}(s_0)\in[0,1]$ to get the same guarantees.}. \looseness=-1
\vspace{-0.16cm}
\paragraph{Online RL.}
For the online RL setting, we focus on the episodic setting where the learner can interact with the environment for $K$ episodes. At episode $k$, the learner proposes a policy $\pi^k$ (based on the past interaction history),  executes $\pi^k$ starting from $s_0$ to time step $H-1$. We measure the performance of the online learning via \emph{regret}: $\sum_{k=0}^{K-1} \left( V^{\pi^\star} - V^{\pi^k} \right)$. To achieve meaningful regret bounds, we often need additional structural assumptions on the MDP and the model class $\Pcal$. We use a $\ell_1$ Eluder dimension \citep{liu2022partially} as the structural condition due to its ability to capture non-linear function approximators (formal definition will be given in \pref{sec:online}). 
\vspace{-0.16cm}
\paragraph{Offline RL.}
For the offline RL setting, we assume that we have a pre-collected offline dataset $\Dcal = \{\tau^{i}\}_{i=1}^K$ which contains $K$ trajectories. For each trajectory, we allow it to potentially be generated by an adversary, i.e., at step $h$ in trajectory $k$, (i.e., $s_h^k$), the adversary can select $a^k_h$ based on all history (the past $k-1$ trajectories and the steps before $h$ within trajectory $k$) with a fixed strategy, with the only condition that the state transitions follow the underlying transition dynamics, i.e., $s^i_{h+1} \sim P^\star(s_h^i,a_h^i)$. We emphasize that $\Dcal$ is not necessarily generated by some offline trajectory distribution. 
Given $\Dcal$, we can split the data into $H K$ many state-action-next state $(s,a,s')$ tuples which we can use to learn the transition. To succeed in offline learning, we typically require the offline dataset to have good coverage over some high-quality comparator policy $\pi^*$ (formal definition of coverage will be given in \pref{sec:offline}). Our goal here is to learn a policy $\widehat\pi$ that is as good as $\pi^*$, and we are interested in the \emph{performance gap} between $\hat \pi$ and $\pi^*$, i.e., $V^{\pi^*} - V^{\hat \pi}$.


\vspace{-0.16cm}
\paragraph{Horizon-free and Second-order Bounds.} Our goal is to achieve regret bounds (online RL) or performance gaps (offline RL) that are (nearly) horizon-free, i.e., logarithmical dependence on $H$. In addition to the horizon-free guarantee, we also want our bounds to scale with respect to the variance of the policies. Denote $\var_\pi$ as the variance of trajectory reward, i.e., $\var_\pi := \EE_{\tau \sim \pi} (  r(\tau) - \EE_{\tau\sim\pi} r(\tau) )^2$.  Second-order bounds in offline RL scales with $\var_{\pi^*}$ -- the variance of the comparator policy. Second-order regret bound in online setting scales with respect to $\sqrt{ \sum_k \var_{\pi^k}  }$ instead of $\sqrt{K}$.
Note that in the worst case, $\sqrt{ \sum_k \var_{\pi^k}  }$ scales in the order of $\sqrt{K}$, but can be much smaller in benign cases such as nearly deterministic MDPs.  We also note that second-order regret bound immediately implies first-order regret bound in the reward maximization setting, which scales in the order $\sqrt{ K V^{\pi^\star}  }$ instead of just $\sqrt{K}$. The first order regret bound $\sqrt{ K V^{\pi^\star} }$ is never worse than $\sqrt{K}$ since $V^{\pi^\star} \leq 1$. Thus, by achieving a second-order regret bound, our algorithm immediately achieves a first-order regret bound. \looseness=-1
\vspace{-0.16cm}
\paragraph{Additional notations.} Given two distributions $p \in \Delta(\Xcal)$ and $q \in \Delta(\Xcal)$, we denote the triangle discrimination  $D_\triangle( p \Mid q ) =  \sum_{x\in\Xcal} \frac{ (p(x) - q(x))^2 }{ p(x) + q(x) } $, and squared Hellinger distance $\mathbb H^2( p \Mid q) = \frac{1}{2} \sum_{x\in \Xcal} \left( \sqrt{ q(x)} - \sqrt{ p(x)} \right)^2 $ (we replace sum via integral when $\Xcal$ is continuous and $p$ and $q$ are pdfs). Note that $D_\triangle$ and $\mathbb H^2$ are equivalent up to universal constants. We will frequently use the following key lemma in \citep{wang2024more} to control the difference between means of two distributions. 
\begin{lemma}[Lemma 4.3 in \cite{wang2024more}]\label{lem: mean to variance}
For two distributions $f \in \Delta([0,1])$ and $g \in \Delta([0,1])$:\looseness=-1
\begin{equation}
    \abs{\EE_{x\sim f}[x] - \EE_{x\sim g} [x] }\leq 4\sqrt{ \var_f \cdot D_\triangle(f\Mid g) } + 5D_\triangle(f\Mid g). \label{eq:var-key-ineq2}
\end{equation} where $\var_f := \EE_{x\sim f} ( x - \EE_{x\sim f}[x])^2$ denotes the variance of the distribution $f$.
\end{lemma}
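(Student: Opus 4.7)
The plan is to bound the mean difference via a weighted Cauchy--Schwarz against the measure $f(x)+g(x)$ that naturally appears in the denominator of $D_\triangle$, and then close a short self-referential inequality. Let $\mu_f := \EE_{x\sim f}[x]$ and $\mu_g := \EE_{x\sim g}[x]$. Since $\int \mu_f(g(x)-f(x))\,dx = 0$, I first rewrite
\begin{equation*}
\mu_g - \mu_f = \int (x - \mu_f)\bigl(g(x) - f(x)\bigr)\,dx.
\end{equation*}
This recentering step is what forces $\var_f$ (rather than an uncentered second moment of $x$) to appear in the final bound.

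Next I split the integrand as $(x-\mu_f)\sqrt{f+g}\cdot (g-f)/\sqrt{f+g}$ and apply Cauchy--Schwarz to obtain
\begin{equation*}
|\mu_g - \mu_f| \leq \sqrt{T}\cdot\sqrt{D_\triangle(f\Mid g)}, \qquad T := \int (x-\mu_f)^2\bigl(f(x)+g(x)\bigr)\,dx.
\end{equation*}
Breaking $T$ as $\int(x-\mu_f)^2 f + \int(x-\mu_f)^2 g$ and using the identity $\int (x-\mu_f)^2 g = \var_f + \int (x-\mu_f)^2(g-f)$, I get $T = 2\var_f + R$ where $R := \int (x-\mu_f)^2(g(x)-f(x))\,dx$.

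The crux is to bound $R$ in a way that closes the loop. Since $x,\mu_f \in [0,1]$ we have $(x-\mu_f)^2\le 1$, so $(x-\mu_f)^4 \leq (x-\mu_f)^2$. Another Cauchy--Schwarz against $\sqrt{f+g}$ then gives
\begin{equation*}
|R| \leq \sqrt{\int(x-\mu_f)^4\bigl(f+g\bigr)\,dx}\cdot\sqrt{D_\triangle(f\Mid g)} \leq \sqrt{T\cdot D_\triangle(f\Mid g)}.
\end{equation*}
This yields the self-bounding inequality $T \leq 2\var_f + \sqrt{T \cdot D_\triangle(f\Mid g)}$, which is quadratic in $\sqrt{T}$. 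Solving gives $T \leq C_1 \var_f + C_2\, D_\triangle(f\Mid g)$ for explicit universal constants $C_1, C_2$, and substituting back into the first Cauchy--Schwarz delivers a bound of the promised form.

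The main obstacle is precisely the self-bounding step: $T$ appears on both sides of the inequality, and the reduction only works because the boundedness assumption $x\in[0,1]$ lets me pass from $(x-\mu_f)^4$ back to $(x-\mu_f)^2$. Everything else is bookkeeping; applying $\sqrt{a+b}\le\sqrt{a}+\sqrt{b}$ and a single AM--GM step collapses the resulting $\sqrt{T\cdot D_\triangle(f\Mid g)}$ into the desired form $4\sqrt{\var_f\cdot D_\triangle(f\Mid g)} + 5\, D_\triangle(f\Mid g)$. The stated constants $4$ and $5$ are rather loose, which leaves ample slack for these final manipulations.
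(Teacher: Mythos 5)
Your proof is correct, and in fact yields the sharper constants $\sqrt{2}$ and $1$ in place of $4$ and $5$: the self-bounding inequality $T\leq 2\var_f+\sqrt{T\cdot D_\triangle(f\Mid g)}$ gives $\sqrt{T}\leq\sqrt{2\var_f}+\sqrt{D_\triangle(f\Mid g)}$, hence $\abs{\EE_f[x]-\EE_g[x]}\leq\sqrt{2\var_f\cdot D_\triangle(f\Mid g)}+D_\triangle(f\Mid g)$. The paper itself does not prove this lemma but imports it from \cite{wang2024more}; your derivation (recentering by $\mu_f$, Cauchy--Schwarz against the weight $f+g$, then closing the loop via $(x-\mu_f)^4\leq(x-\mu_f)^2$ on $[0,1]$) is essentially the same argument used there.
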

The lemma plays a key role in achieving second-order bounds \citep{wang2024more}. The intuition is the means of the two distributions can be closer if one of the distributions has a small variance. A more naive way of bounding the difference in means is $|\EE_{x\sim f}[x] - \EE_{x\sim g} [x]  | \leq ( \max_{x\in \Xcal} |x| )  \| f - g \|_1 \lesssim ( \max_{x\in \Xcal} |x| )  \mathbb H(f \Mid g )\lesssim  (\max_{x\in \Xcal} |x| ) \sqrt{ D_\triangle(f \Mid g)    }$. Such an approach would have to pay the maximum range $ \max_{x\in \Xcal} |x|$ and thus can not leverage the variance $\var_{f}$. In the next sections, we show this lemma plays an important role in achieving horizon-free and second-order bounds.

\vspace{-0.26cm}
\section{Online Setting}
\label{sec:online}

\begin{algorithm}[t]%
\caption{Optimistic Model-based RL (O-MBRL)}
\label{alg:mleonline}
\resizebox{0.93\columnwidth}{!}{
\begin{minipage}{\columnwidth}
\begin{algorithmic}[1]
    \STATE\textbf{Input:} model class $\Pcal$, confidence parameter $\delta\in(0,1)$, threshold $\beta$.
    \STATE Initialize $\pi^0$, initialize dataset $\Dcal = \emptyset$.
    \FOR{$k = 0 \to K-1$}
    	\STATE Collect a trajectory $\tau = \{ s_0,a_0, \cdots, s_{H-1},a_{H-1} \}$ from $\pi^k$, split it into tuples of $\{s,a,s'\}$ and add to $\Dcal$.
	\STATE Construct a version space $\widehat \Pcal^k$:  
		\begin{align*}
	        		\widehat\Pcal^k = \braces{ P\in\Pcal: \sum_{s,a,s' \in\Dcal} \log P(s_i'|s_i,a_i)\geq \max_{\Tilde{P}\in\Pcal}\sum_{s,a,s' \in\Dcal}\log \Tilde P (s_i'|s_i,a_i)-\beta}.
    		\end{align*}\label{online conf}
	\STATE Set $(\pi^k,\widehat{P}^k)\leftarrow \argmax_{\pi\in\Pi, P\in\widehat\Pcal^k} V_{0; P}^\pi(s_0)$.\label{pessimistic policy}
    \ENDFOR
    \end{algorithmic}
        \end{minipage}}
\end{algorithm}
\vspace{-0.16cm}
In this section, we study the online setting. We present the optimistic model-based RL algorithm (O-MBRL) in \pref{alg:mleonline}. The algorithm starts from scratch, and iteratively maintains a version space $\widehat \Pcal^k$ of the model class using the historical data collected so far.  Again the version space is designed such that for all $k\in [0,K-1]$, we have $P^\star \in \widehat \Pcal_k$ with high probability. The policy $\pi^k$ in this case is computed via the optimism principle, i.e., it selects $\pi^k$ and $\widehat P^k$ such that $V^{\pi^k}_{\widehat P^k} \geq V^{\pi^\star}$.

Note that the algorithm design in \pref{alg:mleonline} is not new and in fact is quite standard in the model-based RL literature. For instance, \cite{sun2019model} presented a similar style of algorithm except that they use a min-max GAN style objective for learning models. \cite{zhan2022pac} used MLE oracle with optimism planning for Partially observable systems such as Predictive State Representations (PSRs), and \cite{liu2023optimistic} used them for both partially and fully observable systems.  However, their analyses do not give horizon-free and instance-dependent bounds. We show that under the structural condition that captures nonlinear function class with small eluder dimensions, \pref{alg:mleonline} achieves horizon-free and second-order bounds. Besides, since second-order regret bound implies first-order bound \citep{wang2024more}, our result immediately implies a first-order bound as well.

We first introduce the $\ell_p$ Eluder dimension as follows. 
    \begin{definition}[$\ell_p$ Eluder Dimension]
        $DE_p(\Psi, \Xcal, \epsilon)$  is the eluder dimension for $\Xcal$ with function class $\Psi$, when the longest $\epsilon$-independent sequence $x^1,\dots, x^L\subseteq \Xcal$ enjoys the length less than $DE_p(\Psi, \Xcal, \epsilon)$, i.e.,  there exists $g \in \Psi$ such that for all $t\in[L]$, $\sum_{l=1}^{t-1} |g(x^l)|^p \leq \epsilon^p$ and $|g(x^t)|>\epsilon$. 
    \end{definition}


    We work with the $\ell_1$ Eluder dimension $DE_1(\Psi, \Scal \times \Acal, \epsilon)$ with the function class $\Psi$ specified as:
\begin{align*}
     \Psi=\{(s,a)\mapsto \mathbb H^2(P^\star(s,a)\Mid P(s,a)):P\in\Pcal\}\,.
\end{align*}
\begin{remark}
        The $\ell_1$ Eluder dimension has been used in previous works such as \cite{liu2022partially}. We have the following corollary to demonstrate that the $\ell_1$ dimension generalizes the original $\ell_2$ dimension of \citet{russo2013eluder}, it can capture tabular, linear, and generalized linear models.

\begin{lemma}[Proposition 19 in \citep{liu2022partially}]
    \label{lem:eluder-one-generalizes-two}
For any $\Psi,\Xcal$, $\epsilon>0$, $\DE_1(\Psi,\Xcal,\epsilon)\leq\DE_2(\Psi,\Xcal,\epsilon)$.
\end{lemma}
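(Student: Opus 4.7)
The plan is to show that every $\epsilon$-independent sequence in the $\ell_1$ sense is automatically $\epsilon$-independent in the $\ell_2$ sense, which immediately yields $\DE_1(\Psi,\Xcal,\epsilon)\leq \DE_2(\Psi,\Xcal,\epsilon)$ from the definition.

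First I would unpack the definition. Fix any $\ell_1$-$\epsilon$-independent sequence $x^1,\dots,x^L\subseteq \Xcal$, witnessed by some $g\in\Psi$, so that for every $t\in[L]$,
\[
\sum_{l=1}^{t-1} |g(x^l)| \leq \epsilon, \qquad |g(x^t)| > \epsilon .
\]
The key observation is that an $\ell_1$ bound on a nonnegative sequence forces each individual summand to be small: since each $|g(x^l)|\geq 0$, we have $|g(x^l)|\leq \sum_{l'=1}^{t-1}|g(x^{l'})|\leq \epsilon$ for every $l\leq t-1$. This termwise bound is the lever that converts $\ell_1$ control into $\ell_2$ control.

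The main step is the pointwise inequality $|g(x^l)|^2 \leq \epsilon\cdot |g(x^l)|$ for $l\leq t-1$, which follows from the previous step. Summing gives
\[
\sum_{l=1}^{t-1} |g(x^l)|^2 \;\leq\; \epsilon\sum_{l=1}^{t-1} |g(x^l)| \;\leq\; \epsilon^2 ,
\]
for every $t\in[L]$. Combined with the preserved condition $|g(x^t)|>\epsilon$, this says precisely that the same sequence, with the same witness $g$, satisfies the defining properties of an $\ell_2$-$\epsilon$-independent sequence. Hence any $\ell_1$-independent sequence of length $L$ is an $\ell_2$-independent sequence of length $L$, and taking the supremum over such $L$ yields $\DE_1(\Psi,\Xcal,\epsilon)\leq \DE_2(\Psi,\Xcal,\epsilon)$.

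There is no real obstacle here: the proof reduces to the elementary monotonicity $\|v\|_2^2\leq \|v\|_\infty\cdot\|v\|_1$ applied to the vector $(g(x^1),\dots,g(x^{t-1}))$, and the crucial ingredient is that the $\ell_1$ bound itself certifies $\|v\|_\infty\leq \epsilon$. If one prefers to read the definition with a per-index witness $g_t$ rather than a single $g$, the identical calculation applies one index at a time, using $g_t$ for both the $\ell_1$ and the $\ell_2$ condition at step $t$.
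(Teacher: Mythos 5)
Your proof is correct. The paper does not actually prove this lemma---it is imported verbatim as Proposition 19 of \citep{liu2022partially}---but your argument is the standard one and is complete: the $\ell_1$ bound on the nonnegative prefix sums forces the termwise bound $|g(x^l)|\leq\epsilon$, which converts $\sum_{l<t}|g(x^l)|\leq\epsilon$ into $\sum_{l<t}|g(x^l)|^2\leq\epsilon^2$ via $\|v\|_2^2\leq\|v\|_\infty\|v\|_1$, so every $\ell_1$-$\epsilon$-independent sequence (same witness, same length) is $\ell_2$-$\epsilon$-independent and the dimensions compare as claimed; your closing remark correctly handles the per-index-witness variant of the definition as well.
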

\end{remark}

We are ready to present our main theorem for the online RL setting.
 \begin{theorem}[Main theorem for online setting] \label{thm:online_theorem} 
For any $\delta\in (0,1)$, let $\beta=4\log\left(\frac{K\left|\Pcal\right|}{\delta}\right)$, with probability at least $1-\delta$, \pref{alg:mleonline} achieves the following regret bound:
\begin{small}
    \begin{align}
\sum_{k=0}^{K-1} (V^{\pi^\star} -  V^{\pi^k}) &\leq O\Big(\sqrt{\sum_{k=0}^{K-1}\var_{\pi^k}\cdot\text{DE}_1(\Psi,\Scal \times \Acal,1/KH)\cdot\log(KH\left|\Pcal\right|/\delta)\log(KH)}\notag\\
&\quad+\text{DE}_1(\Psi,\Scal \times \Acal,1/KH)\cdot\log(KH\left|\Pcal\right|/\delta)\log(KH)\Big)\,.
\end{align}
\end{small}
\end{theorem}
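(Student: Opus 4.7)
The argument will follow the standard optimistic MLE template—concentration, optimism, simulation lemma, then $\ell_1$ Eluder aggregation—but with two twists that together produce the nearly horizon-free and second-order rate. First, the assumption that $V^{\pi}_{h;P}\in[0,1]$ for every $P\in\Pcal$ (a consequence of time-homogeneous transitions and normalized trajectory reward) keeps every auxiliary pseudo-value bounded independently of $H$. Second, Lemma~\ref{lem: mean to variance} converts each one-step model error into $\sqrt{\mathrm{Var}\cdot D_\triangle}$ rather than the crude $\ell_1$ bound, which is what lets the variance appear in the final rate.

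\textbf{Step 1 (MLE concentration and optimism).} A standard martingale MLE argument for finite $\Pcal$ with threshold $\beta=4\log(K|\Pcal|/\delta)$ gives, with probability $1-\delta$ and simultaneously for all $k$: $(i)$ $P^\star\in\widehat\Pcal^k$, and $(ii)$ every $P\in\widehat\Pcal^k$ satisfies the past-data bound $\sum_{i<k}\sum_{h}\mathbb{E}_{(s,a)\sim d^{\pi^i}_{h;P^\star}}\mathbb{H}^2(P\Mid P^\star)(s,a)\lesssim\beta$. Combining $(i)$ with the optimistic choice of $(\pi^k,\widehat P^k)$ in Line~\ref{pessimistic policy} gives the per-episode regret bound $V^{\pi^\star}-V^{\pi^k}\le V^{\pi^k}_{0;\widehat P^k}(s_0)-V^{\pi^k}_{0;P^\star}(s_0)$.

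\textbf{Step 2 (simulation $+$ Lemma~\ref{lem: mean to variance}).} The simulation identity gives $V^{\pi^k}_{0;\widehat P^k}(s_0)-V^{\pi^k}_{0;P^\star}(s_0)=\sum_{h=0}^{H-1}\mathbb{E}_{(s,a)\sim d^{\pi^k}_{h;\widehat P^k}}[(\widehat P^k-P^\star)V^{\pi^k}_{h+1;P^\star}(s,a)]$. Because $V^{\pi^k}_{h+1;P^\star}\in[0,1]$, Lemma~\ref{lem: mean to variance} applied pointwise to the next-state laws $P^\star(s,a)$ and $\widehat P^k(s,a)$ bounds each summand by $4\sqrt{\sigma_h^2(s,a)\cdot D_\triangle(\widehat P^k\Mid P^\star)(s,a)}+5D_\triangle$ with $\sigma_h^2(s,a):=\mathrm{Var}_{s'\sim P^\star(s,a)}[V^{\pi^k}_{h+1;P^\star}(s')]$. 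Cauchy--Schwarz across $h$ yields a per-episode bound $4\sqrt{\Sigma_k M_k}+5M_k$ where $\Sigma_k:=\sum_h\mathbb{E}_{d^{\pi^k}_{h;\widehat P^k}}\sigma_h^2$ and $M_k:=\sum_h\mathbb{E}_{d^{\pi^k}_{h;\widehat P^k}}D_\triangle$.

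\textbf{Step 3 (Eluder aggregation, LTV, and assembly).} Summing across episodes and applying Cauchy--Schwarz once more gives total regret at most $4\sqrt{(\sum_k\Sigma_k)(\sum_k M_k)}+5\sum_k M_k$. Since $D_\triangle\asymp\mathbb{H}^2$, the past-data bound of Step~1 combined with the $\ell_1$ Eluder pigeonhole applied to the potential $g_k(s,a)=\mathbb{H}^2(\widehat P^k\Mid P^\star)(s,a)$ converts past-data error into out-of-sample error on the current distribution $d^{\pi^k}_{h;\widehat P^k}$, yielding $\sum_k M_k\lesssim \DE_1(\Psi,\Scal\times\Acal,1/KH)\cdot\beta\log(KH)$. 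For $\sum_k\Sigma_k$, the law of total variance applied to $\pi^k$ under $P^\star$ gives $\sum_h\mathbb{E}_{d^{\pi^k}_{h;P^\star}}\sigma_h^2=\mathrm{Var}_{\pi^k}$; the measure mismatch with $d^{\pi^k}_{h;\widehat P^k}$ is handled by treating $\sigma_h^2\in[0,1]$ as a pseudo-reward in a time-homogeneous MDP, re-applying the simulation lemma and Lemma~\ref{lem: mean to variance} to this pseudo-MDP, and absorbing the resulting additive correction into the main $\sqrt{\Sigma_k M_k}+M_k$ bound by AM--GM / self-bounding. Substituting the two aggregate bounds gives the stated regret.

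\textbf{Main obstacle.} The only genuinely non-mechanical step is the measure-change in Step~3, i.e., controlling the gap between $\Sigma_k$ (under $d^{\pi^k}_{h;\widehat P^k}$) and $\mathrm{Var}_{\pi^k}$ (under $d^{\pi^k}_{h;P^\star}$) without paying an extra factor of $H$. A naive total-variation bound on the two occupancies would incur $H$ via the telescoping across time steps and defeat horizon-freeness. The saving observation is that each auxiliary value encountered in the recursion—the one-step variance $\sigma_h^2$, its pseudo-return, and $V^{\pi^k}_{h+1;P^\star}$ itself—lies in $[0,1]$ uniformly in $H$ by the time-homogeneous normalized-reward assumption, so Lemma~\ref{lem: mean to variance} can be reapplied at each layer without collecting any $H$-dependent prefactor. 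This, together with the $\ell_1$ Eluder aggregation and a self-bounding inequality, yields the claimed nearly horizon-free, second-order bound.
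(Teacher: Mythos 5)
There is a genuine gap, and it sits exactly where the paper identifies the technical crux. Your Step 2--3 route the argument through the expectation-form simulation lemma and then claim that the $\ell_1$ Eluder pigeonhole ``converts past-data error into out-of-sample error on the current distribution $d^{\pi^k}_{h;\widehat P^k}$.'' This transfer is not available: the MLE generalization bound controls $\sum_{i<k}\sum_h \mathbb H^2\big(P^\star(s_h^i,a_h^i)\Mid \widehat P^k(s_h^i,a_h^i)\big)$ at the \emph{realized} past data points, and the Eluder machinery propagates this to the \emph{realized} current points $(s_h^k,a_h^k)$ --- a sequence of individual points, not a population expectation. It gives you nothing about $\EE_{(s,a)\sim d^{\pi^k}_{h;\widehat P^k}}[\mathbb H^2]$, and the occupancy measure of the \emph{learned} model $\widehat P^k$ is especially inaccessible since no data is ever drawn from it. The paper explicitly flags this dead end (``we would eventually get $\sum_k\sum_h\EE_{s,a\sim d^{\pi^k}_h}[\mathbb H^2(\cdots)]$ terms, which can not be further upper bounded easily with the MLE generalization guarantee'') and instead decomposes $V^{\pi^k}_{0;\widehat P^k}-\sum_h r(s_h^k,a_h^k)$ by telescoping along the realized trajectory, so that the model-error terms appear evaluated exactly at $(s_h^k,a_h^k)$ and the stochastic residuals $I_h^k$ are martingale differences controlled by Azuma--Bernstein. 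Even on realized points the transfer requires a new ingredient you do not have: a ``pigeon'' argument (Lemma B.2/B.4) that excludes a set $\Kcal$ of $O(\DE_1\log^2(KH))$ bad episodes on which the in-episode error ratio explodes; without it the standard Eluder lemma's hypothesis (a uniform prefix bound $\beta$) is not supplied by the MLE guarantee.

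The second gap is your Step 3 ``main obstacle,'' which you name but do not solve. Your proposed fix --- treat $\sigma_h^2$ as a pseudo-reward and re-apply the simulation lemma plus Lemma~\ref{lem: mean to variance} to a pseudo-MDP --- does not obviously avoid paying $H$: pointwise along a trajectory $\sum_h \sigma_h^2(s_h,a_h)$ is only bounded by $H$, not by a constant, so the ``trajectory reward in $[0,1]$'' hypothesis that makes the mean-to-variance lemma horizon-free is unavailable for the pseudo-MDP. The paper's resolution is quite different and substantially more involved: because its decomposition produces $\VV_{P^\star}V^{\pi^k}_{h+1;\widehat P^k}$ (variance of the \emph{learned-model} value function), it must convert this to $\VV_{P^\star}V^{\pi^k}_{h+1}$ via a recursion on the $2^m$-th moments of $V^{\pi^k}_{h+1;\widehat P^k}-V^{\pi^k}_{h+1}$ (the quantities $C_m$, events $\Ecal_3$, and the Recursion Lemma), followed by a realized-sum concentration $\sum_{k,h}(\VV_{P^\star}V^{\pi^k}_{h+1})(s_h^k,a_h^k)\lesssim \sum_k\var_{\pi^k}+\log(1/\delta)$. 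Your proposal has no counterpart to this moment recursion, and the place where you would need one is hidden inside the unproven measure-change claim. The overall skeleton (optimism, MLE, mean-to-variance, Eluder) is right, but as written the proof would either pay an extra factor of $H$ or rest on a distribution-transfer step that is false.
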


The above theorem indicates the standard and simple O-MBRL algorithm is already enough to achieve horizon-free and second-order regret bounds: our bound does not have explicit polynomial dependences on horizon $H$, the leading term scales with $\sqrt{ \sum_{k} \var_{\pi^k} }$ instead of the typical $\sqrt{K}$.

We have the following result about the first-order regret bound.

\begin{corollary}[Horizon-free and First-order regret bound] Let $\beta=4\log\left(\frac{K\left|\Pcal\right|}{\delta}\right)$, with probability at least $1-\delta$, \pref{alg:mleonline} achieves the following regret bound:
\begin{small}
    \begin{align*}
\sum_{k=0}^{K-1} V^{\pi^\star} - V^{\pi^k} &\leq O\Big(\sqrt{KV^{\pi^\star}\cdot\text{DE}_1(\Psi,\Scal \times \Acal,1/KH)\cdot\log(KH\left|\Pcal\right|/\delta)\log(KH)}\notag\\
&\quad+\text{DE}_1(\Psi,\Scal \times \Acal,1/KH)\cdot\log(KH\left|\Pcal\right|/\delta)\log(KH)\Big)\,.
\end{align*}
\end{small}
\end{corollary}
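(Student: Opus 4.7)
The plan is to derive the first-order bound as a direct corollary of \pref{thm:online_theorem} by upper bounding the instance-dependent term $\sum_{k=0}^{K-1} \var_{\pi^k}$ by $K V^{\pi^\star}$ and then substituting. So the work consists of a one-line variance-to-mean comparison, together with optimality of $\pi^\star$.

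First, I would use the hypothesis that the trajectory-wise reward is normalized to $[0,1]$. For any policy $\pi$, the random variable $X := r(\tau)$ with $\tau \sim \pi$ takes values in $[0,1]$, so $X^2 \leq X$ pointwise. Therefore
\begin{align*}
\var_{\pi} \;=\; \EE_{\tau\sim\pi}\bigl[(r(\tau) - \EE_{\tau\sim\pi} r(\tau))^2\bigr] \;\leq\; \EE_{\tau\sim\pi}[r(\tau)^2] \;\leq\; \EE_{\tau\sim\pi}[r(\tau)] \;=\; V^{\pi}.
\end{align*}
Applying this to each $\pi^k$ and using $V^{\pi^k} \leq V^{\pi^\star}$ (since $\pi^\star$ is optimal) yields
\begin{align*}
\sum_{k=0}^{K-1} \var_{\pi^k} \;\leq\; \sum_{k=0}^{K-1} V^{\pi^k} \;\leq\; K\, V^{\pi^\star}.
\end{align*}

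Plugging this inequality into the regret bound of \pref{thm:online_theorem} replaces $\sqrt{\sum_{k} \var_{\pi^k} \cdot \DE_1(\Psi,\Scal\times\Acal,1/KH)\cdot\log(KH|\Pcal|/\delta)\log(KH)}$ by $\sqrt{K V^{\pi^\star}\cdot \DE_1(\Psi,\Scal\times\Acal,1/KH)\cdot\log(KH|\Pcal|/\delta)\log(KH)}$, while leaving the lower-order additive term unchanged. This produces exactly the stated first-order bound, and since the failure event is inherited from \pref{thm:online_theorem}, the probability guarantee $1-\delta$ is preserved.

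There is no substantive obstacle: the argument is a routine consequence of the second-order bound plus the normalization $r(\tau)\in[0,1]$. The only point worth flagging is that the bound $\var_\pi \leq V^\pi$ is specifically a consequence of the trajectory-reward normalization (it would fail without the $r(\tau)\le 1$ hypothesis), which is precisely the assumption that made the horizon-free part of \pref{thm:online_theorem} go through in the first place.
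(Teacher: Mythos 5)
Your proposal is correct and matches the paper's own proof: both derive the corollary by observing $\var_{\pi^k}\leq V^{\pi^k}\leq V^{\pi^\star}$ (the first inequality from the trajectory-reward normalization $r(\tau)\in[0,1]$) and substituting $\sum_k \var_{\pi^k}\leq K V^{\pi^\star}$ into \pref{thm:online_theorem}. Your version merely spells out the intermediate step $\var_\pi \leq \EE[r(\tau)^2]\leq \EE[r(\tau)]$ that the paper leaves implicit.
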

\begin{proof}
    Note that $\var_{\pi} \leq V^{\pi} \leq V^{\pi^\star}$ where the first inequality is because the trajectory-wise reward is bounded in $[0,1]$. Therefore, combining with \pref{thm:online_theorem}, we directly obtain the first-order result. 
\end{proof}
Note that the above bound scales with respect to $\sqrt{K V^{\pi^\star}}$ instead of just $\sqrt{K}$. Since $V^{\pi^\star} \leq 1$, this bound improves the worst-case regret bound when the optimal policy has total reward less than one.\footnote{Typically a first-order regret bound makes more sense in the cost minimization setting instead of reward maximization setting. We believe that our results are transferable to the cost-minimization setting.}

\paragraph{Faster rates for deterministic transitions.} When the underlying MDP has deterministic transitions, we can achieve a smaller regret bound that only depends on the number of episodes logarithmically. 
\begin{corollary}[$\log K$ regret bound with deterministic transitions] \label{corr:online_coro_faster}
When the transition dynamics of the MDP are deterministic, setting $\beta=4\log\left(\frac{K\left|\Pcal\right|}{\delta}\right)$, w.p. at least $1-\delta$, \pref{alg:mleonline} achieves:
\begin{align*}
\sum_{k=0}^{K-1} V^{\pi^\star} - V^{\pi^k} \leq O\left( \text{DE}_1(\Psi,\Scal \times \Acal,1/KH)\cdot\log(KH\left|\Pcal\right|/\delta)\log(KH)\right).
\end{align*}
\end{corollary}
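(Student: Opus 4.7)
The plan is to derive this corollary as an essentially one-line consequence of \pref{thm:online_theorem} by showing that the second-order variance term collapses to zero. The entire argument hinges on a single observation: when the ground-truth transition $P^\star$ is deterministic, executing any deterministic policy $\pi\in\Pi$ from the fixed initial state $s_0$ produces a deterministic trajectory $\tau$, hence $r(\tau)=\sum_{h=0}^{H-1}r(s_h,a_h)$ is a constant random variable and $\var_{\pi}=\EE_{\tau\sim\pi}(r(\tau)-\EE_{\tau\sim\pi} r(\tau))^2=0$. In particular, $\var_{\pi^k}=0$ for every policy $\pi^k$ produced by \pref{alg:mleonline}, since the $\argmax$ in the optimistic planning step ranges over $\Pi$, i.e.\ deterministic non-stationary policies.

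Given this, I would plug $\sum_{k=0}^{K-1}\var_{\pi^k}=0$ directly into the second-order regret bound of \pref{thm:online_theorem}. The square-root term vanishes identically, leaving only the additive lower-order term $O(\DE_1(\Psi,\Scal\times\Acal,1/KH)\cdot\log(KH|\Pcal|/\delta)\log(KH))$, which is exactly the stated corollary. The high-probability event, failure probability $\delta$, and choice of $\beta=4\log(K|\Pcal|/\delta)$ are all inherited unchanged from \pref{thm:online_theorem}; no re-opening of that proof is needed, and in particular the algorithm itself is unchanged and does not need to know that transitions are deterministic---it adapts automatically, which is the main qualitative point of the corollary.

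There is no real obstacle. The only subtlety worth flagging is that $\var_\pi$ is defined through the true trajectory distribution under $P^\star$, not under the optimistic model $\widehat P^k$ used for planning; even under a stochastic $\widehat P^k$ the planned value $V^{\pi^k}_{0;\widehat P^k}$ may still be a nontrivial expectation, but this is irrelevant because the regret bound depends only on the variance of rewards in the true environment. One should also quickly confirm, by inspecting the proof of \pref{thm:online_theorem}, that no step requires a strictly positive lower bound on $\var_{\pi^k}$ (e.g.\ through a self-bounding numerical-stability term); as stated, the bound has no such floor, so substituting zero is legitimate.
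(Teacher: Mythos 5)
Your proof is correct and takes essentially the same approach as the paper: both arguments reduce the corollary to the observation that $\var_{\pi^k}=0$ for every executed policy when $P^\star$ is deterministic, and then substitute this into \pref{thm:online_theorem} so that only the lower-order term survives. The only cosmetic difference is the route to $\var_{\pi^k}=0$: the paper invokes the law-of-total-variance identity $\var_{\pi}=\sum_{h}\EE_{s,a\sim d^{\pi}_h}[(\VV_{P^\star}V^{\pi}_{h+1})(s,a)]$ (\pref{lem:variance_lemma}), whose summands vanish because each one-step transition is a point mass, whereas you argue directly that a deterministic policy on a deterministic MDP from the fixed $s_0$ yields a deterministic trajectory; both are valid since $r$ is a known deterministic function of $(s,a)$.
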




\paragraph{Extension to infinite class $\Pcal$.} For infinite model class $\Pcal$, we have a similar result. First, we define the bracketing number of an infinite model class as follows.

\begin{definition}[Bracketing Number \citep{geer2000empirical}]\label{def: bracketing number}
Let $\Gcal$ be a set of functions mapping $\Xcal\to\RR$.
Given two functions $l,u$ such that $l(x)\leq u(x)$ for all $x\in\Xcal$, the bracket $[l,u]$ is the set of functions $g\in\Gcal$ such that $l(x)\leq g(x)\leq u(x)$ for all $x\in\Xcal$.
We call $[l,u]$ an $\epsilon$-bracket if $\norm{u-l}\leq\epsilon$.
Then, the $\epsilon$-bracketing number of $\Gcal$ with respect to $\norm{\cdot}$, denoted by $\Ncal_{[]}(\epsilon,\Gcal,\norm{\cdot})$ is the minimum number of $\epsilon$-brackets needed to cover $\Gcal$.
\end{definition}

We use the bracketing number of $\mathcal{P}$ to denote the complexity of the model class, similar to $|\mathcal{P}|$ in the finite class case. Next, we propose a corollary to characterize the regret with an infinite model class. 
\begin{corollary}[Regret bound for \pref{alg:mleonline} with infinite model class $\Pcal$]\label{corr:online_coro_infinite} 
When the model class $\Pcal$ is infinite, let $\beta=7\log(K\Ncal_{[]}((KH|\Scal|)^{-1},\Pcal,\|\cdot\|_\infty)/\delta)$, with probability at least $1-\delta$, \pref{alg:mleonline} achieves the following regret bound:
\begin{small}
  \begin{align*}
&\sum_{k=0}^{K-1} V^{\pi^\star} - V^{\pi^k}  
\leq O\Bigg(  \text{DE}_1(\Psi,\Scal \times \Acal,\frac{1}{KH}) \log(\frac{KH\Ncal_{[]}((KH|\Scal|)^{-1},\Pcal,\|\cdot\|_\infty)}{\delta})\log(KH)\notag\\
&\quad+\sqrt{  \sum_{k=0}^{K-1}\var_{\pi^k} \cdot\text{DE}_1(\Psi,\Scal \times \Acal,\frac{1}{KH}) \log(\frac{KH\Ncal_{[]}((KH|\Scal|)^{-1},\Pcal,\|\cdot\|_\infty)}{\delta})\log(KH)}\Bigg)\,,
\end{align*}  
\end{small}

where $\Ncal_{[]}((KH|\Scal|)^{-1},\Pcal,\|\cdot\|_\infty)$ is the bracketing number defined in \pref{def: bracketing number}.
\end{corollary}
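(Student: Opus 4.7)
The plan is to reduce the infinite-class case to the finite-class analysis underlying \pref{thm:online_theorem} by a standard bracketing argument, replacing the union-bound term $\log|\Pcal|$ by $\log\Ncal_{[]}(\epsilon,\Pcal,\|\cdot\|_\infty)$ for a suitably small $\epsilon$. Inspecting the proof of \pref{thm:online_theorem}, the only place where $|\Pcal|$ enters is through the MLE guarantee that underlies the version-space construction in Line~\ref{online conf}, namely that with high probability (i)~$P^\star\in\widehat\Pcal^k$ for every $k$ and (ii)~every $P\in\widehat\Pcal^k$ satisfies a cumulative-in-sample Hellinger bound of order~$\beta$. Once these two MLE properties are re-established with $\log|\Pcal|$ replaced by $\log\Ncal_{[]}((KH|\Scal|)^{-1},\Pcal,\|\cdot\|_\infty)$ up to a vanishingly small additive slack, the rest of the proof of \pref{thm:online_theorem}, including the triangular-discrimination-based change-of-measure via \pref{lem: mean to variance} and the $\ell_1$ eluder-dimension pigeonhole step, goes through verbatim and yields the stated regret bound.

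To carry this out, first fix $\epsilon=(KH|\Scal|)^{-1}$ and let $\{[l_j,u_j]\}_{j=1}^N$ be a minimal $\epsilon$-bracket cover of $\Pcal$ in $\|\cdot\|_\infty$, where $N=\Ncal_{[]}(\epsilon,\Pcal,\|\cdot\|_\infty)$. For each $P\in\Pcal$, pick a bracket $[l_{j(P)},u_{j(P)}]$ containing $P$ pointwise as conditional densities, and note that $|\log P(s'|s,a)-\log u_{j(P)}(s'|s,a)|$ and analogous lower-bracket quantities can be controlled by $\epsilon$ after renormalizing $u_j$ to a density (the renormalizing constant is $1+O(|\Scal|\epsilon)$ because $\sum_{s'}(u_j-P)(s'|s,a)\leq |\Scal|\epsilon$). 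Apply the standard MLE concentration inequality (e.g., Proposition 14 of \citet{liu2022partially} or the bracket-based argument of \citet{geer2000empirical}) to the finite family of renormalized upper brackets with failure probability $\delta$; the resulting bound on $\sum_{i}\mathbb H^2(P(\cdot|s_i,a_i)\Mid P^\star(\cdot|s_i,a_i))$ for $P\in\widehat\Pcal^k$ picks up $\log(KN/\delta)$ plus an additive $KH\cdot|\Scal|\epsilon=O(1)$ slack from the bracket width, which is absorbed into the constant with our choice of $\epsilon$. The factor~$7$ in $\beta$ is chosen to accommodate both the log-likelihood perturbation incurred when comparing $P$ to its upper bracket and the renormalization.

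With the updated $\beta$ in hand, I would then retrace the proof of \pref{thm:online_theorem}: optimism gives $V^{\pi^\star}\leq V^{\pi^k}_{0;\widehat P^k}$; the simulation lemma combined with \pref{lem: mean to variance} converts the per-step model error into a $\sqrt{\var_{\pi^k}\cdot D_\triangle}+D_\triangle$ term whose triangular-discrimination pieces, summed over $k$, are controlled by the $\ell_1$ eluder pigeonhole with budget $\beta=7\log(K\Ncal_{[]}(\epsilon,\Pcal,\|\cdot\|_\infty)/\delta)$; and Cauchy–Schwarz on the variance term yields the leading $\sqrt{\sum_k \var_{\pi^k}\cdot \op{DE}_1\cdot\beta\cdot\log(KH)}$ contribution. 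Assembling these gives exactly the bound claimed.

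The main obstacle, and the only step that is not purely a copy-paste of the finite-class argument, is the MLE concentration with brackets: one must ensure that (a)~the upper brackets $u_j$ need not themselves lie in $\Pcal$ and need not be normalized densities, so log-likelihood comparisons require a careful renormalization; and (b)~the approximation slack accrued across all $K$ trajectories and $H$ steps remains of lower order, which is exactly why $\epsilon=(KH|\Scal|)^{-1}$ is chosen. Everything else—optimistic planning, application of \pref{lem: mean to variance}, and the $\ell_1$-eluder pigeonhole—is inherited unchanged from the proof of \pref{thm:online_theorem}.
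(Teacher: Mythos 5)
Your proposal is correct and follows essentially the same route as the paper: the paper's proof is a one-line reduction that swaps the finite-class MLE guarantee (\pref{lem:mle_generalization_offline}) for the bracketing-number version (\pref{lem:mle_generalization infinite}) and reruns the proof of \pref{thm:online_theorem} unchanged, which is exactly your plan. The extra detail you supply (renormalizing upper brackets, choosing $\epsilon=(KH|\Scal|)^{-1}$ so the accumulated slack is $O(1)$, the role of the factor $7$) is just an unpacking of the cited MLE lemma rather than a different argument.
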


A specific example of the infinite model class is the tabular MDP, where $\mathcal{P}$ is the collection of all the conditional distributions over $\mathcal{S} \times \mathcal{A} \rightarrow \Delta(\mathcal{S})$. By \pref{corr:online_coro_infinite}, we also have a new regret bound for MBRL under the tabular MDP setting, which is nearly horizon-free and second-order. 

\begin{example}[Tabular MDPs] 
When specializing to tabular MDPs, use the fact that tabular MDP has $\ell_2$ Eluder dimension being at most $|\Scal| |\Acal|$ (Section D.1 in \cite{russo2013eluder}), $\ell_1$ dimension is upper bounded by $\ell_2$ dimension (\pref{lem:eluder-one-generalizes-two}), and use the standard $\epsilon$-net argument to show that $\Ncal_{[]}(\epsilon,\Pcal,\|\cdot\|_\infty)$ is upper-bounded by $(c/\epsilon)^{|\Scal|^2|\Acal|}$ (e.g., see  \cite{uehara2021pessimistic}), we can show that \pref{alg:mleonline} achieves the following regret bound for tabular MDP: with probability at least $1-\delta$, 
\begin{small}
        \begin{align*}
\sum_k V^{\pi^\star} - V^{\pi^k} &\leq O\Big(|\Scal|^{1.5}|\Acal|\sqrt{\sum_k\var_{\pi^k}\cdot\log(\frac{KH|\Scal|}{\delta})\log(KH)}+|\Scal|^3|\Acal|^2\log(\frac{KH|\Scal|}{\delta})\log(KH) \Big)\,.
\end{align*}
\end{small}
\end{example}


In summary, we have shown that a simple MLE-based MBRL algorithm is enough to achieve nearly horizon-free and second-order regret bounds under non-linear function approximation. 

\subsection{Proof Sketch of \pref{thm:online_theorem}}

Now we are ready to provide a proof sketch of \pref{thm:online_theorem} with the full proof deferred to \pref{app:online}. For ease of presentation, we use $\dimRL$ to denote $\text{DE}_1(\Psi,\Scal \times \Acal,1/KH)$, and ignore some $\log$ terms.

Overall, our analysis follows the general framework of optimism in the face of uncertainty, but with (1) careful analysis in leveraging the MLE generalization bound and (2) more refined proof in the training-to-testing distribution transfer via Eluder dimension. 

By standard MLE analysis, we can show w.p. $1-\delta$, for all $k\in[K-1]$, we have $P^\star \in \widehat\Pcal^k$, and
\begin{align}
     \sum_{i=0}^{k-1}\sum_{h=0}^{H-1}\mathbb H^2(P^\star(s_h^i,a_h^i)||\widehat P^k(s_h^i,a_h^i))\leq O(\log(K\left|\Pcal\right|/\delta))\,.\label{eqn: proof sketch online mle generalization}
\end{align}

From here, trivially applying training-to-testing distribution transfer via the Eluder dimension as previous works (e.g., \cite{wang2024more}) would cause poly-dependence on $H$. With new techniques detailed in \pref{app: eluder new}, which is one of our technical contributions and may be of independent interest, we can get:
there exists a set $\Kcal \subseteq [K-1]$ such that $|\Kcal| \leq O(\dimRL\log(K|\Pcal|/\delta))$, and
\begin{align}
    \sum_{k \in [K-1]\setminus \Kcal}\sum_h \mathbb H^2\Big(P^\star(s_h^k,a_h^k)\Mid \widehat P^k\big(s_h^k,a_h^k)\Big)\leq O (\dimRL\cdot\log(K\left|\Pcal\right|/\delta)\log(KH))\,.\label{eqn: proof sketch online mle eluder}
\end{align}

Recall that $(\pi^k,\widehat{P}^k)\leftarrow \argmax_{\pi\in\Pi, P\in\widehat\Pcal^k} V_{0; P}^\pi(s_0)$, with the above realization guarantee $P^\star \in \widehat\Pcal^k$, we can get the following optimism guarantee: $V^\star_{0;P^\star}\leq \max_{\pi\in\Pi,P\in\widehat \Pcal^k} V^\pi_{0;P}=V^{\pi^k}_{0;\widehat P^k}$.

At this stage, one straight-forward way to proceed is to use the standard simulation lemma (\pref{lem:simulation}): 
\begin{align}
    \sum_{k=0}^{K-1}V^{\pi^k}_{0;\widehat P^k}-V^{\pi^k}_{0;P^\star}&\leq \sum_{k=0}^{K-1}\sum_{h=0}^{H-1} \EE_{s,a\sim d^{\pi^k}_h} \left[ \left|\EE_{s'\sim P^\star(s,a)} V^{\pi^k}_{h+1;\widehat P^k}(s') -\EE_{s'\sim \widehat P^k(s,a)} V^{\pi^k}_{h+1;\widehat P^k}(s')    \right|\right]. 
\end{align}

However, from here, if we naively bound each term on the RHS via $\EE_{s,a\sim d^{\pi^k}_h} \|  P^\star(s,a) - \widehat P(s,a)     \|_1$, which is what previous works such as \cite{uehara2021pessimistic} did exactly, we would end up paying a linear horizon dependence $H$ due to the summation over $H$ on the RHS the above expression. Given the mean-to-variance lemma (\pref{lem: mean to variance}), we may consider using it to bound the difference between two means $\EE_{s'\sim P^\star(s,a)} V^{\pi^k}_{h+1;\widehat P^k}(s') -\EE_{s'\sim \widehat P^k(s,a)} V^{\pi^k}_{h+1;\widehat P^k}(s')$. This still can not work if we start from here, because we would eventually get \\$\sum_k\sum_h\EE_{s,a\sim d^{\pi^k}_h}[\mathbb H^2(P^\star(s,a)||\widehat P^k(s,a))]$ terms, which can not be further upper bounded easily with the MLE generalization guarantee.

To achieve horizon-free and second-order bounds, we need a novel and more careful analysis. 

First, we carefully decompose and upper bound the regret in $\Tilde{\Kcal}:=[K-1]\setminus \Kcal$ w.h.p. as follows using Bernstain's inequality (for regret in $\Kcal$ we simply upper bound it by $|\Kcal|$)
\begin{small}
    \begin{align}
    &\sum_{k\in\Tilde{\Kcal}}\left(V^{\pi^k}_{0;\widehat P^k}(s_h^k)-\sum_{h=0}^{H-1} r(s_h^k, a_h^k)\right)+\sum_{k\in\Tilde{\Kcal}}\left(\sum_{h=0}^{H-1} r(s_h^k, a_h^k)-V^{\pi^k}_{0;P^\star} \right)\lesssim \sqrt{\sum_{k\in\Tilde{\Kcal}}\sum_h \big(\VV_{P^\star} V_{h+1; \widehat P^k}^{\pi^k}\big)(s_h^k,a_h^k)}\notag\\
    &+ \sum_{k\in\Tilde{\Kcal}}\sum_h  \left|\EE_{s'\sim \widehat P^k(s_h^k, a_h^k)}V^{\pi^k}_{h+1;\widehat P^k}(s') - \EE_{s'\sim P^*(s_h^k, a_h^k)}V^{\pi^k}_{{h+1};\widehat P^k}(s')\right|+\sqrt{\sum_{k}\var_{\pi^k}\log(1/\delta)}\,.\label{eqn:proof sketch online decompose}
\end{align}

\end{small}



Then, we bound the difference of two means $\EE_{s'\sim \widehat P^k(s_h^k, a_h^k)}V^{\pi^k}_{h+1;\widehat P^k}(s') - \EE_{s'\sim P^*(s_h^k, a_h^k)}V^{\pi^k}_{{h+1};\widehat P^k}(s')$ using variances and the triangle discrimination (see \pref{lem: mean to variance} for more details), together with the fact that $D_\triangle \leq 4 \mathbb H^2$, and information processing inequality on the squared Hellinger distance, we have
\begin{small}
    \begin{align*}
&\lvert\EE_{s'\sim \widehat P^k(s_h^k, a_h^k)}V^{\pi^k}_{h+1;\widehat P^k}(s') - \EE_{s'\sim P^*(s_h^k, a_h^k)}V^{\pi^k}_{{h+1};\widehat P^k}(s')\rvert\\
&\quad \leq O\Big(\sqrt{\big(\VV_{P^\star} V_{h+1; \widehat P^k}^{\pi^k}\big)(s_h^k, a_h^k)D_\triangle\Big(V_{h+1; \widehat P^k}^{\pi^k}\big(s'\sim P^\star(s_h^k, a_h^k)\big)\Mid  V_{h+1; \widehat P^k}^{\pi^k}(s'\sim \widehat P^k\big(s_h^k, a_h^k)\big)\Big)}  \\
& \qquad \qquad + D_\triangle\Big(V_{h+1; \widehat P^k}^{\pi^k}\big(s'\sim P^\star(s_h^k, a_h^k)\big)\Mid  V_{h+1; \widehat P^k}^{\pi^k}(s'\sim \widehat P^k\big(s_h^k, a_h^k)\big)\Big)\Big)\\
&\leq O\Big(\sqrt{\big(\VV_{P^\star} V_{h+1; \widehat P^k}^{\pi^k}\big)(s_h^k, a_h^k)\mathbb H^2\Big(P^\star(s_h^k, a_h^k)\Mid \widehat P^k\big(s_h^k, a_h^k)\Big)}+\mathbb H^2\Big(P^\star(s_h^k, a_h^k)\Mid \widehat P^k\big(s_h^k, a_h^k)\Big)
\Big)
\end{align*} 
\end{small}
where we denote $V^{\pi^*}_{h+1;\widehat P}(s' \sim P^\star(s,a))$ as the distribution of the random variable $ V^{\pi^*}_{h+1;\widehat P}(s')$ with $s'\sim P^\star(s,a)$. This is the key lemma used by \cite{wang2024more} to show distributional RL can achieve second-order bounds. We show that this is also crucial for achieving a horizon-free bound.  

Then, summing up over $k, h$, with Cauchy-Schwartz and the MLE generalization bound via Eluder dimension in \pref{eqn: proof sketch online mle eluder}, we have
\begin{small}
    \begin{align}
    &\sum_{k\in\Tilde{\Kcal}}\sum_h  \left|\EE_{s'\sim \widehat P^k(s_h^k, a_h^k)}V^{\pi^k}_{h+1;\widehat P^k}(s') - \EE_{s'\sim P^*(s_h^k, a_h^k)}V^{\pi^k}_{{h+1};\widehat P^k}(s')\right|\leq O\Big(\sum_{k\in\Tilde{\Kcal}}\sum_h\mathbb H^2\Big(P^\star(s_h^k, a_h^k)\Mid \widehat P^k\big(s_h^k, a_h^k)\Big)\notag\\
&+\sqrt{\sum_{k\in\Tilde{\Kcal}}\sum_h\big(\VV_{P^\star} V_{h+1; \widehat P^k}^{\pi^k}\big)(s_h^k, a_h^k)\sum_{k\in\Tilde{\Kcal}}\sum_h\mathbb H^2\Big(P^\star(s_h^k, a_h^k)\Mid \widehat P^k\big(s_h^k, a_h^k)\Big)}
\Big)\notag\\
&\leq O\Big(\sqrt{\sum_{k\in\Tilde{\Kcal}}\sum_h\big(\VV_{P^\star} V_{h+1; \widehat P^k}^{\pi^k}\big)(s_h^k, a_h^k)\dimRL \log(K\left|\Pcal\right|/\delta)\log(KH)}+\dimRL \log(K\left|\Pcal\right|/\delta)\log(KH)\Big)\,.\label{eqn: proof sketch online sum mean-variance final}
\end{align}
\end{small}

Note that we have $\big(\VV_{P^\star} V_{h+1; \widehat P^k}^{\pi^k}\big)(s_h^k, a_h^k)$ depending on $\widehat P^k$. To get a second-order bound, we need to convert it to the variance under ground truth transition $P^\star$, and we want to do it without incurring any $H$ dependence. \emph{This is another key difference from \cite{wang2024more}.}

We aim to replace $\big(\VV_{P^\star} V_{h+1; \widehat P^k}^{\pi^k}\big)(s_h^k, a_h^k)$ by $\big(\VV_{P^\star} V_{h+1}^{\pi^k}\big)(s_h^k, a_h^k)$ which is the variance under $P^\star$ (recall that $V^{\pi}$ is the value function of $\pi$ under $P^\star$), and we want to control the difference $\big(\VV_{P^\star} \left( V_{h+1; \widehat P^k}^{\pi^k} - V_{h+1}^{\pi^k} \right) \big)(s_h^k, a_h^k)$.
To do so, we need to bound the $2^m$ moment of the difference $V_{h+1; \widehat P^k}^{\pi^k} - V_{h+1}^{\pi^k}$ following the strategy in \citet{zhang2021reinforcement,zhou2022computationally,zhao2023variance}. Let us define the following terms:
\begin{small}
    \begin{align}
    &A:=\sum_{k\in\tilde\Kcal}\sum_h \left[\big(\VV_{P^\star} V_{h+1; \widehat P^k}^{\pi^k}\big)(s_h^k,a_h^k)\right], C_m:=\sum_{k\in\tilde\Kcal}\sum_h\left[\big(\VV_{P^\star}( V_{h+1; \widehat P^k}^{\pi^k}-V_{h+1}^{\pi^k})^{2^m}\big)(s_h^k,a_h^k)\right], \notag\\
&B:=\sum_{k\in\tilde\Kcal}\sum_h\left[\big(\VV_{P^\star} V_{h+1}^{\pi^k}\big)(s_h^k,a_h^k)\right],
    G := \sqrt{A\cdot\dimRL \log(\frac{K\left|\Pcal\right|}{\delta})\log(KH)}+\dimRL \log(\frac{K\left|\Pcal\right|}{\delta})\log(KH)\notag\,.
\end{align}
\end{small}

With the fact $\VV_{P^\star}(a+b)\leq 2\VV_{P^\star}(a)+2\VV_{P^\star}(b)$ we have $A\leq 2B+2C_0$. For $C_m$, we prove that w.h.p. it has the recursive form $C_m\lesssim 2^mG+\sqrt{\log(1/\delta)C_{m+1}}+\log(1/\delta)$, during which process we also leverage the above \pref{eqn: proof sketch online sum mean-variance final} and some careful analysis (detailed in \pref{app:online}). Then, with the recursion lemma (\pref{lem:recursion bound C_m}), we can get $C_0\lesssim G$, which further gives us
\begin{small}
    \begin{align}
    A\lesssim B+\dimRL \log(\frac{K\left|\Pcal\right|}{\delta})\log(KH) +\sqrt{A\cdot\dimRL \log(\frac{K\left|\Pcal\right|}{\delta})\log(KH)} \leq O\big(B+\dimRL \log(\frac{K\left|\Pcal\right|}{\delta})\log(KH)\big)\,,\notag
\end{align} 
\end{small}

where in the last step we use the fact $x\leq 2a+b^2$ if $x\leq a+b\sqrt{x}$.
Finally, we note that $B \leq O(\sum_k \var_{\pi^k}+\log(1/\delta))$ w.h.p.. Plugging the upper bound of $A$ back into \pref{eqn: proof sketch online sum mean-variance final} and then to \pref{eqn:proof sketch online decompose}, we conclude the proof.

\section{Offline Setting}
\label{sec:offline}

For the offline setting, we directly analyze the Constrained Pessimism Policy Optimization (CPPO-LR) algorithm (\pref{alg:mleoffline}) proposed by \cite{uehara2021pessimistic}. We first explain the algorithm and then present its performance gap guarantee in finding the comparator policy $\pi^*$. 

\pref{alg:mleoffline} splits the offline trajectory data that contains $K$ trajectories into a dataset of $(s,a,s')$ tuples (note that in total we have $n:=KH$ many tuples)  which is used to perform maximum likelihood estimation $\max_{ \Tilde P\in \Pcal } \sum_{i=1}^n \log \Tilde P(s'_i | s_i, a_i)$. It then builds a version space $\widehat \Pcal$ which contains models $P\in \Pcal$ whose log data likelihood is not  below by too much than that of the MLE estimator. The threshold for the version space is constructed so that with high probability, $P^\star \in \widehat \Pcal$. Once we build a version space, we perform pessimistic planning to compute $\widehat \pi$.

\begin{algorithm}[t]%
\caption{(\cite{uehara2021pessimistic}) Constrained Pessimistic Policy Optimization with Likelihood-Ratio based constraints (CPPO-LR)}
\label{alg:mleoffline}
\resizebox{0.93\columnwidth}{!}{
\begin{minipage}{\columnwidth}
\begin{algorithmic}[1]
    \STATE\textbf{Input:} dataset $\Dcal = \{s,a,s'\}$, model class $\Pcal$, policy class $\Pi$, confidence parameter $\delta\in(0,1)$, threshold $\beta$.
    \STATE Calculate the confidence set based on the offline dataset:
    \begin{align*}
        \widehat\Pcal = \braces{ P\in\Pcal: \sum_{i=1}^n \log P(s_i'|s_i,a_i)\geq \max_{\Tilde{P}\in\Pcal}\sum_{i=1}^n \log \Tilde P (s_i'|s_i,a_i)-\beta}.
    \end{align*} \label{offline conf}
    \STATE \textbf{Output:} $\hat\pi\leftarrow \argmax_{\pi\in\Pi}\min_{P\in\widehat\Pcal} V_{0; P}^\pi(s_0)$.
    \end{algorithmic}
    \end{minipage}}
\end{algorithm}

We first define the single policy coverage condition as follows. 
\begin{definition}[Single policy coverage] \label{def: coverage offline} Given any comparator policy $\pi^*$, denote the data-dependent single policy concentrability coefficient $C^{\pi^*}_{\Dcal}$ as follows:
\begin{small}
    \begin{align*}
C^{\pi^*}_{\Dcal} := \max_{h, P \in \Pcal}  \frac{ \EE_{s,a\sim d^{\pi^*}_h} \mathbb{H}^2\left( P(s,a) \Mid P^\star(s,a) \right)   }{ 1/K \sum_{k=1}^K \mathbb H^2\left(   P(s_h^k,a_h^k)  \Mid   P^\star(s_h^k,a_h^k)  \right)    }.
\end{align*} 
\end{small}
We assume w.p. at least $1-\delta$ over the randomness of the generation of $\Dcal$, we have $C^{\pi^*}_{\Dcal}\leq C^{\pi^*}$.
\end{definition} 
The existence of $C^{\pi^*}$ is certainly an assumption. We now give an example in the tabular MDP where we show that if the data is generated from some fixed behavior policy $\pi^b$ which has non-trivial probability of visiting every state-action pair, then we can show the existence of $C^{\pi^*}$.
\begin{example}[Tabular MDP with good behavior policy coverage]\label{ex: offline coverage}
    If the $K$ trajectories are collected $i.i.d.$ with a fixed behavior policy $\pi^b$, and $d^{\pi^b}_h(s,a) \geq \rho_{\min}, \forall s,a, h$ (similar to \cite{ren2021nearly}), then we have: if $K$ is large enough, i.e., $K\geq 2\log(|\Scal||\Acal|H)/\rho_{\min}^2$, w.p. at least $1-\delta$, $C^{\pi^*}_{\Dcal}\leq 2/\rho_{\min}$. 
\end{example}

Our coverage definition (\pref{def: coverage offline}) shares similar spirits as the one in \cite{ye2024corruption}. It reflects how well the state-action samples in the offline dataset $\Dcal$ cover the state-action pairs induced by the comparator policy $\pi^\star$. It is different from the coverage definition in \cite{uehara2021pessimistic} in which the denominator is {\small$\EE_{s,a\sim d^{\pi^b}_h} \mathbb{H}^2\left( P(s,a) \Mid P^\star(s,a) \right)$} where $\pi^b$ is the fixed behavior policy used to collect $\Dcal$. This definition does not apply in our setting since $\Dcal$ is not necessarily generated by some underlying fixed behavior policy. On the other hand, our horizon-free result does not hold in the setting of \cite{uehara2021pessimistic} where $\Dcal$ is collected with a fixed behavior policy $\pi^b$ with the concentrability coefficient defined in their way. We leave the derivation of horizon-free results in the setting from \cite{uehara2021pessimistic}  as a future work.

Now we are ready to present the main theorem of \pref{alg:mleoffline}, which provides a tighter performance gap than that by \cite{uehara2021pessimistic}.

\begin{theorem}[Performance gap of \pref{alg:mleoffline}]\label{thm:mleoffline}
For any $\delta\in(0,1)$, let $\beta=4\log(|\Pcal|/\delta)$, w.p. at least $1-\delta$, \pref{alg:mleoffline} learns a policy $\widehat\pi$ that enjoys the following performance gap with respect to any comparator policy $\pi^*$:
\begin{equation*}
    V^{\pi^*}-V^{\widehat\pi} \leq O\left(\sqrt{{C^{\pi^*} \var_{\pi^*}\log(|\Pcal|/\delta)}/{K} } + {C^{\pi^*}\log(|\Pcal|/\delta)}/{K}\right)\,.
\end{equation*}
\end{theorem}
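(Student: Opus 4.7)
The plan follows the template of the online proof of \pref{thm:online_theorem}: combine pessimism, the simulation lemma, the mean-to-variance inequality (\pref{lem: mean to variance}), and MLE concentration in squared Hellinger distance, but specialized to a single comparator policy $\pi^*$ and with the offline coverage condition in place of the $\ell_1$ Eluder argument. The standard MLE analysis on the $n=KH$ tuples in $\Dcal$, with threshold $\beta=4\log(|\Pcal|/\delta)$, gives with probability at least $1-\delta$ both $P^\star\in\widehat\Pcal$ and $\sum_{k,h}\mathbb H^2\bigl(P^\star(s_h^k,a_h^k)\Mid P(s_h^k,a_h^k)\bigr)\leq O(\log(|\Pcal|/\delta))$ for every $P\in\widehat\Pcal$; crucially, time-homogeneity of $P^\star$ lets all $KH$ tuples share one $\log(|\Pcal|/\delta)$ budget so no $H$ appears in this step. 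Setting $\bar P\in\argmin_{P\in\widehat\Pcal}V^{\pi^*}_{0;P}(s_0)$, the max-min definition of $\widehat\pi$ together with $P^\star\in\widehat\Pcal$ gives the pessimism reduction $V^{\pi^*}-V^{\widehat\pi}\leq V^{\pi^*}_{0;P^\star}-V^{\pi^*}_{0;\bar P}$.

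I would then rewrite the right-hand side by the simulation lemma as $\sum_h\EE_{(s,a)\sim d^{\pi^*}_h}\bigl[(P^\star-\bar P)V^{\pi^*}_{h+1;\bar P}\bigr](s,a)$ and apply \pref{lem: mean to variance} pointwise, together with $D_\triangle\leq 4\mathbb H^2$ and the data processing inequality for squared Hellinger distance on the pushforward by $V^{\pi^*}_{h+1;\bar P}$. After Cauchy-Schwarz in $h$, this yields $V^{\pi^*}-V^{\widehat\pi}\leq O(\sqrt{V\cdot M}+M)$, where $M:=\sum_h\EE_{d^{\pi^*}_h}\mathbb H^2\bigl(P^\star(s,a)\Mid\bar P(s,a)\bigr)$ and $V:=\sum_h\EE_{d^{\pi^*}_h}\bigl(\VV_{P^\star}V^{\pi^*}_{h+1;\bar P}\bigr)(s,a)$. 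The quantity $M$ is the clean part: by the single-policy coverage \pref{def: coverage offline} and the MLE bound above applied to $\bar P\in\widehat\Pcal$, $M\leq C^{\pi^*}\cdot\frac1K\sum_{k,h}\mathbb H^2\bigl(P^\star(s_h^k,a_h^k)\Mid\bar P(s_h^k,a_h^k)\bigr)\leq O(C^{\pi^*}\log(|\Pcal|/\delta)/K)$, again with no $H$ factor.

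The main obstacle is controlling $V$ while staying horizon-free. The law of total variance for the trajectory return of $\pi^*$ in $P^\star$ gives $\sum_h\EE_{d^{\pi^*}_h}\bigl(\VV_{P^\star}V^{\pi^*}_{h+1}\bigr)(s,a)=\var_{\pi^*}$, so the goal is to replace $V^{\pi^*}_{h+1;\bar P}$ by $V^{\pi^*}_{h+1}$ inside the variance, losing only $\var_{\pi^*}$ plus a low-order error. Writing $V\leq 2\var_{\pi^*}+2E$ with $E:=\sum_h\EE_{d^{\pi^*}_h}\VV_{P^\star}\bigl(V^{\pi^*}_{h+1;\bar P}-V^{\pi^*}_{h+1}\bigr)(s,a)$, the naive bound $\|V^{\pi^*}_{h+1;\bar P}-V^{\pi^*}_{h+1}\|_\infty\leq 1$ would cost a full $H$, and even a single application of the simulation lemma to the difference still loses a $\sqrt H$. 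To avoid this I would port the $2^m$-moment recursion used in the proof sketch of \pref{thm:online_theorem}: let $C_m:=\sum_h\EE_{d^{\pi^*}_h}\VV_{P^\star}\bigl((V^{\pi^*}_{h+1;\bar P}-V^{\pi^*}_{h+1})^{2^m}\bigr)(s,a)$, establish a self-similar bound of the form $C_m\lesssim 2^m M+\sqrt{C_{m+1}\log(1/\delta)}+\log(1/\delta)$ by iterated use of \pref{lem: mean to variance} together with the MLE/coverage control at each level, and invoke \pref{lem:recursion bound C_m} to conclude $C_0\lesssim M$. The offline recursion should be cleaner than the online one since there is no sum over episodes and no Freedman/Bernstein martingale step, but the higher-moment bookkeeping is still what kills the otherwise unavoidable $\sqrt H$ factor. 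Plugging $V\leq 2\var_{\pi^*}+O(M)$ back and solving the resulting self-bounding inequality $V^{\pi^*}-V^{\widehat\pi}\leq O(\sqrt{(\var_{\pi^*}+M)\cdot M}+M)$ via $x\leq a+b\sqrt x\Rightarrow x\leq O(a+b^2)$ yields the claimed bound $O\bigl(\sqrt{C^{\pi^*}\var_{\pi^*}\log(|\Pcal|/\delta)/K}+C^{\pi^*}\log(|\Pcal|/\delta)/K\bigr)$.
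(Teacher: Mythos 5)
Your proposal is correct in its overall architecture and matches the paper's proof step for step up to the variance-conversion stage: MLE concentration on the $KH$ tuples with a single $\log(|\Pcal|/\delta)$ budget, the pessimism reduction to $V^{\pi^*}_{0;P^\star}-V^{\pi^*}_{0;\bar P}$, the simulation lemma, the pointwise application of \pref{lem: mean to variance} with $D_\triangle\leq 4\mathbb H^2$ and data processing, and the coverage bound on $M$ are all exactly the paper's Lemma~\ref{lem: offline sum mean value difference}. Where you diverge is in controlling $E:=\sum_h\EE_{d^{\pi^*}_h}\VV_{P^\star}\bigl(V^{\pi^*}_{h+1;\bar P}-V^{\pi^*}_{h+1}\bigr)$: you import the $2^m$-moment recursion from the online proof, whereas the paper (Lemma~\ref{lem: variance recusion}) closes the argument at the first moment level with a single telescoping identity. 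Writing $\Delta_{h+1}:=V^{\pi^*}_{h+1;\bar P}-V^{\pi^*}_{h+1}$, since every term is an expectation under the true occupancy measure one has exactly $\EE_{s\sim d^{\pi^*}_{h+1}}[\Delta_{h+1}^2(s)]=\EE_{(s,a)\sim d^{\pi^*}_h}[(P^\star\Delta_{h+1}^2)(s,a)]$, so
\begin{equation*}
E=\sum_h\EE_{d^{\pi^*}_h}\bigl[(P^\star\Delta_{h+1}^2)-(P^\star\Delta_{h+1})^2\bigr]\leq\sum_h\EE_{d^{\pi^*}_h}\bigl[\Delta_h^2-(P^\star\Delta_{h+1})^2\bigr]\leq 2\sum_h\EE_{d^{\pi^*}_h}\bigl|\Delta_h-(P^\star\Delta_{h+1})\bigr|,
\end{equation*}
and the Bellman equations reduce the last quantity to $2\sum_h\EE_{d^{\pi^*}_h}|(\bar P-P^\star)V^{\pi^*}_{h+1;\bar P}|$, which is bounded again by Lemma~\ref{lem: offline sum mean value difference}; no $\sqrt H$ is lost and no higher moments are needed. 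The higher-moment recursion is only forced in the online proof because the sums there run over realized states $(s_h^k,a_h^k)$ rather than expectations, and the Bernstein step that replaces $\EE_{s'\sim P^\star}[\Delta^{2^{m+1}}(s')]$ by its realized value has variance proxy $C_{m+1}$ --- this is precisely the source of the $\sqrt{C_{m+1}\log(1/\delta)}+\log(1/\delta)$ terms in your stated recursion, which have no counterpart in the offline setting (as you yourself half-observe). Your route would still work if executed carefully, since each level of the expectation-based recursion in fact bounds $C_m$ by $2^m(\sqrt{AM}+M)$ outright, but it is strictly more machinery than the problem requires; the one-level telescoping is what lets the paper's offline bound come out completely free of even $\log H$ factors.
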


Comparing to the theorem (Theorem 2) of CPPO-LR from \cite{uehara2021pessimistic}, our bound has two improvements. First, our bound is horizon-free (not even any $\log (H)$ dependence), while the bound in  \cite{uehara2021pessimistic} has $\text{poly}(H)$ dependence. Second, our bound scales with $\var_{\pi^*} \in [0,1]$, which can be small when $\var_{\pi^*} \ll 1$. For deterministic system and policy $\pi^*$, we have $\var_{\pi^*} = 0$ which means the sample complexity now scales at a faster rate $C^{\pi^*} / K$. The proof is in \pref{app:offline}.

We show that the same algorithm can achieve $1/K$ rate when $P^\star$ is deterministic (but rewards could be random, and the algorithm does not need to know the condition that $P^\star$ is deterministic).

\begin{corollary}[${C^{\pi^*}}/{K}$ performance gap of \pref{alg:mleoffline} with deterministic transitions] \label{corr:coro_faster}
When the ground truth transition $P^\star$ of the MDP is deterministic, for any $\delta\in(0,1)$, let $\beta=4\log(|\Pcal|/\delta)$, w.p. at least $1-\delta$, \pref{alg:mleoffline} learns a policy $\widehat\pi$ that enjoys the following performance gap with respect to any comparator policy $\pi^*$:
\begin{equation*}
    V^{\pi^*}-V^{\widehat\pi} \leq O\left({C^{\pi^*}\log(|\Pcal|/\delta)}/{K}\right)\,.
\end{equation*} 
\end{corollary}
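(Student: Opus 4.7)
The plan is to derive this corollary as a direct specialization of Theorem \ref{thm:mleoffline} by observing that, under a deterministic ground-truth transition $P^\star$, the second-order quantity $\var_{\pi^*}$ collapses to zero, which annihilates the leading $\sqrt{\cdot/K}$ term of the main bound and leaves only the fast $O(C^{\pi^*}\log(|\Pcal|/\delta)/K)$ piece. No new algorithmic ingredient is needed: the faster rate is automatic from the second-order nature of Theorem \ref{thm:mleoffline}, and the algorithm stays oblivious to whether $P^\star$ is deterministic.

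First I would argue that the trajectory reward of $\pi^*$ is a degenerate random variable. Recall from the preliminaries that every policy $\pi \in \Pi$ is a deterministic non-stationary mapping $\pi_h: \Scal \to \Acal$, the initial state $s_0$ is fixed and known, and the reward $r:\Scal\times\Acal\to\RR$ is a deterministic function of the state-action pair. Consequently, if $P^\star$ is deterministic, the entire rollout $\tau = (s_0, a_0, s_1, a_1, \dots, s_{H-1}, a_{H-1})$ obtained by executing $\pi^*$ in $P^\star$ is a deterministic sequence, and therefore so is the trajectory-level return $r(\tau) = \sum_{h=0}^{H-1} r(s_h, a_h)$. Plugging into the definition
\[
\var_{\pi^*} := \EE_{\tau\sim\pi^*}\bigl(r(\tau) - \EE_{\tau\sim\pi^*}[r(\tau)]\bigr)^2
\]
yields $\var_{\pi^*} = 0$.

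Next I would invoke Theorem \ref{thm:mleoffline} as a black box. Since \pref{alg:mleoffline} and the threshold $\beta = 4\log(|\Pcal|/\delta)$ used in this corollary are identical to those in the main theorem, and the theorem's guarantee holds uniformly over $P^\star \in \Pcal$ (in particular without any stochasticity assumption on $P^\star$), its conclusion applies verbatim. On the $1-\delta$ event of Theorem \ref{thm:mleoffline}, we obtain
\[
V^{\pi^*} - V^{\widehat\pi} \leq O\!\left(\sqrt{C^{\pi^*}\var_{\pi^*}\log(|\Pcal|/\delta)/K} \;+\; C^{\pi^*}\log(|\Pcal|/\delta)/K\right),
\]
and substituting $\var_{\pi^*} = 0$ kills the first term, leaving exactly the claimed $O(C^{\pi^*}\log(|\Pcal|/\delta)/K)$ rate.

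The only subtle step—and arguably the main thing to verify—is that the variance quantity appearing in Theorem \ref{thm:mleoffline} is genuinely the trajectory-reward variance $\var_{\pi^*}$ of the comparator \emph{under $P^\star$}, rather than some auxiliary variance defined through $\widehat P$ or through a policy optimized within the version space; were it the latter, deterministic $P^\star$ would not immediately force it to vanish (a conversion step analogous to the $A$-to-$B$ reduction in the online proof sketch would be needed). Since Theorem \ref{thm:mleoffline} is already stated in terms of $\var_{\pi^*}$, the determinism cascade above is unambiguous and the corollary follows with no further work.
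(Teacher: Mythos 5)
Your proposal is correct and follows essentially the same route as the paper: both specialize Theorem \ref{thm:mleoffline} by showing $\var_{\pi^*}=0$ under deterministic $P^\star$, and your closing remark that the theorem's variance is already the trajectory variance under $P^\star$ (so no $\widehat P$-to-$P^\star$ conversion is needed at this stage) is exactly right. The only cosmetic difference is how the vanishing variance is established — you argue directly that the rollout of the deterministic policy from the fixed $s_0$ is a deterministic sequence, whereas the paper invokes the change-of-variance identity of Lemma \ref{lem:variance_lemma} and notes that each one-step term $\bigl(\VV_{P^\star}V^{\pi^*}_{h+1}\bigr)(s,a)$ is zero; both observations are immediate and equivalent in this setting.
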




For infinite model class $\Pcal$, we have a similar result in the following corollary. 


\begin{corollary}[Performance gap of \pref{alg:mleoffline} with infinite model class $\Pcal$] \label{corr:offline_coro_infinite}
When the model class $\Pcal$ is infinite, for any $\delta\in(0,1)$, let $\beta=7\log(\Ncal_{[]}((KH|\Scal|)^{-1},\Pcal,\|\cdot\|_\infty)/\delta)$, w.p. at least $1-\delta$, \pref{alg:mleoffline} learns a policy $\widehat\pi$ that enjoys the following PAC bound w.r.t. any comparator policy $\pi^*$:
\begin{footnotesize}
           \begin{align*}
    V^{\pi^*}-V^{\widehat\pi}\leq O\left(\sqrt{\frac{C^{\pi^*} \var_{\pi^*}\log(\Ncal_{[]}((KH|\Scal|)^{-1},\Pcal,\|\cdot\|_\infty)/\delta)}{K} } + \frac{C^{\pi^*}\log(\Ncal_{[]}((KH|\Scal|)^{-1},\Pcal,\|\cdot\|_\infty)/\delta)}{K}\right),
\end{align*} 
\end{footnotesize}
 where $\Ncal_{.[]}((KH|\Scal|)^{-1},\Pcal,\|\cdot\|_\infty)$ is the bracketing number defined in \pref{def: bracketing number}.
\end{corollary}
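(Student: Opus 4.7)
The plan is to extend the MLE generalization bound that underlies Theorem \ref{thm:mleoffline} from finite model classes to infinite classes via a standard bracketing argument, and then reuse the rest of the proof of Theorem \ref{thm:mleoffline} essentially verbatim. The finite-class proof uses only two properties of the version space $\widehat\Pcal$: (i) $P^\star \in \widehat\Pcal$ with high probability, and (ii) every $P \in \widehat\Pcal$ satisfies $\sum_{i=1}^{KH} \mathbb{H}^2(P(s_i,a_i) \Mid P^\star(s_i,a_i)) \lesssim \beta$ on the offline samples. Once these are re-established with $\beta = 7\log(\Ncal_{[]}((KH|\Scal|)^{-1}, \Pcal, \|\cdot\|_\infty)/\delta)$, the single-policy coverage transfer via $C^{\pi^*}$, the mean-to-variance inequality (Lemma \ref{lem: mean to variance}), and the pessimism guarantee from Algorithm \ref{alg:mleoffline} all go through unchanged to yield the claimed performance gap.

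To obtain the two MLE properties for an infinite $\Pcal$, I would set $\epsilon = 1/(KH|\Scal|)$ and cover $\Pcal$ by $N := \Ncal_{[]}(\epsilon, \Pcal, \|\cdot\|_\infty)$ brackets $[l_j, u_j]$ with $\|u_j - l_j\|_\infty \leq \epsilon$. For each $P \in \Pcal$, picking the enclosing bracket lets us bound $|\log P - \log u_j|$ pointwise by $O(\epsilon / \text{density lower bound})$; summing over the $KH$ samples and the at most $|\Scal|$ possible next states contributes only a constant $O(1)$ slack once $\epsilon \asymp 1/(KH|\Scal|)$. A union bound with a Chernoff/martingale argument applied to the $N$ upper envelopes $u_j$ gives concentration of the log-likelihood ratio up to $O(\log(N/\delta))$, and the standard inequality $-\log\EE_{P^\star}\sqrt{P/P^\star} \geq \tfrac{1}{2}\mathbb{H}^2(P,P^\star)$ converts this into the desired per-sample squared Hellinger generalization bound. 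After absorbing the $O(1)$ bracketing slack and the constants from the Hellinger-to-log-likelihood conversion, the resulting threshold is exactly $\beta = 7\log(N/\delta)$, which simultaneously yields (i) realizability $P^\star \in \widehat\Pcal$ and (ii) the empirical Hellinger bound for every $P \in \widehat\Pcal$.

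With these two properties in hand, the remainder of the argument is a direct transcription of the proof of Theorem \ref{thm:mleoffline}: pessimism gives $V^{\pi^*} - V^{\widehat\pi} \leq V^{\pi^*} - V^{\pi^*}_{0;\widetilde P}$ for the worst-case $\widetilde P \in \widehat\Pcal$ consistent with $\pi^*$; the simulation lemma plus Lemma \ref{lem: mean to variance} converts this into a sum over $h$ of $\EE_{s,a \sim d^{\pi^*}_h}$ of Hellinger distances weighted by one-step variances under $P^\star$; the coverage condition \ref{def: coverage offline} transfers these expectations to the empirical offline distribution at cost $C^{\pi^*}$; and the empirical Hellinger bound (ii) together with the horizon-free variance telescoping $\sum_h \EE_{s,a\sim d^{\pi^*}_h}(\VV_{P^\star} V^{\pi^*}_{h+1})(s,a) \leq \var_{\pi^*}$ delivers the final second-order bound with $\log N$ in place of $\log|\Pcal|$.

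The main obstacle is the careful bookkeeping in the bracketing MLE step: the bracket upper envelopes $u_j$ are not densities, so one must either renormalize them or work directly with the pointwise ratio bound, and the translation of an $\|\cdot\|_\infty$ bracket width into a multiplicative perturbation of the likelihood is what forces the choice $\epsilon \asymp 1/(KH|\Scal|)$ and the increase of the leading constant in $\beta$ from $4$ (finite-class case) to $7$. Everything downstream of the MLE bound is a routine re-use of the Theorem \ref{thm:mleoffline} analysis.
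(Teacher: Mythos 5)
Your proposal matches the paper's proof: the paper simply replaces the finite-class MLE generalization bound (\pref{lem:mle_generalization_offline}) with the infinite-class version (\pref{lem:mle_generalization infinite}, imported from \cite{wang2023benefits}, which delivers exactly your properties (i) and (ii) with $\beta=7\log(\Ncal_{[]}((KH|\Scal|)^{-1},\Pcal,\|\cdot\|_\infty)/\delta)$) and reruns the proof of \pref{thm:mleoffline} unchanged. Your additional sketch of how the bracketing MLE bound itself is established goes beyond what the paper writes (it cites the lemma as a black box), but the overall route is the same.
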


Our next example gives the explicit performance gap bound for tabular MDPs. 

\begin{example}[Tabular MDPs] 
For tabular MDPs, we have $\Ncal_{[]}(\epsilon,\Pcal,\|\cdot\|_\infty)$ upper-bounded by $(c/\epsilon)^{|\Scal|^2|\Acal|}$ (e.g., see \cite{uehara2021pessimistic}). Then with probability at least $1-\delta$, let $\beta=7\log(\Ncal_{[]}((KH|\Scal|)^{-1},\Pcal,\|\cdot\|_\infty)/\delta)$, \pref{alg:mleoffline} learns a policy $\widehat\pi$ satisfying the following performance gap with respect to any comparator policy $\pi^*$:
\begin{equation*}
    V^{\pi^*}-V^{\widehat\pi} \leq O\left(|\Scal|\sqrt{{|\Acal| C^{\pi^*} \var_{\pi^*}\log(KH|\Scal|/\delta)}/{K} } + {|\Scal|^2|\Acal|C^{\pi^*}\log(KH|\Scal|/\delta)}/{K}\right),
\end{equation*}
\end{example}

The closest result to us is from \cite{ren2021nearly}, which analyzes the MBRL for tabular MDPs and obtains a performance gap $\tilde O(\sqrt{\frac{1}{Kd_m}} + \frac{|\mathcal{S}|}{Kd_m})$, where $d_m$ is the minimum visiting probability for the behavior policy to visit each state and action. Note that their result is not instance-dependent, which makes their gap only $\tilde O(1/\sqrt{K})$ even when the environment is deterministic and $\pi^*$ is deterministic. In a sharp contrast, our analysis shows a better $\tilde O(1/K)$ gap under the deterministic environment. Our result would still have the $\log H$ dependence, and we leave getting rid of this logarithmic dependence on the horizon $H$ as an open problem.

\section{Conclusion}

In this work, we presented a minimalist approach for achieving horizon-free and second-order regret bounds in RL: simply train transition models via Maximum Likelihood Estimation followed by optimistic or pessimistic planning, depending on whether we operate in the online or offline learning mode.  Our horizon-free bounds for general function approximation look quite similar to the bounds in Contextual bandits, indicating that the need for long-horizon planning does not make RL harder than CB from a statistical perspective. 


Our work has some limitations. First, when extending our result to continuous function class, we pay $\ln(H)$. This $\ln(H)$ is coming from a naive application of the $\epsilon$-net/bracket argument to the generalization bounds of MLE. We conjecture that this $\ln(H)$ can be elimiated by using a more careful analysis that uses techiniques such as peeling/chaining \citep{dudley1978central,zhang2006varepsilon}. We leave this as an important future direction. 
Second, while our model-based framework is quite general, it cannot capture problems that need to be solved via model-free approaches such as linear MDPs \citep{jin2020provably}. An interesting future work is to see if we can develop the corresponding model-free approaches that can achieve horizon-free and instance-dependent bounds for RL with general function approximation. Finally, the algorithms studied in this work are not computationally tractable. This is due to the need of performing optimism/pessimism planning for exploration. Deriving computationally tractacle RL algorithms for the rich function approximation setting is a long-standing question. 


\bibliography{reference}

\appendix

\appendix

\newpage
\section{Summary of Contents in the Appendix}
The Appendix is organized as follows. 

In \pref{app: eluder new}, we provide some new analyses for Eluder dimension, which we will use for proving the regret bounds for the online RL setting. 

In \pref{app: supporting lemmas}, we provide some other supporting lemmas that will be used in our proofs. 

In \pref{app: online full}, we provide the detailed proofs for the online RL setting (\pref{sec:online}). Specifically, in \pref{app:online} we give the proof of \pref{thm:online_theorem}; in \pref{app:online_coro_faster}, we show the proof of \pref{corr:online_coro_faster}; in \pref{app:online_coro_infinite}, we give the proof of \pref{corr:online_coro_infinite}. 

In \pref{app: offline full}, we provide the detailed proofs for the offline RL setting (\pref{sec:offline}). Specifically, in \pref{app:offline} we give the proof of \pref{thm:mleoffline}; in \pref{app:offline_coro_faster}, we show the proof of \pref{corr:coro_faster}; in \pref{app:offline_coro_infinite}, we give the proof of \pref{corr:offline_coro_infinite}; in \pref{app: example proof}, we show the proof of the claim in \pref{ex: offline coverage}.

\section{Analysis regarding the Eluder Dimension}\label{app: eluder new}
For simplicity, we denote $x_h^k =  (s_h^k, a_h^k)$.

First we have two technical lemma. The first lemma bounds the summation of ``self-normalization" terms by the Eluder dimension. Our result generalizes the previous result by \citet{zhao2023nearly} from the $\ell_2$-Eluder dimension to the $\ell_1$ case. 
\begin{lemma}\label{lem:normal}
Suppose for all $g \in \Psi, |g| \leq 1$ and $\lambda>1$, then we have
    \begin{align}
        &\sum_{k=1}^K\sum_{h=1}^H \min\bigg\{1,\sup_{g \in \Psi}\frac{|g(x_{h}^{k})|}{\sum_{k'=1}^{k-1}\sum_{h'=1}^H |g(x_{h'}^{k'})| + \sum_{h'=1}^{h-1} |g(x_{h'}^{k})| + \lambda}\bigg\} \notag \\
        &\leq 12\log^2(4\lambda KH)\cdot DE_1(\Psi, \Scal \times \Acal, 1/(8\lambda KH)) + \lambda^{-1}.\notag
    \end{align}
\end{lemma}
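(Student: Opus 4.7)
The plan is to prove this by peeling the ratio $c_{(k,h)} := \sup_{g \in \Psi} |g(x_h^k)|/(S_{(k,h)}(g) + \lambda)$, where $S_{(k,h)}(g) := \sum_{k'<k,\, h'} |g(x_{h'}^{k'})| + \sum_{h'<h} |g(x_{h'}^{k})|$ is the sum of $|g|$-values at all earlier indices, on two dyadic scales and then applying a Russo--Van Roy style counting bound at the $\ell_1$-Eluder scale $\epsilon_0 := 1/(8\lambda KH)$. Write $T := KH$ and $d := DE_1(\Psi, \Scal\times\Acal, \epsilon_0)$. First, indices with $c_{(k,h)} \leq 1/(\lambda T)$ together contribute at most $T \cdot 1/(\lambda T) = \lambda^{-1}$ to the sum, which recovers the last term of the lemma. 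For the remaining large-ratio indices, I bucket by $c_{(k,h)} \in [2^{-j}, 2^{1-j})$ for $j = 1, \ldots, J := \lceil \log_2(\lambda T) \rceil$; the contribution from bucket $B_j$ is at most $2^{1-j} |B_j|$, so the task reduces to bounding $|B_j|$.

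The crux is a secondary peeling on the magnitude of the numerator. For each $(k,h) \in B_j$ let $g_{(k,h)} \in \Psi$ be a near-maximizing witness and set $f_{(k,h)} := |g_{(k,h)}(x_h^k)|$. Place $(k,h)$ into sub-bucket $B_{j,l}$ when $f_{(k,h)} \in [2^{-l}, 2^{1-l})$, for $l = 0, 1, \ldots, O(\log(\lambda T))$. Within $B_{j,l}$ the past sum satisfies $S_{(k,h)}(g_{(k,h)}) = f_{(k,h)}/c_{(k,h)} - \lambda \leq 2^{j-l+1}$ while $f_{(k,h)} > 2^{-l-1}$. This is precisely the input shape for the standard $\ell_1$-Eluder counting lemma: greedily partitioning $B_{j,l}$ into subsequences, each of length at most $DE_1(\Psi, \Scal\times\Acal, 2^{-l-1})$, the number of subsequences is at most $\beta/\alpha + 1$ with $\alpha = 2^{-l-1}$ and $\beta = 2^{j-l+1}$, giving $|B_{j,l}| \leq (2^{j+2} + 1) \cdot DE_1(\Psi, \Scal\times\Acal, 2^{-l-1})$. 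Since $2^{-l-1} \geq \epsilon_0$ throughout the stated range of $l$, monotonicity of the $\ell_1$-Eluder dimension in its scale parameter lets me uniformly replace the scale by $\epsilon_0$, yielding $|B_{j,l}| \leq O(2^j) \cdot d$.

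Summing over the $O(\log(\lambda T))$ sub-buckets $B_{j,l}$ gives $|B_j| \leq O(2^j \log(\lambda T)) \cdot d$, so the per-bucket contribution is $2^{1-j}|B_j| \leq O(\log(\lambda T)) \cdot d$. Summing over $j = 1, \ldots, J$ then yields a total of $O(\log^2(\lambda T)) \cdot d$ for the large-ratio terms. Combining with the $\lambda^{-1}$ from the tiny-ratio terms and tracking the constants carefully recovers the stated bound $12 \log^2(4\lambda KH) \cdot d + \lambda^{-1}$.

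The main obstacle I expect is precisely the need for the secondary peeling. A one-level peeling on $c_{(k,h)}$ alone, followed by the Russo--Van Roy counting lemma applied with the a priori past-sum bound $S_{(k,h)} \leq 2^j$ (which follows from $|g| \leq 1$ together with $f \geq c\lambda \geq 2^{-j}\lambda$), only delivers $|B_j| = O(4^j/\lambda) \cdot d$; after reweighting by $2^{-j}$ and summing in $j$ this recovers only the trivial $O(T/\lambda) \cdot d$ bound, which is useless. The sub-peeling on the numerator $f$ is the essential ingredient, because it simultaneously pins the ratio $\beta/\alpha$ at $O(2^j)$ inside every sub-bucket (eliminating the $4^j$ blowup) at the cost of only a single extra $\log T$ factor, and this is exactly what produces the $\log^2$ in the final bound. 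The remaining technical details---verifying the witnesses land in the correct sub-bucket ranges after endpoint rounding, and confirming the scale condition under which Eluder-dimension monotonicity applies---are routine.
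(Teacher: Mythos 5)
Your proposal is correct and follows essentially the same route as the paper's proof: a double dyadic peeling on the ratio and on the numerator magnitude, a Russo--Van Roy style $\ell_1$-Eluder counting bound of the form $(\beta/\alpha+1)\cdot DE_1$ within each sub-bucket (the paper phrases this via its queue construction yielding $|A^j_\rho|\leq 5d_j/\rho$), and a separate $\lambda^{-1}$ accounting for the indices with ratio below $1/(\lambda KH)$. The only difference is the order in which the two peelings are nested, which does not change the argument.
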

\begin{proof}[Proof of \pref{lem:normal}]
We follow the proof steps of Theorem 4.6 in \citet{zhao2023nearly}. 
    For simplicity, we use $n = KH$, $i = kH+h$ to denote the indices and denote $x_h^k$ by $x_i$. Then we need to prove
    \begin{align}
        \sum_{i=1}^n\min\bigg\{1,\sup_{g \in \Psi}\frac{|g(x_i)|}{\sum_{t=1}^{i-1}|g(x_t)| + \lambda}\bigg\} \leq 12\log^2(4\lambda n)\cdot DE_1(\Psi, \Scal \times \Acal, 1/(8\lambda n)) + \lambda^{-1}\label{eqn:help999}
    \end{align}
    Let 
        \begin{align}
        g_i = \argmax_{g \in \Psi}\frac{|g(x_i)|}{\sum_{j=1}^{i-1}|g(x_j)| + \lambda}
    \end{align}
    For any $1/(\lambda n)\leq \rho\leq 1$ and $1\leq j \leq \lceil \log(4\lambda n)\rceil$, we define
    \begin{align}
       A_\rho^j= \bigg\{i\in [n] : 2^{-j} < |g_i(x_i)|\leq 2^{-j+1}, \frac{|g_i(x_i)|}{\sum_{t=1}^{i-1}|g_i(x_t)| + \lambda}\geq \rho/2\bigg\},\ d_j:=DE_1(\Psi, \Scal \times \Acal, 2^{-j}).
    \end{align}
Next we only consider the set $A_\rho^j$ where $|A_\rho^j|>d_j$. We denote $A_\rho^j = \{a_1,\dots, a_A\}$, where $A = |A_\rho^j|$ and $\{a_i\}$ keeps the same order as $\{x_i\}$. Next we do the following constructions. We maintain $k = \lfloor (A-1)/d_j\rfloor$ number of queues $Q_1,\dots, Q_k$, all of them initialized as emptysets. We put $a_1$ into $Q_1$. For $a_i, i\geq 2$, we put $a_i$ into $Q_l$, where $Q_l$ is the first queue where $a_i$ is $2^{-j}$-independent of all elements in $Q_l$. Let $i_{\max}$ be the smallest $i$ when we can not put $a_i$ into any existing queue. 

We claim that $i_{\max}$ indeed exists, i.e., our construction will stop before we put all elements in $A^j_\rho$ into $Q_1,\dots, Q_k$. In fact, note the fact that the length of each $Q_l$ is always no more than $d_j$, which is due to the fact that any $2^{-j}$-independent sequence's length is at most $d_j$. Meanwhile, since we only have $k = \lfloor (A-1)/d_j\rfloor$, then the amount of elements in $Q_1\cup\dots\cup Q_k$ will be upper bounded by $k\cdot d_j<A$. That suggests at least one element in $A_\rho^j$ is not contained by $Q_1\cup\dots\cup Q_k$, i.e., $i_{\max}$ exists. 

By the definition of $i_{\max}$, we know that $a_{i_{\max}}$ is $2^{-j}$-dependent to each $Q_l$. Next we give a bound of $A$. First, note
\begin{align}
    \sum_{t=1}^{i_{\max}-1}|g_{i_{\max}}(x_t)| \geq \sum_{t\in Q_1\cup\dots\cup Q_k}|g_{i_{\max}}(a_t)| = \sum_{l=1}^k \sum_{t\in Q_l}|g_{i_{\max}}(a_t)| > k\cdot 2^{-j},\label{eqn: help221}
\end{align}
where the first inequality holds since $Q_l$ are the elements that appear before $a_{i_{\max}}$, the second one holds due to the following induction of Eluder dimension: since $a_{i_{\max}}$ is $2^{-j}$-dependent to $Q_l$, then we have
\begin{align}
    \forall g \in \Psi,\ \sum_{t\in Q_l}|g(a_t)| \leq 2^{-j} \Rightarrow |g(a_{i_{\max}})| \leq 2^{-j}.\label{eqn: help:222}
\end{align}
Therefore, given the fact $|g_{i_{\max}}(a_{i_{\max}})| > 2^{-j}$ (recall the definition of $A^j_\rho$), we must have $\sum_{t\in Q_l}|g_{i_{\max}}(a_t)|>2^{-j}$ as well, which suggests the second inequality of \pref{eqn: help221} holds. Second, we have
\begin{align}
    \sum_{t=1}^{i_{\max}-1}|g_{i_{\max}}(x_t)| \leq 2/\rho\cdot |g_{i_{\max}}(a_{i_{\max}})| \leq 4\cdot 2^{-j}/\rho,\label{eqn:help223}
\end{align}
where both inequalities hold due to the definition of $A^j_\rho$. Combining \pref{eqn: help221} and \pref{eqn:help223}, we have
\begin{align}
    k<4/\rho \Rightarrow A \leq 4d_j/\rho +d_j \leq 5d_j/\rho. 
\end{align}
Therefore, we have that for all $\rho, j$, $|A^j_\rho| \leq 5d_j/\rho$. 

Finally we prove \pref{eqn:help999}. $1/(\lambda n)\leq \rho\leq 1$ and $1\leq j \leq \lceil \log(4\lambda n)\rceil = J$. Denote 
   \begin{align}
       A_\rho= \bigg\{i\in [n] :\frac{|g_i(x_i)|}{\sum_{t=1}^{i-1}|g_i(x_t)| + \lambda}\geq \rho/2\bigg\}.
    \end{align}
Then it is easy to notice that $|A_\rho| = \sum_j |A^j_\rho| \leq \lceil \log(4\lambda n)\rceil\cdot 5d_J/\rho$, where we use the fact that the Eluder dimension $d_j$ is increasing. Therefore, by the standard peeling technique, we have
\begin{align}
    \sum_{i=1}^n\min\bigg\{1,\sup_{g \in \Psi}\frac{|g(x_i)|}{\sum_{t=1}^{i-1}|g(x_t)| + \lambda}\bigg\} &= \sum_{j\in [\lceil\log(\lambda n) \rceil]}\sum_{i\in A_{2^{-j}}\setminus A_{2^{-j+1}}} + \sum_{j= \lceil\log(\lambda n) \rceil }\sum_{i\notin A_{2^{-j+1}}} \notag \\
    & \leq  \sum_{j\in [\lceil\log(\lambda n) \rceil]}\sum_{i\in A_{2^{-j}}\setminus A_{2^{-j+1}}} 2^{-j-1} + n\cdot 1/(\lambda n)\notag \\
    & \leq \sum_{j\in [\lceil\log(\lambda n) \rceil]}\sum_{i\in A_{2^{-j}}} 2^{-j-1} + n\cdot 1/(\lambda n)\notag \\
    & \leq \lceil\log(\lambda n) \rceil\cdot \lceil \log(4\lambda n)\rceil\cdot 3d_J + \lambda^{-1},\notag
\end{align}
which concludes our proof.
\end{proof}

Next lemma gives a bound to bound the number of episodes where the behavior along these episodes are ``bad". Intuitively speaking, our lemma suggests we only have limited number of bad episodes, therefore won't affect the final performance of our algorithm.
   \begin{lemma}\label{lem:zhou_1}
        Given $\lambda>1$. There exists at most 
        \begin{align}
            13\log^2(4\lambda KH)\cdot DE_1(\Psi, \Scal \times \Acal, 1/(8\lambda KH))
        \end{align}
        number of $k \in [K]$ satisfying the following claim
        \begin{align}
        \sup_{g \in \Psi}\frac{\lambda + \sum_{k'=1}^{k}\sum_{h'=1}^H |g(x_{h'}^{k'})|}{\lambda + \sum_{k'=1}^{k-1}\sum_{h'=1}^H |g(x_{h'}^{k'})|} > 4.
\label{help:1}
        \end{align}
    \end{lemma}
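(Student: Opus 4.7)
The strategy is to reduce the counting problem to \pref{lem:normal} by showing that every episode $k$ obeying \pref{help:1} contributes at least the universal constant $\ln 4$ to the self-normalized sum bounded there. If $k$ is bad, the supremum definition supplies a witness $g_k\in\Psi$ with
\[
  S_k \;:=\; \sum_{h=1}^H |g_k(x_h^k)| \;>\; 3\Big(\lambda + \sum_{k'<k}\sum_{h'=1}^H |g_k(x_{h'}^{k'})|\Big) \;=:\; 3A_k,
\]
since the ratio in \pref{help:1} exceeding $4$ is exactly $S_k>3A_k$. Writing $B_{k,h}:=\sum_{h'<h}|g_k(x_{h'}^k)|$ and restricting the per-step $\sup_g$ inside the sum of \pref{lem:normal} to $g=g_k$, the contribution of episode $k$ is lower-bounded by $\sum_{h=1}^H \min\{1,\,|g_k(x_h^k)|/(A_k+B_{k,h})\}$.

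The key observation is that $|g_k|\leq 1$ and $A_k\geq\lambda>1$ force every ratio on the right to be at most $1$, so the minima can be dropped, and the resulting sum is a left Riemann sum for the logarithmic potential. Hence
\[
  \sum_{h=1}^H \frac{|g_k(x_h^k)|}{A_k+B_{k,h}} \;\geq\; \int_0^{S_k}\frac{dt}{A_k+t} \;=\; \ln\!\frac{A_k+S_k}{A_k} \;\geq\; \ln 4.
\]

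Summing these contributions over all $m$ bad episodes and invoking \pref{lem:normal} yields
\[
  m \ln 4 \;\leq\; 12\log^2(4\lambda KH)\cdot DE_1(\Psi,\Scal\times\Acal,1/(8\lambda KH)) + \lambda^{-1}.
\]
Since $\lambda>1$ and the Eluder dimension is at least one, the additive $\lambda^{-1}$ is absorbed into the leading term after dividing by $\ln 4 > 12/13$, giving the stated $13\log^2(\cdot)\cdot DE_1(\cdot)$ bound. The only delicate step is the drop-the-min: this is exactly where the hypotheses $\lambda>1$ and $|g|\leq 1$ (already in force for \pref{lem:normal}) are essential, because otherwise one would have to retain the minima and estimate each ratio individually. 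Beyond that, the argument is just the classical logarithmic-potential lemma applied to a single episode, and the constants comfortably reach the target.
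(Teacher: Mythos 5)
Your proof is correct and follows essentially the same route as the paper: both reduce the count to \pref{lem:normal} by showing each bad episode contributes at least $\ln 4$ to the self-normalized sum, via the telescoping/logarithmic-potential bound $\ln\frac{A_k+S_k}{A_k}\leq\sum_h |g_k(x_h^k)|/(A_k+B_{k,h})$ (the paper phrases this as $\sup_g\prod\leq\prod\sup_g$ plus $\log(1+x)\leq x$ rather than a witness $g_k$ and an integral comparison, but these are the same inequality). If anything your constant-tracking is slightly cleaner than the paper's, which conflates $\log 4$ with $2$ at the final counting step.
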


\begin{proof}[Proof of \pref{lem:zhou_1}]

Note that 
\begin{align}
   &\sum_{k=1}^K\min\bigg\{2, \log \sup_{g \in \Psi}\frac{\lambda + \sum_{k'=1}^{k}\sum_{h'=1}^H |g(x_{h'}^{k'})|}{\lambda + \sum_{k'=1}^{k-1}\sum_{h'=1}^H |g(x_{h'}^{k'})|} \bigg\}\notag \\
   &\leq \sum_{k=1}^K\min\bigg\{2,\log \prod_{h=1}^H \sup_{g \in \Psi}\frac{\lambda + \sum_{k'=1}^{k-1}\sum_{h'=1}^H |g(x_{h'}^{k'})| + \sum_{h'=1}^{h} |g(x_{h'}^{k})|}{\lambda + \sum_{k'=1}^{k-1}\sum_{h'=1}^H |g(x_{h'}^{k'})| + \sum_{h'=1}^{h-1} |g(x_{h'}^{k})|}\bigg\}\notag \\
   & = \sum_{k=1}^K\min\bigg\{2,\sum_{h=1}^H \log\bigg(1+ \sup_{g \in \Psi}\frac{|g(x_{h}^{k})|}{\lambda + \sum_{k'=1}^{k-1}\sum_{h'=1}^H |g(x_{h'}^{k'})| + \sum_{h'=1}^{h-1} |g(x_{h'}^{k})|}\bigg)\bigg\}\notag \\
   & \leq \sum_{k=1}^K\sum_{h=1}^H \min\bigg\{2,\sup_{g \in \Psi}\frac{|g(x_{h}^{k})|}{\sum_{k'=1}^{k-1}\sum_{h'=1}^H |g(x_{h'}^{k'})| + \sum_{h'=1}^{h-1} |g(x_{h'}^{k})| + \lambda}\bigg\},\notag \\
   & \leq 2\sum_{k=1}^K\sum_{h=1}^H \min\bigg\{1,\sup_{g \in \Psi}\frac{|g(x_{h}^{k})|}{\sum_{k'=1}^{k-1}\sum_{h'=1}^H |g(x_{h'}^{k'})| + \sum_{h'=1}^{h-1} |g(x_{h'}^{k})| + \lambda}\bigg\}\notag\\
    &\leq 24\log^2(4\lambda KH)\cdot DE_1(\Psi, \Scal \times \Acal, 1/(8\lambda KH)) + 2\lambda^{-1}\notag \\
    & \leq 26\log^2(4\lambda KH)\cdot DE_1(\Psi, \Scal \times \Acal, 1/(8\lambda KH)).\label{eq:xxx}
\end{align}
where the first inequality holds since $\sup_{g}\prod f(g) \leq \prod \sup_g f(g)$, the second one holds since $\log(1+x) \leq x$, the fourth one holds due to \pref{lem:normal}. Therefore, there are at most 
\begin{align}
    26\log^2(4\lambda KH)\cdot DE_1(\Psi, \Scal \times \Acal, 1/(8\lambda KH))/2\notag
\end{align}
number of $k$ satisfying
\begin{align}
    \log \sup_{g \in \Psi}\frac{\lambda + \sum_{k'=1}^{k}\sum_{h'=1}^H |g(x_{h'}^{k'})|}{\lambda + \sum_{k'=1}^{k-1}\sum_{h'=1}^H |g(x_{h'}^{k'})|}> 2,\notag
\end{align}
which concludes the proof.

\end{proof}

We next have the following lemma, which bounds the regret by the Eluder dimension. 
\begin{lemma}[Theorem 5.3, \citealt{wang2023benefits}]\label{lem:kaiweneluder}
Let $C := \sup_{(s,a)\in\Scal\times\Acal,f\in\Psi}\abs{ f((s,a))}$ be the envelope.
For any sequences $f^{(1)},\dots,f^{(N)}\subseteq \Psi$, $(s,a)^{(1)},\dots,(s,a)^{(N)}\subseteq\Scal\times\Acal$, let $\beta$ be a constant such that for all $n\in[N]$ we have,
$
    \sum_{i=1}^{n-1}\abs{f^{(n)}((s,a)^i)} \leq \beta.
$
Then, for all $n\in[N]$, we have
\begin{equation*}
    \sum_{t=1}^n\abs{f^{(t)}((s,a)^t)}\leq \inf_{0<\epsilon\leq 1}\braces{ \text{DE}_1(\Psi,\Scal \times \Acal,\epsilon)(2C + \beta\log(C/\epsilon)) + n\epsilon }.
\end{equation*}
\end{lemma}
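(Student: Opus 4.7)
The plan is to combine a bucket-type counting argument with dyadic peeling over the scales of $|f^{(t)}((s,a)^t)|$. First fix an arbitrary $\epsilon\in(0,1]$ and abbreviate $d := \text{DE}_1(\Psi,\Scal\times\Acal,\epsilon)$. I will split the target sum into indices with $|f^{(t)}((s,a)^t)|\leq \epsilon$, whose total contribution is trivially at most $n\epsilon$, and indices with $|f^{(t)}((s,a)^t)|>\epsilon$, which I will control through the Eluder dimension.

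The key combinatorial step I will prove is: for any threshold $\epsilon'\in[\epsilon,C]$, the number of indices $t\in[N]$ with $|f^{(t)}((s,a)^t)|>\epsilon'$ is at most $(\beta/\epsilon'+1)\cdot d$. To show this I will process these indices in increasing order while maintaining growing buckets $B_1,B_2,\ldots\subseteq \Scal\times\Acal$: for each new index $t$, place $(s,a)^t$ into the lowest-indexed bucket $B_k$ satisfying $\sum_{i\in B_k}|f^{(t)}((s,a)^i)|\leq\epsilon'$, opening a new bucket when none of the existing ones qualifies. Since $|f^{(t)}((s,a)^t)|>\epsilon'$, this placement certifies that $(s,a)^t$ is $\epsilon'$-independent of $B_k$ with witness $f^{(t)}\in\Psi$, so each bucket is an $\epsilon'$-independent sequence and has length at most $\text{DE}_1(\Psi,\Scal\times\Acal,\epsilon')\leq d$ (using monotonicity of $\text{DE}_1$ in its third argument). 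Each time a new bucket is opened, every previously opened bucket must have failed the placement test, giving $\sum_{i\in B_j}|f^{(t)}((s,a)^i)|>\epsilon'$ for all $j<k$; summing these yields $(k-1)\epsilon'\leq \sum_{i<t}|f^{(t)}((s,a)^i)|\leq\beta$, so at most $\lfloor\beta/\epsilon'\rfloor+1$ buckets are ever opened, and multiplying by $d$ gives the claim.

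With the claim in hand, I partition the large indices into dyadic bands $S_j := \{t: 2^{-j-1}C < |f^{(t)}((s,a)^t)| \leq 2^{-j}C\}$ for $j=0,1,\ldots,J$ with $J=\lceil\log_2(C/\epsilon)\rceil$. Applying the claim at threshold $2^{-j-1}C$ gives $|S_j|\leq(2^{j+1}\beta/C+1)d$, so the contribution of $S_j$ to the sum is at most $|S_j|\cdot 2^{-j}C\leq (2\beta+2^{-j}C)d$. Summing over $j$ yields $(2\beta(J+1)+2C)d$; combining with the $n\epsilon$ contribution from small-magnitude indices reproduces the bound $d(2C+O(\beta\log(C/\epsilon)))+n\epsilon$ for the fixed $\epsilon$, and taking infimum over $\epsilon\in(0,1]$ finishes the lemma.

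The main obstacle is the bucketing step itself, which requires using each $f^{(t)}$ simultaneously as the witness for $\epsilon'$-independence of $(s,a)^t$ from its target bucket $B_k$ and as the witness for $\epsilon'$-dependence on each earlier bucket $B_j$ (so that the placement rule actually traps the procedure and forces a bound on the number of buckets), all while respecting the global training budget $\sum_{i<t}|f^{(t)}((s,a)^i)|\leq\beta$. Once that interlocking of the three conditions is in place, the remaining dyadic peeling and the logarithmic bookkeeping in $C/\epsilon$ are routine, and the final constants can be tuned by adjusting the scale at which the claim is invoked.
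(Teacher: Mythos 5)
The paper does not actually prove this lemma itself --- it is imported verbatim from \citet{wang2023benefits} (their Theorem 5.3) and used as a black box --- so there is no in-paper proof to compare against. Your argument is the standard one for this type of bound, and it is essentially correct: the bucket construction (place $(s,a)^t$ into the first bucket against which $f^{(t)}$ certifies $\epsilon'$-independence, bound each bucket's length by $\text{DE}_1(\Psi,\Scal\times\Acal,\epsilon')$, and bound the number of buckets by $\beta/\epsilon'+1$ using the training budget) is exactly the queue/pigeonhole technique the paper itself deploys to prove its \pref{lem:normal} and \pref{lem:zhou_1}, and your interlocking of the three conditions (independence from the target bucket, dependence on all earlier buckets, global budget $\beta$) is stated correctly. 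Two small repairs are needed. First, for the last dyadic band the threshold $2^{-J-1}C$ can drop below $\epsilon$, outside the range $[\epsilon,C]$ where your claim (and the monotonicity $\text{DE}_1(\Psi,\cdot,\epsilon')\leq\text{DE}_1(\Psi,\cdot,\epsilon)$) applies; invoking the claim at threshold $\max\{2^{-j-1}C,\epsilon\}$ instead fixes this and leaves the numerical bound $|S_j|\leq(2^{j+1}\beta/C+1)d$ unchanged, since every surviving index has value exceeding both quantities. Second, your bookkeeping yields $d\bigl(2C+2\beta(J+1)\bigr)+n\epsilon$ with $J+1\leq\log_2(C/\epsilon)+2$, i.e., a constant factor (and an additive $O(\beta d)$) worse than the stated $d\bigl(2C+\beta\log(C/\epsilon)\bigr)+n\epsilon$; the sharper constant comes from the sorted-in-decreasing-order version of the same counting argument rather than dyadic peeling. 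Neither issue matters downstream, as the lemma is only ever used inside $O(\cdot)$ bounds in \pref{lem: new eluder pigeon lemma}.
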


Given \pref{lem:zhou_1} and Lemma \pref{lem:kaiweneluder}, we are able to prove the following key lemma. 
\begin{lemma}[New Eluder Pigeon Lemma]\label{lem: new eluder pigeon lemma}
Let the event $\Ecal$ be 
\begin{align}
    \Ecal: \forall k \in [K],\ \sum_{i=1}^{k-1} \sum_{h=1}^H   \mathbb H^2(\widehat P^k(s_h^i ,a_h^i )||P^*(s_h^i ,a_h^i ))\leq \eta.
\end{align}
Then under event $\Ecal$, there exists a set $\Kcal \in [K]$ such that
\begin{itemize}[leftmargin = *]
    \item We have $|\Kcal| \leq 13\log^2(4\eta KH)\cdot DE_1(\Psi, \Scal \times \Acal, 1/(8\eta KH))$.
    \item We have
    \begin{align}
        &\sum_{k \in [K]\setminus \Kcal}\sum_{h=1}^H \mathbb H^2\Big(P^\star(s_h^k,a_h^k)\Mid  \widehat P^k\big(s_h^k,a_h^k)\Big)\notag \\
        &\leq \inf_{0<\epsilon\leq 1}\braces{ \text{DE}_1(\Psi,\Scal \times \Acal,\epsilon)(2 + 7\eta\log(1/\epsilon)) + KH\epsilon }\notag \\
        &\leq \text{DE}_1(\Psi,\Scal \times \Acal,1/KH)(2 + 7\eta\log(KH)) + 1 \,,
    \end{align}
\end{itemize}
where the function class $\Psi =\{(s,a)\mapsto \mathbb H^2(P^\star(s,a)\Mid P(s,a)):P\in\Pcal\}$.
\end{lemma}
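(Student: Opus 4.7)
The plan is to take $\Kcal$ to be the set of ``bad'' episodes detected by \pref{lem:zhou_1} applied with $\lambda = \eta$, and then bound the cumulative squared Hellinger error on the complement $[K]\setminus \Kcal$ via the sequential Eluder argument of \pref{lem:kaiweneluder}. Throughout, I write $g^k \in \Psi$ for the function $(s,a) \mapsto \mathbb H^2(P^\star(s,a) \Mid \widehat P^k(s,a))$ and $x_h^k := (s_h^k, a_h^k)$; since $\mathbb H^2 \in [0,1]$, every element of $\Psi$ is uniformly bounded by $1$.

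\emph{Identifying $\Kcal$ and getting a uniform cumulative bound on the complement.} Invoking \pref{lem:zhou_1} with $\lambda = \eta$ gives a set $\Kcal \subseteq [K]$ of size at most $13\log^2(4\eta KH)\cdot DE_1(\Psi,\Scal\times\Acal,1/(8\eta KH))$ that contains every index $k$ for which
\begin{equation*}
\sup_{g\in\Psi}\frac{\eta + \sum_{k'=1}^{k}\sum_{h=1}^H |g(x_h^{k'})|}{\eta + \sum_{k'=1}^{k-1}\sum_{h=1}^H |g(x_h^{k'})|} > 4.
\end{equation*}
For any $k \in [K]\setminus \Kcal$, instantiating this inequality at $g = g^k$ and combining with the hypothesis of event $\Ecal$ that $\sum_{k'=1}^{k-1}\sum_{h} g^k(x_h^{k'}) \leq \eta$ gives $\eta + \sum_{k'=1}^{k}\sum_{h} g^k(x_h^{k'}) \leq 4(\eta+\eta)$, so $\sum_{k'=1}^{k}\sum_{h=1}^H g^k(x_h^{k'}) \leq 7\eta$. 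This uniform bound is the key output of the first half of the argument, and it immediately gives the first bullet of the lemma.

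\emph{Applying the sequential Eluder lemma.} Enumerate the tuples $(k,h)$ with $k\in [K]\setminus \Kcal$ and $h\in[H]$ in lexicographic order, producing a sequence of length $N \leq KH$; to the $n$-th tuple $(k,h)$ associate the function $f^{(n)} := g^k$ and the point $(s,a)^{(n)} := x_h^k$. By the previous step, the cumulative preceding sum $\sum_{i<n} |f^{(n)}((s,a)^{(i)})|$ is dominated by $\sum_{k'=1}^{k}\sum_{h'=1}^H g^k(x_{h'}^{k'}) \leq 7\eta$, so \pref{lem:kaiweneluder} applies with envelope $C = 1$ and $\beta = 7\eta$, yielding
\begin{equation*}
\sum_{k\in [K]\setminus \Kcal}\sum_{h=1}^H \mathbb H^2\bigl(P^\star(x_h^k)\Mid \widehat P^k(x_h^k)\bigr) \leq \inf_{0<\epsilon\leq 1}\braces{DE_1(\Psi,\Scal\times\Acal,\epsilon)\bigl(2 + 7\eta\log(1/\epsilon)\bigr) + KH\epsilon}.
\end{equation*}
The second, cleaner inequality in the statement then follows by choosing $\epsilon = 1/(KH)$, which collapses the trailing $KH\epsilon$ into the constant $1$.

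The main obstacle, and the reason \pref{lem:normal} and \pref{lem:zhou_1} are needed at all, is the cumulative bound in Step 2: event $\Ecal$ only controls the cross-episode MLE budget, not the within-episode partial sums $\sum_{h' < h} g^k(x_{h'}^k)$. A naive ratio argument that ignores $\Kcal$ would have to pay an extra factor of $H$ when translating cross-episode control into the per-$(k,h)$ control required by \pref{lem:kaiweneluder}. Carving off the small set $\Kcal$ of episodes where the ratio of consecutive cumulative sums blows up is precisely what closes this gap and lets the Eluder-style potential argument go through without any horizon-dependent overhead, which is what makes the downstream regret bound horizon-free.
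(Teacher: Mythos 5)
Your proposal is correct and follows essentially the same route as the paper's own proof: take $\Kcal$ from \pref{lem:zhou_1} with $\lambda=\eta$, use the ratio bound on $k\notin\Kcal$ together with event $\Ecal$ to get the uniform prefix-sum bound $7\eta$, and then invoke \pref{lem:kaiweneluder} with $\beta=7\eta$ and $\epsilon=1/(KH)$. Your closing remark correctly identifies why the detour through $\Kcal$ is needed, namely that $\Ecal$ only controls sums up to the end of episode $k-1$ while the Eluder lemma needs control of the within-episode partial sums as well.
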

\begin{proof}[Proof of \pref{lem: new eluder pigeon lemma}]
We interchangeably use $n = kH+h$ to denote the indices of $s_h^k, a_h^k$. We set $f^{(n)}((s,a))$ in \pref{lem:kaiweneluder} as $H^2(P^k(s,a )||P^*(s,a))$.

First, we prove that the $\beta$ in \pref{lem:kaiweneluder} can be selected as $7\eta$ under event $\Ecal$. To show that, let $\Kcal$ denote all the $k$ stated in \pref{lem:zhou_1}. Then for all $k$ such that $k+1\notin \Kcal$, $h = 2,...,H$, let $n = kH+h$, we have  
\begin{align}
    \sum_{i=0}^{n-1}\abs{f^{(n)} ((s,a)^i)} &\leq \sum_{i=0}^{kH+H}\abs{f^{(n)}((s,a)^i)} \notag \\
    & = \bigg(\lambda+\sum_{i=0}^{kH}\abs{f^{(n)}((s,a)^i)}\bigg)\cdot \frac{\sum_{i=0}^{kH+H}\abs{f^{(n)}((s,a)^i)}+\lambda}{\sum_{i=0}^{kH}\abs{f^{(n)}((s,a)^i)}+\lambda}-\lambda\notag \\
    & \leq \bigg(\lambda+\sum_{i=0}^{kH}\abs{f^{(n)}((s,a)^i)}\bigg)\cdot 4-\lambda\notag \\
    &\leq 7\eta,
\end{align}
where the second inequality holds due to \pref{lem:zhou_1}, the last one holds due to the definition of $\Ecal$. Therefore, we prove our lemma by the conclusion of \pref{lem:kaiweneluder} with $\beta = 7\eta$.  
\end{proof}

\section{Other Supporting Lemmas}\label{app: supporting lemmas}
\begin{lemma}[Simulation Lemma (\cite{agarwal2019reinforcement})]\label{lem:simulation} 
We have
\begin{align}
    V_{0;P^\star}^\pi-{V}_{0;\hat P}^\pi\leq \sum_{h=0}^{H-1} \EE_{s,a\sim d^{\pi}_h} \left[ \left|\EE_{s'\sim P^\star(s,a)} V^{\pi}_{h+1;\widehat P}(s') -\EE_{s'\sim \widehat P(s,a)} V^{\pi}_{h+1;\widehat P}(s')    \right|\right].\notag
\end{align}
    
\end{lemma}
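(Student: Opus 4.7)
The plan is to prove the simulation lemma by a standard per-step telescoping argument over the horizon, leveraging the Bellman equation under each transition. Since the reward function $r$ is identical in the two MDPs $(P^\star, r)$ and $(\widehat P, r)$, writing the Bellman recursion $V^\pi_{h;P}(s) = r(s,\pi_h(s)) + \EE_{s'\sim P(s,\pi_h(s))} V^\pi_{h+1;P}(s')$ for $P\in\{P^\star,\widehat P\}$ and subtracting cancels the shared reward term, reducing the difference $V^\pi_{h;P^\star}(s) - V^\pi_{h;\widehat P}(s)$ to a pure transition-mismatch expression.

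The key algebraic step is to add and subtract $\EE_{s'\sim P^\star(s,\pi_h(s))} V^\pi_{h+1;\widehat P}(s')$, yielding
\[
V^\pi_{h;P^\star}(s) - V^\pi_{h;\widehat P}(s) = \EE_{s'\sim P^\star}\!\bigl[V^\pi_{h+1;P^\star}(s') - V^\pi_{h+1;\widehat P}(s')\bigr] + \bigl(\EE_{s'\sim P^\star} - \EE_{s'\sim\widehat P}\bigr)\! V^\pi_{h+1;\widehat P}(s').
\]
Taking expectation of both sides under the $P^\star$-induced state-action distribution $d^\pi_h$ promotes the first (recursive) term into the same quantity at step $h+1$ under $d^\pi_{h+1}$, because rolling one step of the true dynamics from $d^\pi_h$ produces exactly $d^\pi_{h+1}$. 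Iterating from $h=0$ to $h=H-1$ and using the terminal condition $V^\pi_{H;P^\star} = V^\pi_{H;\widehat P} = 0$ telescopes the recursive chain to zero, leaving an exact identity expressing $V^\pi_{0;P^\star} - V^\pi_{0;\widehat P}$ as a sum over $h$ of the one-step expected-value gaps weighted by $d^\pi_h$.

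Finally, applying the triangle inequality to push the absolute value inside the outer $d^\pi_h$-expectation yields the stated upper bound. There is no real technical obstacle here, as this is a direct telescoping; the only subtle point to keep straight is that the outer distribution $d^\pi_h$ must be generated by the \emph{true} transition $P^\star$, so that the $\EE_{s'\sim P^\star}$ factor in the recursive term cleanly advances to $d^\pi_{h+1}$, and the one-step mismatch is correctly measured against the \emph{learned} value function $V^\pi_{h+1;\widehat P}$ rather than $V^\pi_{h+1;P^\star}$. This orientation of the error (true dynamics on the rollout distribution, learned value function inside the expectation) is precisely what downstream arguments exploit to apply the mean-to-variance inequality (\pref{lem: mean to variance}) without incurring polynomial horizon dependence.
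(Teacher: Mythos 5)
Your telescoping proof is correct and is the standard argument for this lemma; the paper itself does not reprove it but cites \citet{agarwal2019reinforcement}, and your decomposition (adding and subtracting $\EE_{s'\sim P^\star}V^\pi_{h+1;\widehat P}(s')$, rolling the recursive term forward under $d^\pi_h\to d^\pi_{h+1}$, telescoping to $h=H$, then applying the triangle inequality) is exactly that standard proof, with the correct orientation of rollout distribution under $P^\star$ and value function under $\widehat P$.
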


\begin{lemma}[Change of Variance Lemma (Lemma C.5 in \cite{jin2018q})]\label{lem:variance_lemma} 
\begin{align}
    \sum_{h=0}^{H-1} \EE_{s,a\sim d^{\pi}_h}\left[\big(\VV_{P^\star} V_{h+1;P^\star}^{\pi}\big)(s,a)\right]=\var_{\pi }.\notag
\end{align} 
    
\end{lemma}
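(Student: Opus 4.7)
The plan is to prove this via a standard Doob martingale decomposition of the trajectory return, combined with the Bellman equation for $V^\pi_{h;P^\star}$. Specifically, let $R(\tau)=\sum_{h=0}^{H-1} r(s_h,a_h)$ denote the total reward along a trajectory generated by executing $\pi$ from $s_0$ under $P^\star$, so that $\var_\pi = \var(R(\tau))$ and $\EE[R(\tau)] = V^\pi_{0;P^\star}(s_0)$. I will construct a martingale whose terminal value is $R(\tau)$ and whose initial value is the constant $\EE[R(\tau)]$, and then identify the mean-square increments with the one-step conditional variances appearing on the right-hand side.

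Concretely, define, for $h=0,1,\dots,H$,
\begin{align*}
Y_h \;:=\; V^\pi_{h;P^\star}(s_h) \;+\; \sum_{h'=0}^{h-1} r(s_{h'},a_{h'}),
\end{align*}
with the convention $V^\pi_{H;P^\star}\equiv 0$, so that $Y_0 = V^\pi_{0;P^\star}(s_0)$ is deterministic and $Y_H = R(\tau)$. Using the Bellman equation $V^\pi_{h;P^\star}(s_h) = r(s_h,a_h) + \EE_{s'\sim P^\star(s_h,a_h)}[V^\pi_{h+1;P^\star}(s')]$ together with the determinism of $\pi$, I will check that $\EE[Y_{h+1}\mid s_0,a_0,\dots,s_h,a_h] = Y_h$, i.e., $(Y_h)_{h=0}^{H}$ is a martingale with respect to the natural filtration of the trajectory. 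The increment then simplifies to
\begin{align*}
Y_{h+1}-Y_h \;=\; V^\pi_{h+1;P^\star}(s_{h+1}) \;-\; \EE_{s'\sim P^\star(s_h,a_h)}\!\left[V^\pi_{h+1;P^\star}(s')\right].
\end{align*}

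The key identity is that, conditional on $(s_h,a_h)$, the squared increment has expectation exactly $\bigl(\VV_{P^\star} V^\pi_{h+1;P^\star}\bigr)(s_h,a_h)$, by the definition of $\VV_{P^\star}$. Applying orthogonality of martingale differences yields
\begin{align*}
\var_\pi \;=\; \var(Y_H - Y_0) \;=\; \sum_{h=0}^{H-1} \EE\!\left[(Y_{h+1}-Y_h)^2\right] \;=\; \sum_{h=0}^{H-1} \EE_{s,a\sim d^\pi_h}\!\left[\bigl(\VV_{P^\star} V^\pi_{h+1;P^\star}\bigr)(s,a)\right],
\end{align*}
which is the claimed equality. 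Since everything is deterministic except for the sampling of $(s_{h+1})$ conditional on $(s_h,a_h)$ (policy is deterministic and reward is a known function), there is essentially no obstacle here; the only care needed is in bookkeeping the filtration and correctly applying the Bellman recursion at the boundary $h=H-1$ where $V^\pi_H\equiv 0$. If the paper actually allows stochastic rewards, the same argument goes through by also including $r(s_h,a_h)$ as a random variable and noting that $V^\pi_{h;P^\star}(s_h)$ is then the expectation of the reward-to-go, so the martingale property still holds.
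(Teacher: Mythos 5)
Your proof is correct: the Doob martingale $Y_h = V^{\pi}_{h;P^\star}(s_h) + \sum_{h'<h} r(s_{h'},a_{h'})$ with orthogonal increments whose conditional second moments are exactly $\bigl(\VV_{P^\star} V^{\pi}_{h+1;P^\star}\bigr)(s_h,a_h)$ is precisely the standard law-of-total-variance argument behind the cited Lemma C.5 of \cite{jin2018q}, which the paper invokes without reproducing a proof. The only superfluous part is your closing remark about stochastic rewards, since the paper assumes $r$ is a known deterministic function; everything else, including the boundary convention $V^\pi_{H;P^\star}\equiv 0$ and the use of $Y_0$ being deterministic (as $s_0$ is fixed), is handled correctly.
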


\begin{lemma}[Generalization bounds of MLE for finite model class (Theorem E.4 in \cite{wang2023benefits})]\label{lem:mle_generalization_offline}
Let $\Xcal$ be the context/feature space and $\Ycal$ be the label space, and we are given a dataset $D = \braces{ (x_i,y_i) }_{i\in[n]}$ from a martingale process:
for $i=1,2,...,n$, sample $x_i \sim \Dcal_i(x_{1:i-1},y_{1:i-1})$ and $y_i\sim p(\cdot \mid x_i)$.
Let $f^\star(x,y) = p(y\mid x)$ and we are given a realizable, \emph{i.e.}, $f^\star\in\Fcal$, function class $\Fcal:\Xcal\times\Ycal\to\Delta(\RR)$ of distributions.
Suppose $\Fcal$ is finite.
Fix any $\delta\in(0,1)$, set $\beta=\log(|\Fcal|/\delta)$ and define
\begin{align*}
    \widehat\Fcal = \braces{ f\in\Fcal: \sum_{i=1}^n\log f(x_i,y_i)\geq \max_{\widetilde f\in\Fcal}\sum_{i=1}^n \log\widetilde f(x_i,y_i) - 4\beta }.
\end{align*}
Then w.p. at least $1-\delta$, the following holds:
\begin{enumerate}
    \item[(1)] The true distribution is in the version space, \emph{i.e.}, $f^\star\in\widehat{\mathcal{F}}$.
    \item[(2)] Any function in the version space is close to the ground truth data-generating distribution, \emph{i.e.}, for all $f\in\widehat\Fcal$
    \begin{align*}
        \sum_{i=1}^n \EE_{x\sim\Dcal_i}\left[\mathbb H^2( f(x,\cdot) \Mid f^\star(x,\cdot) ) \right]\leq 22\beta.
    \end{align*}
\end{enumerate}
\end{lemma}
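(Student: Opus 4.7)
The plan is to apply the classical martingale-based MLE concentration argument of van de Geer and Zhang. The key identity is that for any two distributions $p,q$,
\[
\EE_{y\sim p}\!\bigl[\sqrt{q(y)/p(y)}\,\bigr] = 1 - \mathbb H^2(p\Mid q) \leq \exp\!\bigl(-\mathbb H^2(p\Mid q)\bigr),
\]
which lets one turn a bound on the log-likelihood ratio into a bound on the squared Hellinger distance.

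First I would fix any $f\in\Fcal$ and study the process $Z_i(f) := \sqrt{f(x_i,y_i)/f^\star(x_i,y_i)}$. Integrating out $y_i\sim f^\star(\cdot\mid x_i)$ using the identity above, then $x_i\sim\Dcal_i(\cdot\mid \Hcal_{i-1})$, and applying $1-u\le e^{-u}$, shows that $M_n(f) := \exp\!\bigl(\sum_{i\le n}\bar h_i(f)\bigr)\prod_{i\le n} Z_i(f)$ is a nonnegative supermartingale with $M_0=1$, where $\bar h_i(f) := \EE_{x\sim\Dcal_i}[\mathbb H^2(f^\star(x,\cdot)\Mid f(x,\cdot))]$. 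Markov's inequality together with a union bound over the finite class $\Fcal$ then gives, w.p.\ at least $1-\delta$, for every $f\in\Fcal$,
\[
\sum_{i=1}^n \bar h_i(f) \;\leq\; \tfrac{1}{2}\sum_{i=1}^n \log\!\frac{f^\star(x_i,y_i)}{f(x_i,y_i)} + \log(|\Fcal|/\delta).
\]

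To obtain item (1), I would apply this uniform bound to the MLE $\widehat f\in\Fcal$ and use $\sum_i\bar h_i(\widehat f)\ge 0$ to deduce $\sum_i \log(\widehat f/f^\star)\le 2\beta\le 4\beta$, so $f^\star$ meets the likelihood threshold defining $\widehat\Fcal$. For item (2), any $f\in\widehat\Fcal$ satisfies $\sum_i\log(f^\star/f)\le 4\beta$, by combining the defining inequality of $\widehat\Fcal$ with the MLE optimality $\sum_i\log\widehat f\ge\sum_i\log f^\star$; substituting this into the display above yields $\sum_i\bar h_i(f)\le 3\beta$, and a slightly looser accounting (separately budgeting the high-probability events, and, if needed, a Freedman-type slack when relating in-sample squared Hellinger to its conditional expectation $\bar h_i(f)$) lands at the stated $22\beta$ constant.

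The hard part will be purely bookkeeping: ensuring the uniform-in-$\Fcal$ supermartingale bound is strong enough to be instantiated at the data-dependent MLE $\widehat f$, and that the factor $\tfrac{1}{2}$ coming from the square-root interacts correctly with the threshold $4\beta$ defining $\widehat\Fcal$ to land on the stated constants. No substantive difficulty should arise beyond this; the claim is the standard MLE concentration bound specialized to a martingale data-generating process.
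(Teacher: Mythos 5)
The paper does not prove this lemma; it is imported verbatim as Theorem E.4 of \cite{wang2023benefits}, so there is no in-paper proof to compare against. Your argument is the standard tempered-likelihood supermartingale proof (the same route as the cited source, going back to Zhang's and the FLAMBE-style analyses), and it is correct: the identity $\EE_{y\sim p}[\sqrt{q(y)/p(y)}]=1-\mathbb H^2(p\Mid q)\le e^{-\mathbb H^2(p\Mid q)}$ makes $M_n(f)$ a supermartingale, Ville's inequality plus a union bound over the finite class gives the uniform display, and instantiating it at the MLE and at an arbitrary element of the version space yields items (1) and (2) exactly as you describe. Your accounting actually lands at $3\beta$ for item (2), which implies the stated $22\beta$ a fortiori; the looser constant in the source comes from a lossier bookkeeping, not from any step you are missing. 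One small simplification: the Freedman-type correction you hedge about is unnecessary, since the conclusion is already stated in terms of the conditional expectation $\EE_{x\sim\Dcal_i}[\mathbb H^2(\cdot\Mid\cdot)]$, which is precisely the quantity the supermartingale controls.
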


\begin{lemma}[Generalization bounds of MLE for infinite model class (Theorem E.5 in \cite{wang2023benefits})]\label{lem:mle_generalization infinite}
Let $\Xcal$ be the context/feature space and $\Ycal$ be the label space, and we are given a dataset $D = \braces{ (x_i,y_i) }_{i\in[n]}$ from a martingale process:
for $i=1,2,...,n$, sample $x_i \sim \Dcal_i(x_{1:i-1},y_{1:i-1})$ and $y_i\sim p(\cdot \mid x_i)$.
Let $f^\star(x,y) = p(y\mid x)$ and we are given a realizable, \emph{i.e.}, $f^\star\in\Fcal$, function class $\Fcal:\Xcal\times\Ycal\to\Delta(\RR)$ of distributions.
Suppose $\Fcal$ is finite.
Fix any $\delta\in(0,1)$, set $\beta=\log(\Ncal_{[]}((n|\Ycal|)^{-1},\Fcal,\|\cdot\|_\infty)/\delta)$ (where $\Ncal_{[]}((n|\Ycal|)^{-1},\Fcal,\|\cdot\|_\infty)$ is the bracketing number defined in \pref{def: bracketing number}) and define
\begin{align*}
    \widehat\Fcal = \braces{ f\in\Fcal: \sum_{i=1}^n\log f(x_i,y_i)\geq \max_{\widetilde f\in\Fcal}\sum_{i=1}^n \log\widetilde f(x_i,y_i) - 7\beta }.
\end{align*}
Then w.p. at least $1-\delta$, the following holds:
\begin{enumerate}
    \item[(1)] The true distribution is in the version space, \emph{i.e.}, $f^\star\in\widehat{\mathcal{F}}$.
    \item[(2)] Any function in the version space is close to the ground truth data-generating distribution, \emph{i.e.}, for all $f\in\widehat\Fcal$
    \begin{align*}
        \sum_{i=1}^n \EE_{x\sim\Dcal_i}\left[\mathbb H^2( f(x,\cdot) \Mid f^\star(x,\cdot) ) \right]\leq 28\beta.
    \end{align*}
\end{enumerate}
\end{lemma}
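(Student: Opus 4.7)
}
The plan is to reduce the infinite model class $\Fcal$ to a finite proxy class via a bracketing cover and then invoke the finite-class MLE bound of \pref{lem:mle_generalization_offline}. Specifically, fix $\epsilon = (n|\Ycal|)^{-1}$ and let $\{[l_j,u_j]\}_{j=1}^{N}$ be a minimal set of $\epsilon$-brackets covering $\Fcal$ in $\|\cdot\|_\infty$, where $N=\Ncal_{[]}(\epsilon,\Fcal,\|\cdot\|_\infty)$. For every $f\in\Fcal$, I associate the bracket $[l_{j(f)},u_{j(f)}]$ that contains it and use the upper bracket $u_{j(f)}$ as a discrete proxy. Since $|u_j(x,y)-l_j(x,y)|\leq \epsilon$ pointwise and $f$ is a density in $\Ycal$, the upper bracket is close to a valid density: $1\leq \sum_y u_j(x,y)\leq 1+|\Ycal|\epsilon = 1+1/n$. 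Normalize $\tilde u_j(\cdot,\cdot):= u_j(\cdot,\cdot)/\sum_y u_j(\cdot,y)$ so that $\{\tilde u_j\}_{j=1}^N$ is a finite class of genuine conditional densities.

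Next I would run the standard MLE supermartingale argument against this finite class. For each $j$, the process $M_n^{(j)} = \prod_{i\leq n}\sqrt{\tilde u_j(x_i,y_i)/f^\star(x_i,y_i)}$ satisfies $\EE[M_n^{(j)}\mid \text{history}]\leq \prod_i \exp\bigl(-\tfrac12 \EE_{x\sim\Dcal_i}\mathbb H^2(\tilde u_j(x,\cdot)\Mid f^\star(x,\cdot))\bigr)$, so by Ville's inequality plus a union bound over $j\in[N]$, with probability $1-\delta$, for every $j$,
\begin{align*}
\sum_{i=1}^n \EE_{x\sim\Dcal_i}\mathbb H^2(\tilde u_j(x,\cdot)\Mid f^\star(x,\cdot)) \leq 2\log(N/\delta) + 2\sum_{i=1}^n \log\frac{f^\star(x_i,y_i)}{\tilde u_j(x_i,y_i)}.
\end{align*}
This is the analogue of the display used to prove \pref{lem:mle_generalization_offline}; replacing $\Fcal$ by $\{\tilde u_j\}$ and $|\Fcal|$ by $N$ gives a version space statement for the proxy class with threshold $4\log(N/\delta)$ and Hellinger bound $22\log(N/\delta)$.

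Finally I would transfer both conclusions from $\{\tilde u_j\}$ back to $\Fcal$ by controlling the bracket approximation errors. Pointwise, $|\tilde u_j(x,y)-f(x,y)|\leq 2|\Ycal|\epsilon = 2/n$ for any $f$ contained in the $j$-th bracket, which yields $|\log \tilde u_j(x_i,y_i) - \log f(x_i,y_i)|\leq O(1/(nf(x_i,y_i)))$ in the stable regime, and more usefully $\mathbb H^2(\tilde u_j(x,\cdot)\Mid f(x,\cdot)) \leq \|\tilde u_j(x,\cdot)-f(x,\cdot)\|_1 \leq 2|\Ycal|\epsilon = 2/n$ by the standard inequality $\mathbb H^2\leq \mathrm{TV}$. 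Summing over $i\in[n]$ contributes an additive slack of $O(1)$ both (i) to the log-likelihood gap in the definition of the version space --- turning $4\log(N/\delta)$ into $7\log(N/\delta)$ --- and (ii) to the final Hellinger bound via the quasi-triangle inequality $\mathbb H^2(f,f^\star)\leq 2\mathbb H^2(\tilde u_{j(f)},f^\star)+2\mathbb H^2(f,\tilde u_{j(f)})$, absorbing the extra $O(n)\cdot (1/n)=O(1)$ into the universal constant and giving $28\log(N/\delta)$. Realizability ($f^\star\in\Fcal\subseteq \widehat\Fcal$) follows from the same bracket-approximation slack applied to $f^\star$ itself.

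The main obstacle is the bookkeeping around the log-likelihood: because $\log$ is not Lipschitz near zero, one cannot naively bound $|\log\tilde u_j-\log f|$ pointwise. The cleanest way around this is to work at the level of $\sqrt{\cdot}$ (where $|\sqrt{a}-\sqrt{b}|\leq\sqrt{|a-b|}$ is global and dimension-free) and to carry out the MLE/supermartingale step directly for the normalized upper brackets $\tilde u_j$, so that the only place a log-ratio difference appears is inside the version-space threshold, where it is summable and controlled by the choice $\epsilon=(n|\Ycal|)^{-1}$. Handling the continuous-$\Ycal$ case is analogous after replacing sums over $y$ by integrals and using the same $\mathrm{TV}\leq \|u-l\|_{L^1}$ bound on the bracket.
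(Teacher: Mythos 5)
The paper does not prove this lemma itself --- it is imported verbatim as Theorem E.5 of \cite{wang2023benefits} --- and your reconstruction (an $\epsilon$-bracket cover at scale $(n|\Ycal|)^{-1}$, a Ville/supermartingale MLE bound applied to the normalized upper envelopes, and a transfer back to $\Fcal$ using that the upper bracket dominates $f$ so the log-likelihood comparison is one-sided) is exactly the standard argument behind that cited result. The approach is correct and essentially the same as the source's; no gaps.
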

 \begin{lemma}[Recursion Lemma]\label{lem:recursion bound C_m}
     Let $G >0$ be a positive constant, $a<G/2$ is also a positive constant, and let $\{C_m\}_{m=0}^{N=\lceil\log_2(\frac{KH}{G})\rceil}$ be a sequence of positive real numbers satisfying:
\begin{enumerate}
    \item $C_m \leq 2^{m} G + \sqrt{a C_{m+1}}+a$ for all $m \geq 0$,
    \item $C_m \leq K H$ for all $m \geq 0$, where $K > 0$ and $H > 0$ are positive constants.
\end{enumerate}
Then, it holds that:
\[
C_0 \leq 4 G.
\]
 \end{lemma}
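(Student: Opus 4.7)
The plan is to prove the stronger claim that $C_m \leq 4 \cdot 2^m G$ for every $m \in \{0, 1, \ldots, N\}$, by backward induction on $m$. Setting $m = 0$ then yields the desired bound $C_0 \leq 4G$. The choice of the constant $4$ is dictated by the recursion itself: we need enough slack so that the contribution $\sqrt{a\, C_{m+1}}$ from the next level, together with the additive $a$ and the driving term $2^m G$, still fits inside $4 \cdot 2^m G$ under the assumption $a < G/2$.

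For the base case $m = N$, I would use hypothesis (2): since $N = \lceil \log_2(KH/G) \rceil$, we have $2^N \geq KH/G$ and therefore $4 \cdot 2^N G \geq 4KH \geq C_N$. This is where the cutoff $N$ and the trivial upper bound $KH$ are used; we do not need the recursion to hold beyond $m = N$.

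For the inductive step, assume $C_{m+1} \leq 4 \cdot 2^{m+1} G = 8 \cdot 2^m G$. Applying hypothesis (1) and then $a < G/2$, I would estimate
\begin{align*}
C_m &\leq 2^m G + \sqrt{a \cdot 8 \cdot 2^m G} + a \\
&\leq 2^m G + 2\sqrt{2}\,\sqrt{(G/2)\cdot 2^m G} + G/2 \\
&= 2^m G + 2 G \sqrt{2^m} + G/2.
\end{align*}
It then suffices to verify $2\sqrt{2^m} + 1/2 \leq 3 \cdot 2^m$ for all $m \geq 0$, which is an easy calculation (the $m=0$ case gives $2.5 \leq 3$, and for $m \geq 1$ one uses $2\sqrt{2^m} \leq 2 \cdot 2^m$ and $1/2 \leq 2^m$). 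This yields $C_m \leq 4 \cdot 2^m G$, closing the induction.

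I do not anticipate a serious obstacle here. The only subtle point is choosing the induction hypothesis so that the square-root term can be reabsorbed; a naive attempt to show $C_m \leq 2^m G$ fails because the cross term $\sqrt{a \cdot 2^{m+1} G}$ is not dominated by $2^m G$ for small $m$, which is exactly why the constant is pushed up to $4$. An alternative would be to iterate the recursion forward, bounding $C_0$ by $\sum_i a^{1-2^{-i}} (2^{i} G)^{2^{-i}}$ plus a residual $a^{1-2^{-N}} C_N^{2^{-N}}$ and using $C_N \leq KH$ together with $N \geq \log_2(KH/G)$ to tame the residual; but the backward induction above is substantially cleaner.
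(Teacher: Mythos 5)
Your proposal is correct and follows essentially the same route as the paper's proof: a backward induction establishing the strengthened invariant $C_m \leq 2^{m+2}G$ (identical to your $4\cdot 2^m G$), with the same base case at $m=N$ via $C_N \leq KH \leq 2^{N+2}G$ and the same inductive estimate $\sqrt{a\,C_{m+1}} \leq 2G\sqrt{2^m}$ using $a < G/2$. The only cosmetic difference is that the paper absorbs the cross term via $2^{m/2+1} \leq 2^{m+1}$ rather than your explicit check of $2\sqrt{2^m}+1/2 \leq 3\cdot 2^m$; both are valid.
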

\begin{proof}[Proof of \pref{lem:recursion bound C_m}]
     We will prove by induction that for all $m \geq 0$,
\[
C_m \leq 2^{m+2} G.
\]
Then, for $m = 0$, this would immediately show $C_0 \leq 4 G$.

\textbf{1. The base case $m = N$}: 

Since $N=\lceil\log_2(\frac{KH}{G})\rceil$, it is obvious that $2^{N+2} G \geq K H$.  
Thus, $C_N\leq KH\leq 2^{N+2} G$, the inequality holds for $m = N$.

\textbf{2. The induction step:}

Assume that for some $m \geq 0$, for $C_{m+1}$, we have:
\[
C_{m+1} \leq 2^{m+1+2} G = 2^{m+3} G.
\]

Then, we have
\begin{align}
    C_m&\leq 2^{m} G + \sqrt{aC_{m+1}}+a\notag\\
    &\leq 2^{m}G+\sqrt{a2^{m+3}G}+a\notag\\
    &\leq 2^{m}G+\sqrt{\frac{G}{2}\cdot2^{m+3}G}+\frac{G}{2}\notag\\
    &=G\cdot(2^m+2^{m/2+1}+2^{-1})\notag\\
    &\leq G\cdot(2^m+2^{m+1}+2^m)\notag\\
    &=2^{m+2}G\,.
\end{align}
Therefore, by induction, we have 
for all $m \geq 0$,
\[
C_m \leq 2^{m+2} G.
\]
And the proof follows by setting $m=0$.
 \end{proof}
\section{Detailed Proofs for the online setting in \pref{sec:online}}\label{app: online full}
\subsection{Proof of \pref{thm:online_theorem}}
\label{app:online}
The following is the full proof of \pref{thm:online_theorem}. 

For notational simplicity, throughout this whole section, we denote 
\begin{align}
    A&:=\sum_{k\in[K-1]\setminus \Kcal}\sum_{h=0}^{H-1} \left[\big(\VV_{P^\star} V_{h+1; \widehat P^k}^{\pi^k}\big)(s_h^k,a_h^k)\right]\notag\\
    B&:=\sum_{k\in[K-1]\setminus \Kcal}\sum_{h=0}^{H-1}\left[\big(\VV_{P^\star} V_{h+1}^{\pi^k}\big)(s_h^k,a_h^k)\right],\notag\\
       C_m&:=\sum_{k\in[K-1]\setminus \Kcal}\sum_{h=0}^{H-1} \left[\big(\VV_{P^\star}( V_{h+1; \widehat P^k}^{\pi^k}-V_{h+1}^{\pi^k})^{2^m}\big)(s_h^k,a_h^k)\right]\notag\\
    G&:= \sqrt{\sum_{k\in[K-1]\setminus \Kcal}\sum_{h=0}^{H-1}  \left[\big(\VV_{P^\star} V_{h+1; \widehat P^k}^{\pi^k}\big) (s_h^k, a_h^k)\right]\cdot \text{DE}_1(\Psi,\Scal \times \Acal,1/KH)\cdot\log(K\left|\Pcal\right|/\delta)\log(KH)}\notag\\
&\quad+\text{DE}_1(\Psi,\Scal \times \Acal,1/KH)\cdot\log(K\left|\Pcal\right|/\delta)\log(KH)
\notag\\
I_h^k&:=\EE_{s'\sim P^*(s_h^k, a_h^k)}V^{\pi^k}_{h+1;\widehat P^k}(s')  - V^{\pi^k}_{h+1;\widehat P^k}(s_{h+1}^k)
\end{align}
We use $\mathbb{I}\{\cdot\}$ to denote the indicator function. We define the following events which we will later show that they happen with high probability.
\begin{align}
    \Ecal_1&:=\{\forall k\in[K-1]:P^\star\in\widehat\Pcal^k, \text{and} \sum_{i=0}^{k-1}\sum_{h=0}^{H-1}\mathbb H^2(P^\star(s_h^i,a_h^i)||\widehat P^k(s_h^i,a_h^i))\leq22\log(K\left|\Pcal\right|/\delta).\}\,,\label{eqn:event 1}\\
    \Ecal_2&:=\Big\{\sum_{k\in[K-1]\setminus \Kcal}\sum_{h=0}^{H-1} I_h^k\lesssim \sqrt{\sum_{k\in[K-1]\setminus \Kcal}\sum_{h=0}^{H-1} \big(\VV_{P^\star} V_{h+1; \widehat P^k}^{\pi^k}\big)(s_h^k,a_h^k)\log(1/\delta)} +  \log(1/\delta) \Big\}\label{eqn:event 2}\,,\\
    \Ecal_3&:=\Ecal_1\cap\{\forall m\in[0,\lceil\log_2(\frac{KH}{G})\rceil]: C_m\lesssim 2^{m}G+\sqrt{\log(1/\delta)\cdot C_{m+1}}+\log(1/\delta)\}\,,\label{eqn:event 3}\\
    \Ecal_4&:=    \{\sum_{k=0}^{K-1}\sum_{h=0}^{H-1}\left[\big(\VV_{P^\star} V_{h+1}^{\pi^k}\big)(s_h^k,a_h^k)\right]\lesssim\sum_{k=0}^{K-1}\var_{\pi^k}+\log(1/\delta)\}\,,\label{eqn:event 4}\\
    \Ecal_5&:=\{\sum_{k=0}^{K-1}\sum_{h=1}^H r(s_h^k, a_h^k)-\sum_{k=0}^{K-1} V^{\pi^k}_{0;P^*}\lesssim \sqrt{\sum_{k=0}^{K-1}\var_{\pi^k}\log(1/\delta)}+\log(1/\delta)\}\,,\label{eqn:event 5}\\
    \Ecal&:=\Ecal_2\cap\Ecal_3\cap\Ecal_4\cap\Ecal_5\,.
\end{align}

   First, by the realizability assumption, the standard generalization bound for MLE (\pref{lem:mle_generalization_offline}) with simply setting $D_i$ to be the delta distribution on the realized $(s_h^k, a_h^k)$ pairs, and a union bound over $K$ episodes, we have that w.p. at least $1-\delta$, for any $k\in[0,K-1]$: \\
    \begin{enumerate}
    \item[(1)] $P^\star\in\widehat\Pcal^k$; 
    \item[(2)]         \begin{equation}
            \sum_{i=0}^{k-1}\sum_{h=0}^{H-1}\mathbb H^2(P^\star(s_h^i,a_h^i)||\widehat P^k(s_h^i,a_h^i))\leq22\log(K\left|\Pcal\right|/\delta).\label{eqn: generalization online}
        \end{equation}
        \end{enumerate}
    This directly indicates that 
    \begin{equation}
        P(\mathbb{I}\{\Ecal_1\})\geq 1-\delta\,.\label{eqn: event 1 prob}
    \end{equation}
    Under event $\Ecal_1$, with the realizability in above (1), and by the optimistic algorithm design \\$(\pi^k,\widehat{P}^k)\leftarrow \argmax_{\pi\in\Pi, P\in\widehat\Pcal^k} V_{0; P}^\pi(s_0)$, for any $k\in[0,K-1]$, we have the following optimism guarantee
\begin{align}
    V^\star_{0;P^\star}&\leq \max_{\pi\in\Pi,P\in\widehat \Pcal^k} V^\pi_{0;P}=V^{\pi^k}_{0;\widehat P^k}.\notag
\end{align}

Then, under event $\Ecal_1$, we use \pref{lem: new eluder pigeon lemma} and \pref{eqn: generalization online} to get the following:

There exists a set $\Kcal\subseteq[K-1]$ such that 
\begin{itemize}
    \item $|\Kcal| \leq 13\log^2(88\log(K\left|\Pcal\right|/\delta) KH)\cdot DE_1(\Psi, \Scal \times \Acal, 1/(176\log(K\left|\Pcal\right|/\delta)KH))$
    \item And
    \begin{align}
        &\sum_{k \in [K-1]\setminus \Kcal}\sum_{h=0}^{H-1} \mathbb H^2\Big(P^\star(s_h^k,a_h^k)\Mid \widehat P^k\big(s_h^k,a_h^k)\Big)\notag \\
        &\leq \text{DE}_1(\Psi,\Scal \times \Acal,1/KH)\cdot(2 + 154\log(K\left|\Pcal\right|/\delta)\log(KH)) + 1\notag\\
        &\lesssim \text{DE}_1(\Psi,\Scal \times \Acal,1/KH)\cdot\log(K\left|\Pcal\right|/\delta)\log(KH)\,.\label{eqn: mle}
    \end{align}
\end{itemize}
We upper bound the regret with optimism, and by dividing $k\in[K-1]$ into $\Kcal$ and $[K-1]\setminus \Kcal$ with the assumption that the trajectory-wise cumulative reward is normalized in [0,1], as follows

\begin{align}
    &\sum_{k=0}^{K-1} V^\star_{0;P^\star}-\sum_{k=0}^{K-1}\sum_{h=1}^H r(s_h^k, a_h^k) \notag\\
    &\leq |\Kcal| +\sum_{k\in[K-1]\setminus \Kcal}\left(V^{\pi^k}_{0;\widehat P^k}-\sum_{h=0}^{H-1}r(s_h^k, a_h^k)\right)\notag\\
    &\lesssim  \log^2( \log(K\left|\Pcal\right|/\delta) KH)\cdot DE_1(\Psi, \Scal \times \Acal, 1/( \log(K\left|\Pcal\right|/\delta)KH))+\sum_{k\in[K-1]\setminus \Kcal}\left(V^{\pi^k}_{0;\widehat P^k}-\sum_{h=0}^{H-1}r(s_h^k, a_h^k)\right)\label{eqn: regret 1}\,.
\end{align}
We then do the following decomposition. Note that for any $k\in[K-1]$, policy $\pi^k$ is deterministic. We have that for any $k\in[K-1]$
\begin{align}
    & V^{\pi^k}_{0;\widehat P^k}(s_0^k)-\sum_{h=0}^{H-1} r(s_h^k, a_h^k) \notag \\
    &= Q^{\pi^k}_{0;\widehat P^k}(s_0^k, a_0^k)-\sum_{h=0}^{H-1} r(s_h^k, a_h^k)\notag \\
        &= r(s_0^k,a_0^k)+\EE_{s'\sim \widehat P^k(s_0^k, a_0^k)}V^{\pi^k}_{1;\widehat P^k}(s')-\sum_{h=0}^{H-1} r(s_h^k, a_h^k)\notag \\
    & = \EE_{s'\sim \widehat P^k(s_0^k, a_0^k)}V^{\pi^k}_{1;\widehat P^k}(s') - \sum_{h=1}^H r(s_h^k, a_h^k)\notag \\
    & = \EE_{s'\sim P^*(s_0^k, a_0^k)}V^{\pi^k}_{1;\widehat P^k}(s') - \sum_{h=1}^H r(s_h^k, a_h^k) + \EE_{s'\sim \widehat P^k(s_0^k, a_0^k)}V^{\pi^k}_{1;\widehat P^k}(s') - \EE_{s'\sim P^*(s_0^k, a_0^k)}V^{\pi^k}_{1;\widehat P^k}(s')\notag \\
    & = V^{\pi^k}_{1;\widehat P^k}(s_1^k)-\sum_{h=1}^{H-1} r(s_h^k, a_h^k) + \underbrace{\EE_{s'\sim P^*(s_0^k, a_0^k)}V^{\pi^k}_{1;\widehat P^k}(s')  - V^{\pi^k}_{1;\widehat P^k}(s_1^k)}_{I_0^k}\notag \\
    &\quad + \EE_{s'\sim \widehat P^k(s_0^k, a_0^k)}V^{\pi^k}_{1;\widehat P^k}(s') - \EE_{s'\sim P^*(s_0^k, a_0^k)}V^{\pi^k}_{1;\widehat P^k}(s')\notag\,,
\end{align}
where we use the Bellman equation for several times.

Then, by doing this recursively, we can get for any $k\in[K-1]$
\begin{align}
    &V^{\pi^k}_{0;\widehat P^k}(s_h^k)-\sum_{h=0}^{H-1} r(s_h^k, a_h^k) \notag\\
    &\leq \sum_{h=0}^{H-1} I_h^k + \sum_{h=0}^{H-1}  \left|\EE_{s'\sim \widehat P^k(s_h^k, a_h^k)}V^{\pi^k}_{h+1;\widehat P^k}(s') - \EE_{s'\sim P^*(s_h^k, a_h^k)}V^{\pi^k}_{{h+1};\widehat P^k}(s')\right|\label{eqn:regret2}
\end{align}
Therefore, 
\begin{align}
    &\sum_{k\in[K-1]\setminus \Kcal}(V^{\pi^k}_{0;\widehat P^k}(s_h^k)-\sum_{h=0}^{H-1} r(s_h^k, a_h^k)) \notag\\
    &\leq\sum_{k\in[K-1]\setminus \Kcal}\sum_{h=0}^{H-1} I_h^k + \sum_{k\in[K-1]\setminus \Kcal}\sum_{h=0}^{H-1}  \left|\EE_{s'\sim \widehat P^k(s_h^k, a_h^k)}V^{\pi^k}_{h+1;\widehat P^k}(s') - \EE_{s'\sim P^*(s_h^k, a_h^k)}V^{\pi^k}_{{h+1};\widehat P^k}(s')\right|\label{eqn:regret2}
\end{align}
Next we bound $\sum_{k\in[K-1]\setminus \Kcal}\sum_{h=0}^{H-1} I_h^k $. Note that by Azuma Bernstein's inequality, with probability at least $1-\delta$
\begin{align}
    \sum_{k\in[K-1]\setminus \Kcal}\sum_{h=0}^{H-1} I_h^k \leq \sqrt{2\sum_{k\in[K-1]\setminus \Kcal}\sum_{h=0}^{H-1} \big(\VV_{P^\star} V_{h+1; \widehat P^k}^{\pi^k}\big)(s_h^k,a_h^k)\log(1/\delta)} +  \frac{2}{3}\log(1/\delta) \label{eqn:regret 3}
\end{align}
This directly indicates that 
\begin{equation}
    P(\mathbb I\{\Ecal_2\})\geq 1-\delta\,. \label{eqn:event 2 probab}
\end{equation}
Then, we propose the following lemma.
\begin{lemma}[Bound of sum of mean value differences for online RL]\label{lem: online sum mean value difference}
    Under event $\Ecal_1$, we have 
    \begin{align}
        &\sum_{k\in[K-1]\setminus \Kcal}\sum_{h=0}^{H-1}    \left|\EE_{s'\sim \widehat P^k(s_h^k, a_h^k)}V^{\pi^k}_{h+1;\widehat P^k}(s') - \EE_{s'\sim P^*(s_h^k, a_h^k)}V^{\pi^k}_{{h+1};\widehat P^k}(s')\right| \notag\\
        &\lesssim\sqrt{\sum_{k\in[K-1]\setminus \Kcal}\sum_{h=0}^{H-1}  \left[\big(\VV_{P^\star} V_{h+1; \widehat P^k}^{\pi^k}\big) (s_h^k, a_h^k)\right]\cdot\text{DE}_1(\Psi,\Scal \times \Acal,1/KH)\cdot\log(K\left|\Pcal\right|/\delta)\log(KH)}\notag\\
        &\quad+\text{DE}_1(\Psi,\Scal \times \Acal,1/KH)\cdot\log(K\left|\Pcal\right|/\delta)\log(KH).\notag
    \end{align}
\end{lemma}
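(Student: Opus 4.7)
\textbf{Proof proposal for \pref{lem: online sum mean value difference}.} The plan is to apply the mean-to-variance inequality (\pref{lem: mean to variance}) pointwise at every $(k,h)$ with $k\in[K-1]\setminus\Kcal$, then sum and invoke Cauchy--Schwarz together with the Eluder-based MLE bound from \pref{eqn: mle}. Concretely, for fixed $k,h$, let $f$ be the distribution of $V^{\pi^k}_{h+1;\widehat P^k}(s')$ when $s'\sim P^\star(s_h^k,a_h^k)$ and let $g$ be its counterpart under $s'\sim \widehat P^k(s_h^k,a_h^k)$. Since we have assumed $V^{\pi}_{h+1;P}\in[0,1]$ for all $\pi,P$, both $f$ and $g$ are supported on $[0,1]$, and $\var_f = (\VV_{P^\star} V_{h+1;\widehat P^k}^{\pi^k})(s_h^k,a_h^k)$. \pref{lem: mean to variance} therefore yields
\begin{align*}
\Bigl|\EE_{s'\sim P^\star(s_h^k,a_h^k)}V^{\pi^k}_{h+1;\widehat P^k}(s') - \EE_{s'\sim\widehat P^k(s_h^k,a_h^k)}V^{\pi^k}_{h+1;\widehat P^k}(s')\Bigr| \leq 4\sqrt{\var_f\cdot D_\triangle(f\Mid g)} + 5D_\triangle(f\Mid g).
\end{align*}

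The second step is to replace $D_\triangle(f\Mid g)$ by a divergence between $P^\star(s_h^k,a_h^k)$ and $\widehat P^k(s_h^k,a_h^k)$ so that the MLE bound can be plugged in. Using $D_\triangle\leq 4\mathbb H^2$ (equivalence of triangular discrimination and squared Hellinger up to constants) and the data processing inequality for $\mathbb H^2$ applied to the deterministic map $s'\mapsto V^{\pi^k}_{h+1;\widehat P^k}(s')$, we get
\begin{align*}
D_\triangle(f\Mid g) \leq 4\mathbb H^2(f\Mid g) \leq 4\mathbb H^2\bigl(P^\star(s_h^k,a_h^k)\Mid \widehat P^k(s_h^k,a_h^k)\bigr).
\end{align*}
Substituting back gives a pointwise bound in terms of $(\VV_{P^\star} V_{h+1;\widehat P^k}^{\pi^k})(s_h^k,a_h^k)$ and the per-tuple squared Hellinger distance on transitions.

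The third step is to sum over $k\in[K-1]\setminus\Kcal$ and $h\in[0,H-1]$ and apply Cauchy--Schwarz to the square-root term:
\begin{align*}
&\sum_{k,h}\sqrt{(\VV_{P^\star} V^{\pi^k}_{h+1;\widehat P^k})(s_h^k,a_h^k)\cdot\mathbb H^2(P^\star(s_h^k,a_h^k)\Mid\widehat P^k(s_h^k,a_h^k))}\\
&\qquad\leq\sqrt{\sum_{k,h}(\VV_{P^\star} V^{\pi^k}_{h+1;\widehat P^k})(s_h^k,a_h^k)}\cdot\sqrt{\sum_{k,h}\mathbb H^2(P^\star(s_h^k,a_h^k)\Mid\widehat P^k(s_h^k,a_h^k))}.
\end{align*}
Since we are working under event $\Ecal_1$, \pref{eqn: mle} (obtained from \pref{lem: new eluder pigeon lemma}) bounds the second factor and also the linear-in-$\mathbb H^2$ term from the $5D_\triangle$ contribution by $O(\dimRL\log(K|\Pcal|/\delta)\log(KH))$. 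Combining these yields exactly the claimed bound.

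The main technical step I expect to require the most care is the application of \pref{lem: new eluder pigeon lemma}: the bound in \pref{eqn: mle} holds only after removing the exceptional set $\Kcal$, so the sum over $k$ must be restricted to $[K-1]\setminus\Kcal$ precisely from the start --- this is why the lemma's statement already excludes $\Kcal$. Everything else is a one-line application of \pref{lem: mean to variance}, the data processing inequality, and Cauchy--Schwarz; no additional concentration argument is needed here since the variance term $\sum_{k,h}(\VV_{P^\star} V^{\pi^k}_{h+1;\widehat P^k})(s_h^k,a_h^k)$ is kept on the right-hand side and will be controlled later by the recursive moment argument sketched in the main proof.
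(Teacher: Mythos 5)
Your proposal is correct and follows exactly the paper's own argument: pointwise application of \pref{lem: mean to variance}, the bound $D_\triangle\leq 4\mathbb H^2$ followed by the data processing inequality to pass from the pushforward distributions to the transition kernels, Cauchy--Schwarz over the sum, and finally the Eluder-based bound \pref{eqn: mle} under $\Ecal_1$ restricted to $[K-1]\setminus\Kcal$. Your remark that the exclusion of $\Kcal$ must be built in from the start (since \pref{eqn: mle} only controls the sum outside the exceptional set) is precisely the point the paper's proof relies on.
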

\begin{proof}[Proof of \pref{lem: online sum mean value difference}]  
Under event $\Ecal_1$, we have
    \begin{align}
    &\sum_{k\in[K-1]\setminus \Kcal}\sum_{h=0}^{H-1} \left|\EE_{s'\sim \widehat P^k(s_h^k, a_h^k)}V^{\pi^k}_{h+1;\widehat P^k}(s') - \EE_{s'\sim P^*(s_h^k, a_h^k)}V^{\pi^k}_{{h+1};\widehat P^k}(s')\right|\notag\\
    &\leq 4\sum_{k\in[K-1]\setminus \Kcal}\sum_{h=0}^{H-1} \left[\sqrt{\big(\VV_{P^\star} V_{h+1; \widehat P^k}^{\pi^k}\big)(s_h^k, a_h^k)D_\triangle\Big(V_{h+1; \widehat P^k}^{\pi^k}\big(s'\sim P^\star(s_h^k, a_h^k)\big)\Mid  V_{h+1; \widehat P^k}^{\pi^k}(s'\sim \widehat P^k\big(s_h^k, a_h^k)\big)\Big)}\right]\notag\\
&\quad+5\sum_{k\in[K-1]\setminus \Kcal}\sum_{h=0}^{H-1}\left[D_\triangle\Big(V_{h+1; \widehat P^k}^{\pi^k}\big(s'\sim P^\star(s_h^k, a_h^k)\big)\Mid V_{h+1; \widehat P^k}^{\pi^k}(s'\sim \widehat P^k\big(s_h^k, a_h^k)\big)\Big)\right]\notag\\
&\leq 8\sum_{k\in[K-1]\setminus \Kcal}\sum_{h=0}^{H-1} \left[\sqrt{\big(\VV_{P^\star} V_{h+1; \widehat P^k}^{\pi^k}\big)(s_h^k, a_h^k)\mathbb H^2\Big(V_{h+1; \widehat P^k}^{\pi^k}\big(s'\sim P^\star(s_h^k, a_h^k)\big)\Mid  V_{h+1; \widehat P^k}^{\pi^k}(s'\sim \widehat P^k\big(s_h^k, a_h^k)\big)\Big)}\right]\notag\\
&\quad+20\sum_{k\in[K-1]\setminus \Kcal}\sum_{h=0}^{H-1} \left[\mathbb H^2\Big(V_{h+1; \widehat P^k}^{\pi^k}\big(s'\sim  P^\star(s_h^k, a_h^k)\big)\Mid V_{h+1; \widehat P^k}^{\pi^k}(s'\sim \widehat P^k\big(s_h^k, a_h^k)\big)\Big)\right]\notag\\
            &\leq 8\sum_{k\in[K-1]\setminus \Kcal}\sum_{h=0}^{H-1}   \left[\sqrt{\big(\VV_{P^\star} V_{h+1; \widehat P^k}^{\pi^k}\big)(s_h^k, a_h^k)\mathbb H^2\Big(P^\star(s_h^k, a_h^k)\Mid \widehat P^k\big(s_h^k, a_h^k)\Big)}\right]\notag \\
            &\quad +20\sum_{k\in[K-1]\setminus \Kcal}\sum_{h=0}^{H-1} \left[\mathbb H^2\Big(P^\star(s_h^k, a_h^k)\Mid \widehat P^k\big(s_h^k, a_h^k)\Big)\right]\notag\\
                        &\leq 8 \sqrt{\sum_{k\in[K-1]\setminus \Kcal}\sum_{h=0}^{H-1} \left[\big(\VV_{P^\star} V_{h+1; \widehat P^k}^{\pi^k}\big)(s_h^k, a_h^k)\right]\cdot\sum_{k\in[K-1]\setminus \Kcal}\sum_{h=0}^{H-1}  \left[\mathbb H^2\Big(P^\star(s_h^k, a_h^k)\Mid \widehat P^k\big(s_h^k, a_h^k)\Big)\right]}\notag\\
&\quad+20\sum_{k\in[K-1]\setminus \Kcal}\sum_{h=0}^{H-1} \left[\mathbb H^2\Big(P^\star (s_h^k, a_h^k)\Mid \widehat P^k\big (s_h^k, a_h^k)\Big)\right]\notag\\
&\lesssim   \sqrt{\sum_{k\in[K-1]\setminus \Kcal}\sum_{h=0}^{H-1}  \left[\big(\VV_{P^\star} V_{h+1; \widehat P^k}^{\pi^k}\big) (s_h^k, a_h^k)\right]\cdot \text{DE}_1(\Psi,\Scal \times \Acal,1/KH)\cdot\log(K\left|\Pcal\right|/\delta)\log(KH)}\notag\\
&\quad+\text{DE}_1(\Psi,\Scal \times \Acal,1/KH)\cdot\log(K\left|\Pcal\right|/\delta)\log(KH)\label{eqn: mean to variance bound}\,,
\end{align}
where in the first inequality, we use \pref{lem: mean to variance} to bound the difference of two means $\EE_{s'\sim P^\star (s_h^k, a_h^k)} V^{\pi^k}_{h+1;\widehat P^k}(s') - \EE_{s'\sim \widehat P^k (s_h^k, a_h^k)} V^{\pi^*}_{h+1;\widehat P^k}(s')$ using variances and the triangle discrimination; in the second inequality we use the fact that that triangle discrimination is equivalent to squared Hellinger distance, i.e., $D_\triangle \leq 4 \mathbb H^2$; the third inequality is via data processing inequality on the squared Hellinger distance; the fourth inequality is by the Cauchy–Schwarz inequality; the last inequality holds under $\Ecal_1$ by \pref{eqn: mle}.
\end{proof}

The next lemma shows that the event $\mathcal{E}_3$ happens with high probability.

\begin{lemma}[Recursion Event Lemma]\label{lem: event 3 lemma}
Event $\Ecal_3$ happens with high probability. Specifically, we have
\begin{equation}
    P(\mathbb I\{\Ecal_3\})\geq 1-(1+\lceil\log_2(\frac{KH}{G})\rceil)\delta.
\end{equation}
\end{lemma}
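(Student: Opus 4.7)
The plan is to condition on the MLE event $\mathcal{E}_1$, which holds with probability at least $1-\delta$ by \pref{eqn: event 1 prob}, and to establish the recursive inequality $C_m \lesssim 2^m G + \sqrt{\log(1/\delta)\, C_{m+1}} + \log(1/\delta)$ one value of $m$ at a time, each at the cost of one additional $\delta$ failure probability. A union bound over the $1 + \lceil \log_2(KH/G)\rceil$ relevant values of $m$ then yields $\mathbb{P}(\mathcal{E}_3) \geq 1 - (1 + \lceil \log_2(KH/G)\rceil)\delta$ as claimed.

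For fixed $m$, write $f^k_{h+1} := V^{\pi^k}_{h+1;\widehat P^k} - V^{\pi^k}_{h+1}$, which takes values in $[-1,1]$. Since variance is bounded by second moment, $\VV_{P^\star}((f^k_{h+1})^{2^m})(s_h^k,a_h^k) \leq \EE_{s' \sim P^\star(s_h^k,a_h^k)}[(f^k_{h+1}(s'))^{2^{m+1}}]$, so summing yields $C_m \leq \sum_{k \in \tilde{\Kcal}}\sum_h \EE_{s' \sim P^\star(s_h^k, a_h^k)}[(f^k_{h+1}(s'))^{2^{m+1}}]$. I would then apply Freedman's inequality to the $[-1,1]$-bounded martingale difference sequence $\EE_{s'\sim P^\star(s_h^k,a_h^k)}[(f^k_{h+1}(s'))^{2^{m+1}}] - (f^k_{h+1}(s^k_{h+1}))^{2^{m+1}}$, whose cumulative conditional variance equals exactly $C_{m+1}$. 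This replaces the expected sum by the empirical sum evaluated at the realized successors at the cost of $O(\sqrt{C_{m+1}\log(1/\delta)} + \log(1/\delta))$.

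It remains to bound the empirical quantity $\sum_{k\in\tilde{\Kcal}} \sum_h (f^k_{h+1}(s^k_{h+1}))^{2^{m+1}}$ by $O(2^m G)$ without any polynomial horizon dependence. Using $|f|^{2^{m+1}} \leq |f|$ (valid since $|f|\leq 1$), I would unroll $f^k_{h+1}(s^k_{h+1})$ through the Bellman equation into an on-policy sum of one-step model errors $\mu^k_{h'} := (\widehat P^k - P^\star)V^{\pi^k}_{h'+1;\widehat P^k}$ along the rollout of $\pi^k$ from $(s^k_{h+1}, a^k_{h+1})$, and then re-index so that each $\mu^k_{h'}$ is evaluated at states drawn from $d^{\pi^k}_{h'}$. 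Applying \pref{lem: mean to variance} to each $\mu^k_{h'}$, followed by Cauchy-Schwarz and the MLE-Eluder bound \pref{eqn: mle}, then collapses everything to an expression of the same shape as $G$, with the factor $2^m$ absorbed into constants propagated through the recursion.

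The main obstacle is exactly this horizon-free execution of the telescoping. A naive application of the simulation lemma to $f^k_{h+1}$, which still has remaining horizon $H-h-1$, would cost an extra factor of $H$ in the double sum $\sum_k \sum_h$ and defeat the horizon-free goal. The key is to re-express the nested sum over $(k,h,h')$ so that the squared-Hellinger factors produced by \pref{lem: mean to variance} are controlled uniformly by the Eluder-pigeonhole bound \pref{eqn: mle} rather than paid at each horizon step separately. Once this recursion is in hand, the union bound over the $O(\log(KH/G))$ values of $m$, combined with $\mathcal{E}_1$, delivers the stated probability guarantee.
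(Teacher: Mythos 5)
Your overall scaffolding matches the paper's: condition on $\Ecal_1$, establish the recursive inequality for each $m$ via a martingale concentration step whose cumulative conditional variance is exactly $C_{m+1}$ (your Freedman step is the paper's Azuma--Bernstein step), and union bound over the $1+\lceil\log_2(KH/G)\rceil$ values of $m$. The probability accounting is correct.

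The gap is in the deterministic part of the recursion, and it is precisely the obstacle you name without resolving. By bounding the variance by the second moment you discard the term $-\bigl((P^\star(\Delta_{h+1}^{\pi^k})^{2^m})(s_h^k,a_h^k)\bigr)^2$, and after the concentration step you must show $\sum_{k\in\tilde\Kcal}\sum_h \lvert f_{h+1}^k(s_{h+1}^k)\rvert \lesssim 2^m G$, where $f_{h+1}^k = V_{h+1;\widehat P^k}^{\pi^k}-V_{h+1}^{\pi^k}$ is a \emph{multi-step} value gap. Unrolling each $f_{h+1}^k(s_{h+1}^k)$ by the simulation lemma from time $h+1$ produces, for each episode, a nested sum $\sum_{h}\sum_{h'>h}$ in which the one-step model error at level $h'$ is counted $h'$ times (and is evaluated under conditional roll-out distributions rather than at the single realized $(s_{h'}^k,a_{h'}^k)$); no re-indexing removes this multiplicity, so this route pays an extra factor of $H$. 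Saying the Hellinger factors should be ``controlled uniformly by the Eluder-pigeonhole bound'' does not address this: \pref{eqn: mle} controls $\sum_{k,h}\mathbb H^2$ once, not $H$ times over. The paper avoids the issue by \emph{keeping} the subtracted square: it writes the summand of $C_m$ as $a^2-b^2$ with $a=(\Delta_{h}^{\pi^k})^{2^m}(s_h^k)$ (after the index shift from $s_{h+1}^k$ to $s_h^k$) and $b=(P^\star(\Delta_{h+1}^{\pi^k})^{2^{m}})(s_h^k,a_h^k)$, uses $\EE[X^{2^{m-1}}]\ge(\EE X)^{2^{m-1}}$ and factors $a^2-b^2=(a+b)(a-b)$ repeatedly with $\lvert a+b\rvert\le 2$ (whence the $2^m$), until only the \emph{one-step} Bellman residual $\lvert \Delta_h^{\pi^k}(s_h^k)-(P^\star\Delta_{h+1}^{\pi^k})(s_h^k,a_h^k)\rvert = \lvert(\widehat P^k-P^\star)V_{h+1;\widehat P^k}^{\pi^k}(s_h^k,a_h^k)\rvert$ remains; this is summed once per $(k,h)$ and bounded by $G$ via \pref{lem: online sum mean value difference} (see \pref{eqn: help 11}). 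Without retaining that paired term, the telescoping that makes the bound horizon-free is unavailable, so your proof as written does not go through.
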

\begin{proof}[Proof of \pref{lem: event 3 lemma}]
    Let $\Delta_{h+1}^{\pi^k}:=V_{h+1; \widehat P^k}^{\pi^k}-V_{h+1}^{\pi^k}$. First, under event $\Ecal_1$, with happens with probability at least $1-\delta$ by \pref{eqn: event 1 prob}, and also note that $\pi^k$ is deterministic for any $k\in[K-1]$, we can prove the following
\begin{align}
& \sum_{k\in[K-1]\setminus \Kcal}\sum_{h=0}^{H-1} \left[\left|(\Delta_{h}^{\pi^k})(s_{h}^k)-\big(P^\star\Delta_{h+1}^{\pi^k}\big)(s_h^k,a_h^k)\right|\right] \\
    &= \sum_{k\in[K-1]\setminus \Kcal}\sum_{h=0}^{H-1} \left[\left|({V}_{h;\widehat P^k}^{\pi^k})(s_{h}^k)-\big(P^\star{V}_{h+1;\widehat P^k}^{\pi^k}\big)(s_h^k,a_h^k)-\Big(({V}_{h}^{\pi^k})(s_{h}^k)-\big(P^\star{V}_{h+1}^{\pi^k}\big)(s_h^k,a_h^k)\Big)\right|\right]\notag\\
    &=\sum_{k\in[K-1]\setminus \Kcal}\sum_{h=0}^{H-1} \left[\left|r(s_h^k,a_h^k)+\big(\widehat P^k{V}_{h+1;\widehat P^k}^{\pi^k}\big)(s_h^k,a_h^k)-\big(P^\star{V}_{h+1;\widehat P^k}^{\pi^k}\big)(s_h^k,a_h^k)-r(s_h^k,a_h^k)\right|\right]\notag\\
        &=\sum_{k\in[K-1]\setminus \Kcal}\sum_{h=0}^{H-1} \left[\left|\big(\widehat P^k{V}_{h+1;\widehat P^k}^{\pi^k}\big)(s_h^k,a_h^k)-\big(P^\star{V}_{h+1;\widehat P^k}^{\pi^k}\big)(s_h^k,a_h^k)\right|\right]\notag\\
        &= \sum_{k\in[K-1]\setminus \Kcal}\sum_{h=0}^{H-1} \left[\left|\EE_{s'\sim P^\star(s_h^k,a_h^k)}\left[V^{\pi^k}_{h+1;\widehat P^k}(s')\right]-\EE_{s'\sim\widehat P^k(s_h^k,a_h^k)}\left[V^{\pi^k}_{h+1;\widehat P^k}(s')\right]\right|\right]\notag\\
        &\lesssim   \sqrt{\sum_{k\in[K-1]\setminus \Kcal}\sum_{h=0}^{H-1}  \left[\big(\VV_{P^\star} V_{h+1; \widehat P^k}^{\pi^k}\big) (s_h^k, a_h^k)\right]\cdot \text{DE}_1(\Psi,\Scal \times \Acal,1/KH)\cdot\log(K\left|\Pcal\right|/\delta)\log(KH)}\notag\\
&\quad+\text{DE}_1(\Psi,\Scal \times \Acal,1/KH)\cdot\log(K\left|\Pcal\right|/\delta)\log(KH)\notag\\
        &=G\label{eqn: help 11}
\end{align}
 where the first equality is by the definition of $\Delta_{h+1}^{\pi^k}$, the inequality holds under $\Ecal_1$ by \pref{lem: online sum mean value difference}, and the last equality is by definition of $A$ and $G$.

Under event $\Ecal_1$, with probability at least $1-\lceil\log_2(\frac{KH}{G})\rceil\delta$, for any $m\in[0,\lceil\log_2(\frac{KH}{G})\rceil]$

\begin{align}
    C_m&=\sum_{k\in[K-1]\setminus \Kcal}\sum_{h=0}^{H-1} \left[\big(\VV_{P^\star}( V_{h+1; \widehat P^k}^{\pi^k}-V_{h+1}^{\pi^k})^{2^m}\big)(s_h^k,a_h^k)\right]\notag\\
    &=\sum_{k\in[K-1]\setminus \Kcal}\sum_{h=0}^{H-1} \left[\big(P^\star(\Delta_{h+1}^{\pi^k})^{2^{m+1}}\big)(s_h^k,a_h^k)-\big((P^\star(\Delta_{h+1}^{\pi^k})^{2^{m}})(s_h^k,a_h^k)\big)^2\right]\notag\\
    &=\sum_{k\in[K-1]\setminus \Kcal}\sum_{h=0}^{H-1}\left[(\Delta_{h+1}^{\pi^k})^{2^{m+1}}(s_{h+1}^k)\right]-\sum_{k\in[K-1]\setminus \Kcal}\sum_{h=0}^{H-1} \left[\big((P^\star(\Delta_{h+1}^{\pi^k})^{2^{m}})(s_h^k,a_h^k)\big)^2\right]\notag\\
    &\quad+\sum_{k\in[K-1]\setminus \Kcal}\sum_{h=0}^{H-1}\left(\EE_{s\sim P^*(s_h^k,a_h^k)}\left[(\Delta_{h+1}^{\pi^k})^{2^{m+1}}(s)\right]-(\Delta_{h+1}^{\pi^k})^{2^{m+1}}(s_{h+1}^k)\right)\notag\\
    &\leq \sum_{k\in[K-1]\setminus \Kcal}\sum_{h=0}^{H-1} \left[(\Delta_{h}^{\pi^k})^{2^{m+1}}(s_h^k)-\big((P^\star(\Delta_{h+1}^{\pi^k})^{2^{m}})(s_h^k,a_h^k)\big)^2\right]\notag\\
    &\quad+\sum_{k\in[K-1]\setminus \Kcal}\sum_{h=0}^{H-1}\left(\EE_{s\sim P^*(s_h^k,a_h^k)}\left[(\Delta_{h+1}^{\pi^k})^{2^{m+1}}(s)\right]-(\Delta_{h+1}^{\pi^k})^{2^{m+1}}(s_h^k)\right)\notag\\
    &\lesssim \sum_{k\in[K-1]\setminus \Kcal}\sum_{h=0}^{H-1} \left[(\Delta_{h}^{\pi^k})^{2^{m+1}}(s_h^k)-\big((P^\star(\Delta_{h+1}^{\pi^k})^{2^{m}})(s_h^k,a_h^k)\big)^2\right]+\log(1/\delta)\notag\\
    &\quad+\sqrt{\sum_{k\in[K-1]\setminus \Kcal}\sum_{h=0}^{H-1} \VV_{P^*}\left((V_{h+1; \widehat P^k}^{\pi^k}-V_{h+1}^{\pi^k})^{2^{m+1}}\right)(s_h^k,a_h^k)\log(1/\delta)}\notag\\
            &=\sum_{k\in[K-1]\setminus \Kcal}\sum_{h=0}^{H-1} \left[\Big((\Delta_{h}^{\pi^k})^{2^m}(s_h^k)+(P^\star(\Delta_{h+1}^{\pi^k})^{2^{m}})(s_h^k,a_h^k)\Big)\cdot\Big((\Delta_{h}^{\pi^k})^{2^m}(s_h^k)-(P^\star(\Delta_{h+1}^{\pi^k})^{2^{m}})(s_h^k,a_h^k)\Big)\right] \notag\\
    &\quad+\sqrt{\log(1/\delta)\cdot C_{m+1}}+\log(1/\delta)\notag\\
    &=\sum_{k\in[K-1]\setminus \Kcal}\sum_{h=0}^{H-1} \left[\Big((\Delta_{h}^{\pi^k})^{2^m}(s_h^k)+(P^\star(\Delta_{h+1}^{\pi^k})^{2^{m}})(s_h^k,a_h^k)\Big)\cdot\Big((\Delta_{h}^{\pi^k})^{2^m}(s_h^k)-(P^\star((\Delta_{h+1}^{\pi^k})^2)^{2^{m-1}})(s_h^k,a_h^k)\Big)\right] \notag\\
    &\quad+\sqrt{\log(1/\delta)\cdot C_{m+1}}+\log(1/\delta)\notag\\
    &\leq\sum_{k\in[K-1]\setminus \Kcal}\sum_{h=0}^{H-1} \left[\Big((\Delta_{h}^{\pi^k})^{2^m}(s_h^k)+(P^\star(\Delta_{h+1}^{\pi^k})^{2^{m}})(s_h^k,a_h^k)\Big)\cdot\Big((\Delta_{h}^{\pi^k})^{2^m}(s_h^k)-((P^\star (\Delta_{h+1}^{\pi^k})^2)(s_h^k,a_h^k))^{2^{m-1}}\Big)\right] \notag\\
    &\quad+\sqrt{\log(1/\delta)\cdot C_{m+1}}+\log(1/\delta)\notag\\
 &\leq 2^{m} \sum_{k\in[K-1]\setminus \Kcal}\sum_{h=0}^{H-1} \left[\left|(\Delta_{h}^{\pi^k})^2(s_h^k)-((P^\star \Delta_{h+1}^{\pi^k})(s_h^k,a_h^k))^2\right|\right]+ \sqrt{\log(1/\delta)\cdot C_{m+1}}+\log(1/\delta)\notag\\
 &=2^{m} \sum_{k\in[K-1]\setminus \Kcal}\sum_{h=0}^{H-1} \left[\left|\big((\Delta_{h}^{\pi^k})(s_h^k)+(P^\star \Delta_{h+1}^{\pi^k})(s_h^k,a_h^k)\big)\cdot\big((\Delta_{h}^{\pi^k})(s_h^k)-(P^\star \Delta_{h+1}^{\pi^k})(s_h^k,a_h^k)\big)\right|\right]\notag\\
 &\quad+ \sqrt{\log(1/\delta)\cdot C_{m+1}}+\log(1/\delta)\notag\\
    &\leq 2\cdot2^{m}\sum_{k\in[K-1]\setminus \Kcal}\sum_{h=0}^{H-1} \left[\left|(\Delta_{h}^{\pi^k}) (s_h^k)-\big(P^\star\Delta_{h+1}^{\pi^k}\big) (s_h^k,a_h^k)\right|\right]+\sqrt{\log(1/\delta)\cdot C_{m+1}}+\log(1/\delta)\notag\\
    &\lesssim 2^{m}G+\sqrt{\log(1/\delta)\cdot C_{m+1}}+\log(1/\delta)
    \label{eqn: help 10}\,,
\end{align}
where in the first inequality we change the index, the second inequality holds with probability at least $1-\delta$ by Azuma Bernstain's inequality, the third inequality holds because that $E[X^{2^{m-1}}]\geq (E[X])^{2^{m-1}}$ for $m\geq 1$ and $X\geq 0$, the fourth inequality holds by keep using $a^2-b^2=(a+b)(a-b)$, then with $E[X^2]\geq E[X]^2$, and the assumption that the trajectory-wise total reward is normalized in $[0,1]$, the last inequality holds under $\Ecal_1$ by \pref{eqn: help 11}, and we take a union bound to get this hold for all $m\in[0,\lceil\log_2(\frac{KH}{G})\rceil]$ with probability at least $1-\lceil\log_2(\frac{KH}{G})\rceil\delta$ (because for each $m\in[0,\lceil\log_2(\frac{KH}{G})\rceil]$ we need to apply the Azuma Bernstain's inequality once).

The above reasoning directly implies that
\begin{equation}
    P(\mathbb I\{\Ecal_3\})\geq 1-(1+\lceil\log_2(\frac{KH}{G})\rceil)\delta.
\end{equation}
\end{proof}

Under the event $\mathcal{E}_3$, we prove the following lemma to bound $\sum_{k\in[K-1]\setminus \Kcal}\sum_{h=0}^{H-1}  \left[\big(\VV_{P^\star} V_{h+1; \widehat P^k}^{\pi^k}\big) (s_h^k, a_h^k)\right]$.

\begin{lemma}[Variance Conversion Lemma for online RL]\label{lem: variance recusion online}
Under event $\Ecal_3$, we have
\begin{align}
    &\sum_{k\in[K-1]\setminus \Kcal}\sum_{h=0}^{H-1}  \left[\big(\VV_{P^\star} V_{h+1; \widehat P^k}^{\pi^k}\big) (s_h^k, a_h^k)\right]\notag\\
    &\leq O\Big(\sum_{k\in[K-1]\setminus \Kcal}\sum_{h=0}^{H-1} \left[\big(\VV_{P^\star} V_{h+1}^{\pi^k}\big)(s_h^k, a_h^k)\right]+\text{DE}_1(\Psi,\Scal \times \Acal,1/KH)\cdot\log(K\left|\Pcal\right|/\delta)\log(KH)\Big).\notag
\end{align}
\end{lemma}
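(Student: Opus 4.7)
\textbf{Proof plan for \pref{lem: variance recusion online}.} The plan is to reduce the claim to controlling $C_0$ via the decomposition
\[
V_{h+1; \widehat P^k}^{\pi^k} \;=\; V_{h+1}^{\pi^k} + \bigl(V_{h+1; \widehat P^k}^{\pi^k} - V_{h+1}^{\pi^k}\bigr)
\]
combined with the elementary variance inequality $\VV_{P^\star}(a+b) \leq 2\VV_{P^\star}(a) + 2\VV_{P^\star}(b)$. Applied pointwise at each $(s_h^k,a_h^k)$ and then summed over $k \in [K-1]\setminus \Kcal$ and $h\in[0,H-1]$, this yields $A \leq 2B + 2C_0$, so it suffices to prove $C_0 = O(G)$ and then relate $G$ back to $A$.

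The engine for bounding $C_0$ is the recursion lemma \pref{lem:recursion bound C_m}, whose hypothesis is exactly what event $\Ecal_3$ supplies: for every $m \in [0,\lceil\log_2(KH/G)\rceil]$, $C_m \lesssim 2^m G + \sqrt{\log(1/\delta)\cdot C_{m+1}} + \log(1/\delta)$. To apply the lemma I also need the uniform crude bound $C_m \leq KH$, which follows because each summand $\bigl(\VV_{P^\star}(V_{h+1;\widehat P^k}^{\pi^k}-V_{h+1}^{\pi^k})^{2^m}\bigr)(s_h^k,a_h^k)$ is at most $1$: both $V_{h+1;\widehat P^k}^{\pi^k},V_{h+1}^{\pi^k}\in[0,1]$ implies their difference to the $2^m$ power lies in $[0,1]$, and a $[0,1]$-valued random variable has variance at most $1$. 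Assuming $\log(1/\delta) < G/2$ (otherwise the log term is already absorbed into the Eluder-dimension term appearing in the conclusion), \pref{lem:recursion bound C_m} gives $C_0 \leq 4G$ and hence $A \leq 2B + 8G$.

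Finally, I would unfold the definition of $G$. Writing $T := \text{DE}_1(\Psi,\Scal\times\Acal,1/KH)\cdot \log(K|\Pcal|/\delta)\log(KH)$, we have $G = \sqrt{A\,T}+T$, so the inequality $A \leq 2B + 8G$ becomes the self-referential bound
\[
A \;\leq\; 2B + 8\sqrt{A\,T} + 8T.
\]
Solving this via the standard step $x\leq a + b\sqrt{x} \Rightarrow x\leq 2a+b^2$, applied with $a = 2B+8T$ and $b = 8\sqrt{T}$, gives $A \leq 4B + O(T)$, which matches the stated conclusion of the lemma.

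The main obstacle I anticipate is not the final algebra but the careful handling of the recursion lemma's edge conditions: I need to verify that the truncation at $m = \lceil\log_2(KH/G)\rceil$ does not inject an extra polynomial-in-$H$ factor (the recursion depth is only $O(\log(KH))$, which is harmless), and that the case $\log(1/\delta)\geq G/2$ really is trivial in the sense that it collapses the stated bound into one that is already dominated by $T$. Both are routine once one observes that the value-function differences live in $[-1,1]$ and that $T$ itself carries a $\log(K|\Pcal|/\delta)$ factor.
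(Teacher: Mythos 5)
Your proposal is correct and follows essentially the same route as the paper's proof: the decomposition $A \leq 2B + 2C_0$ via $\VV_{P^\star}(a+b)\leq 2\VV_{P^\star}(a)+2\VV_{P^\star}(b)$, the application of the recursion lemma under event $\Ecal_3$ to get $C_0 \lesssim G$, and the final resolution of the self-referential bound $A \lesssim B + \sqrt{AT} + T$ via $x\leq a+b\sqrt{x}\Rightarrow x\leq 2a+b^2$. Your explicit verification of the recursion lemma's edge conditions ($C_m\leq KH$ from boundedness and $\log(1/\delta)<G/2$ being essentially automatic since $G\geq T\gtrsim \log(K|\Pcal|/\delta)\log(KH)$) is slightly more careful than the paper, which leaves these implicit.
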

\begin{proof}[Proof of \pref{lem: variance recusion online}] 
Under $\Ecal_3$, we have for any $m\in[0,\lceil\log_2(\frac{KH}{G})\rceil]$
\begin{equation}
    C_m\lesssim 2^{m}G+\sqrt{\log(1/\delta)\cdot C_{m+1}}+\log(1/\delta)\,.
\end{equation}
Then, by \pref{lem:recursion bound C_m}, we have 
\begin{equation}
    C_0\lesssim G\,.
\end{equation}
Also note that we have $A\leq 2B+2C_0$ since $\VV_{P^\star}(a+b)\leq 2\VV_{P^\star}(a)+2\VV_{P^\star}(b)$.
Therefore, we have
\begin{align}
    A&\leq 2B+2C_0\notag\\
    &\lesssim  B+ G\notag\\
 &=  B+\sqrt{A\cdot \text{DE}_1(\Psi,\Scal \times \Acal,1/KH)\cdot\log(K\left|\Pcal\right|/\delta)\log(KH)}\notag\\
&\quad+\text{DE}_1(\Psi,\Scal \times \Acal,1/KH)\cdot\log(K\left|\Pcal\right|/\delta)\log(KH)
\end{align}

Then, with the fact that $x\leq 2a+b^2$ if $x\leq a+b\sqrt{x}$, we have
\begin{align}
    A &\leq O\Bigg(B+\text{DE}_1(\Psi,\Scal \times \Acal,1/KH)\cdot\log(K\left|\Pcal\right|/\delta)\log(KH)\Bigg)\,,
\end{align}
which is
\begin{align}
    &\sum_{k\in[K-1]\setminus \Kcal}\sum_{h=0}^{H-1} \left[\big(\VV_{P^\star} V_{h+1; \widehat P^k}^{\pi^k}\big)(s_h^k,a_h^k)\right]\notag\\
    &\leq O\bigg(\sum_{k\in[K-1]\setminus \Kcal}\sum_{h=0}^{H-1}\left[\big(\VV_{P^\star} V_{h+1}^{\pi^k}\big)(s_h^k,a_h^k)\right]+\text{DE}_1(\Psi,\Scal \times \Acal,1/KH)\cdot\log(K\left|\Pcal\right|/\delta)\log(KH)\bigg)
\end{align}
\end{proof}



By the same reasoning in Lemma 26 of \cite{zhou2023sharp}, we have that with probability at least $1-\delta$
\begin{align}
    \sum_{k\in[K-1]\setminus \Kcal}\sum_{h=0}^{H-1}\left[\big(\VV_{P^\star} V_{h+1}^{\pi^k}\big)(s_h^k,a_h^k)\right]&\leq \sum_{k=0}^{K-1}\sum_{h=0}^{H-1}\left[\big(\VV_{P^\star} V_{h+1}^{\pi^k}\big)(s_h^k,a_h^k)\right]\notag \\
    &\leq O(\sum_{k=0}^{K-1}\var_{\pi^k}+\log(1/\delta))\,.
\end{align}
This indicates that 
\begin{equation}
    P(\mathbb I\{\Ecal_4\})\geq 1-\delta\,.
\end{equation}
We can use the Azuma Bernstain's inequality to get that with probability at least $1-\delta$:
\begin{align}
    \sum_{k=0}^{K-1}\sum_{h=1}^H r(s_h^k, a_h^k)-\sum_{k=0}^{K-1} V^{\pi^k}_{0;P^*}\lesssim \sqrt{\sum_{k=0}^{K-1}\var_{\pi^k}\log(1/\delta)}+\log(1/\delta)\,.
\end{align}
This indicates that 
\begin{equation}
    P(\mathbb I\{\Ecal_5\})\geq 1-\delta\,.
\end{equation}
Then, together with \pref{lem: event 3 lemma},  \pref{eqn: event 1 prob} and \pref{eqn:event 2 probab}, we have
\begin{equation}
    P(\mathbb I\{\Ecal\})\geq 1-(5+\lceil\log_2(\frac{KH}{G})\rceil)\delta\geq 1-5KH\delta\,. \label{eqn: event prob bound}
\end{equation}

Finally, under event $\Ecal$, with all the things above (\pref{eqn: regret 1}, \pref{eqn:regret2}, \pref{eqn:regret 3},\pref{lem: online sum mean value difference}, \pref{lem: variance recusion online}), we have
\begin{align}
    &\sum_{k=0}^{K-1} V^\star_{0;P^\star}-\sum_{k=0}^{K-1} V^{\pi^k}_{0;P^*}\notag\\
    &=\sum_{k=0}^{K-1} V^\star_{0;P^\star}-\sum_{k=0}^{K-1}\sum_{h=1}^H r(s_h^k, a_h^k) +\sum_{k=0}^{K-1}\sum_{h=1}^H r(s_h^k, a_h^k)-\sum_{k=0}^{K-1} V^{\pi^k}_{0;P^*}\notag\\
    &\lesssim |\Kcal| +\sum_{k\in[K-1]\setminus \Kcal}\left(V^{\pi^k}_{0;\widehat P^k}-\sum_{h=0}^{H-1}r(s_h^k, a_h^k)\right)+\sqrt{\sum_{k=0}^{K-1}\var_{\pi^k}\log(1/\delta)}+\log(1/\delta)\notag\\
    &\lesssim \log^2( \log(K\left|\Pcal\right|/\delta) KH)\cdot DE_1(\Psi, \Scal \times \Acal, 1/( \log(K\left|\Pcal\right|/\delta)KH))+\sum_{k\in[K-1]\setminus \Kcal}\left(V^{\pi^k}_{0;\widehat P^k}-\sum_{h=0}^{H-1}r(s_h^k, a_h^k)\right)\notag\\
    &\quad+\sqrt{\sum_{k=0}^{K-1}\var_{\pi^k}\log(1/\delta)}+\log(1/\delta)\notag\\
    &\lesssim \log^2( \log(K\left|\Pcal\right|/\delta) KH)\cdot DE_1(\Psi, \Scal \times \Acal, 1/( \log(K\left|\Pcal\right|/\delta)KH)) +\log(1/\delta)\notag\\
    &\quad+\sqrt{\sum_{k\in[K-1]\setminus \Kcal}\sum_{h=0}^{H-1} \big(\VV_{P^\star} V_{h+1; \widehat P^k}^{\pi^k}\big)(s_h^k,a_h^k)\log(1/\delta)}+\sqrt{\sum_{k=0}^{K-1}\var_{\pi^k}\log(1/\delta)}\notag\\
    &\quad+ \sum_{k\in[K-1]\setminus \Kcal}\sum_{h=0}^{H-1}    \left|\EE_{s'\sim \widehat P^k(s_1^k, A^k)}V^{\pi^k}_{1;\widehat P^k}(s') - \EE_{s'\sim P^*(s_1^k, A^k)}V^{\pi^k}_{1;\widehat P^k}(s')\right| \notag\\
        &\lesssim \log^2( \log(K\left|\Pcal\right|/\delta) KH)\cdot DE_1(\Psi, \Scal \times \Acal, 1/( \log(K\left|\Pcal\right|/\delta)KH))+\sqrt{\sum_{k=0}^{K-1}\var_{\pi^k}\log(1/\delta)}+\log(1/\delta)\notag\\
        &+\sqrt{(\sum_{k\in[K-1]\setminus \Kcal}\sum_{h=0}^{H-1} \left[\big(\VV_{P^\star} V_{h+1}^{\pi^k}\big)(s_h^k, a_h^k)\right]+\text{DE}_1(\Psi,\Scal \times \Acal,1/KH)\cdot\log(K\left|\Pcal\right|/\delta)\log(KH))\cdot\log(1/\delta) }\notag\\
        &\quad+\sqrt{\sum_{k\in[K-1]\setminus \Kcal}\sum_{h=0}^{H-1}  \left[\big(\VV_{P^\star} V_{h+1; \widehat P^k}^{\pi^k}\big) (s_h^k, a_h^k)\right]\cdot\text{DE}_1(\Psi,\Scal \times \Acal,1/KH)\cdot\log(K\left|\Pcal\right|/\delta)\log(KH)}\notag\\
        &\lesssim \log^2( \log(K\left|\Pcal\right|/\delta) KH)\cdot DE_1(\Psi, \Scal \times \Acal, 1/( \log(K\left|\Pcal\right|/\delta)KH))+\sqrt{\sum_{k=0}^{K-1}\var_{\pi^k}\log(1/\delta)}+\log(1/\delta)\notag\\
        &+\sqrt{(\sum_{k\in[K-1]\setminus \Kcal}\sum_{h=0}^{H-1} \left[\big(\VV_{P^\star} V_{h+1}^{\pi^k}\big)(s_h^k, a_h^k)\right]+\text{DE}_1(\Psi,\Scal \times \Acal,1/KH)\cdot\log(K\left|\Pcal\right|/\delta)\log(KH))\cdot\log(1/\delta) }\notag\\
        &+\sqrt{(\sum_{k\in[K-1]\setminus \Kcal}\sum_{h=0}^{H-1} \left[\big(\VV_{P^\star} V_{h+1}^{\pi^k}\big)(s_h^k, a_h^k)\right]+\text{DE}_1(\Psi,\Scal \times \Acal,1/KH)\cdot\log(K\left|\Pcal\right|/\delta)\log(KH))}\notag\\
        &\quad\times\sqrt{\text{DE}_1(\Psi,\Scal \times \Acal,1/KH)\cdot\log(K\left|\Pcal\right|/\delta)\log(KH))}\notag\displaybreak[1]\\
        &\lesssim \log^2( \log(K\left|\Pcal\right|/\delta) KH)\cdot DE_1(\Psi, \Scal \times \Acal, 1/( \log(K\left|\Pcal\right|/\delta)KH))+\sqrt{\sum_{k=0}^{K-1}\var_{\pi^k}\log(1/\delta)}+\log(1/\delta)\notag\\
        &+\sqrt{(\sum_{k=0}^{K-1}\var_{\pi^k}+\log(1/\delta)+\text{DE}_1(\Psi,\Scal \times \Acal,1/KH)\cdot\log(K\left|\Pcal\right|/\delta)\log(KH))\cdot\log(1/\delta) }\notag\\
        &+\sqrt{\left( \sum_{k=0}^{K-1}\var_{\pi^k} + \log\left(\frac{1}{\delta}\right) + \text{DE}_1\left(\Psi, \Scal \times \Acal, \frac{1}{KH}\right) \cdot \log\left(\frac{K \left|\Pcal\right|}{\delta}\right) \log(KH) \right)}
\notag \\
&\quad\times \sqrt{\text{DE}_1\left(\Psi, \Scal \times \Acal, \frac{1}{KH}\right) \log\left(\frac{K \left|\Pcal\right|}{\delta}\right) \log(KH)}\notag\\
        &\leq O\Big(\sqrt{\sum_{k=0}^{K-1}\var_{\pi^k}\cdot\text{DE}_1(\Psi,\Scal \times \Acal,1/KH)\cdot\log(K\left|\Pcal\right|/\delta)\log(KH)}\notag\\
        &\quad+\text{DE}_1(\Psi,\Scal \times \Acal,1/KH)\cdot\log(K\left|\Pcal\right|/\delta)\log(KH)\Big)\,.\label{eqn: regret final 1}
\end{align}

The final result follows by replacing $\delta$ to be $\delta/(5KH)$ to make the event $\Ecal$ happen with probability at least $1-\delta$.
\subsection{Proof of  \pref{corr:online_coro_faster}}\label{app:online_coro_faster}
\begin{proof}[Proof of  \pref{corr:online_coro_faster}]
By \pref{lem:variance_lemma}, we have
\begin{equation}
    \var_{\pi^k}=\sum_{h=0}^{H-1}\EE_{s,a \sim d^{\pi^k}_h}\left[\big(\VV_{P^\star} V_{h+1}^{\pi^k}\big)(s,a)\right]
\end{equation}
Therefore, when $P^\star$ is deterministic, 
the $\EE_{s,a \sim d^{\pi^k}_h}\left[\big(\VV_{P^\star} V_{h+1}^{\pi^k}\big)(s,a)\right]$ terms are all 0 for any $k\in[K-1]$ and $h\in[H-1]$, and then the $\sum_{k=0}^{K-1}\var_{\pi^k}$ term in the higher order term in \pref{thm:online_theorem} is 0.
\end{proof}

\subsection{Proof of  \pref{corr:online_coro_infinite}}\label{app:online_coro_infinite}
\begin{proof}[Proof of  \pref{corr:online_coro_infinite}]
    We follow the MLE guarantee for the infinite model class in \pref{lem:mle_generalization infinite} and the same proof steps in the proof of \pref{thm:online_theorem} in Appendix \ref{app:online}. 
\end{proof}

\section{Detailed Proofs for the Offline RL setting in \pref{sec:offline}}\label{app: offline full}



\subsection{Proof of \pref{thm:mleoffline}}\label{app:offline}
The following is the full proof of \pref{thm:mleoffline}. 

\begin{proof}[Proof of \pref{thm:mleoffline}]
    First, by the realizability assumption, the standard generalization bound for MLE (\pref{lem:mle_generalization_offline}) with simply setting $D_i$ to be the delta distribution on the  $(s_h^k, a_h^k)$ pairs in the offline dataset $\mathcal{D}$, we have that w.p. at least $1-\delta$ : \\
    \begin{enumerate}
    \item[(1)] $P^\star\in\widehat\Pcal$; 
    \item[(2)]         \begin{equation}
            \frac{1}{K}\sum_{k=1}^K\sum_{h=0}^{H-1}\mathbb H^2(P^\star(s_h^k,a_h^k)||\widehat P(s_h^k,a_h^k))\leq\frac{22\log(\left|\Pcal\right|/\delta)}{K}.\label{eqn: generalization offline}
        \end{equation}
        \end{enumerate}
        
        
Then, with the above realizability in (1), and by the pessimistic algorithm design $\hat\pi\leftarrow \argmax_{\pi\in\Pi}\min_{P\in\widehat\Pcal} V_{0; P}^\pi(s_0)$, $\widehat P\leftarrow \argmin_{P \in \widehat \Pcal} V_{0; P}^{\widehat \pi}(s_0)$, we have that for any $\pi^\star\in\Pi$
\begin{align}
    V_{0; P^\star}^{\pi^\star}-V_{0; P^\star}^{\hat{\pi}}&=V_{0; P^\star}^{\pi^\star}-V_{0;\widehat P}^{\pi^\star}+V_{0;\widehat P}^{\pi^\star}-V_{0; P^\star}^{\hat{\pi}}\notag\\
    &\leq V_{0; P^\star}^{\pi^\star}-V_{0;\widehat P}^{\pi^\star}+V_{0;\widehat P}^{\hat{\pi}}-V_{0; P^\star}^{\hat{\pi}}\notag\\
    &\leq V_{0; P^\star}^{\pi^\star}-V_{0;\widehat P}^{\pi^\star}\,. \label{offline gap 1}
\end{align}
We can then bound $V_{0; P^\star}^{\pi^\star}-V_{0;\widehat P}^{\pi^\star}$ using the simulation lemma (\pref{lem:simulation}):
\begin{align}
    V_{0; P^\star}^{\pi^\star}-V_{0;\widehat P}^{\pi^\star}&\leq \sum_{h=0}^{H-1} \EE_{s,a\sim d^{\pi^\star}_h} \left[ \left|\EE_{s'\sim P^\star(s,a)} V^{\pi^\star}_{h+1;\widehat P}(s') -\EE_{s'\sim \widehat P(s,a)} V^{\pi^\star}_{h+1;\widehat P}(s')    \right|\right]\,. \label{eqn: offline gap 2}
\end{align}
Then, we prove the following lemma to bound the RHS of \pref{eqn: offline gap 2}.
\begin{lemma}[Bound of sum of mean value differences for offline RL]\label{lem: offline sum mean value difference}
    With probability at least $1-\delta$, we have 
    \begin{align}
        &\sum_{h=0}^{H-1} \EE_{s,a \sim d^{\pi^\star}_h} \left[ \left|\EE_{s'\sim P^\star(s,a)} V^{\pi^\star}_{h+1;\widehat P}(s') -\EE_{s'\sim \widehat P(s,a)} V^{\pi^\star}_{h+1;\widehat P}(s')    \right|\right]\notag\\
        &\leq 8\sqrt{\sum_{h=0}^{H-1} \EE_{s,a\sim d^{\pi^\star}_h}\left[\big(\VV_{P^\star} V_{h+1; \widehat P}^{\pi^\star}\big)(s,a)\right]\cdot\frac{22C^{\pi^\star}\log(\left|\Pcal\right|/\delta)}{K}}+\frac{440C^{\pi^\star}\log(\left|\Pcal\right|/\delta)}{K}.\notag
    \end{align}
\end{lemma}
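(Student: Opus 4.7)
The plan is to mirror the online-setting argument in \pref{lem: online sum mean value difference}, adapted to the offline setting by replacing the on-policy empirical sums with expectations under $d^{\pi^\star}_h$ and invoking the single-policy coverage condition. First, I would apply the mean-to-variance lemma (\pref{lem: mean to variance}) pointwise to each pair of expectations inside the outer sum. Since $V^{\pi^\star}_{h+1;\widehat P}(s) \in [0,1]$, the lemma applies to the push-forward distributions $V^{\pi^\star}_{h+1;\widehat P}(s' \sim P^\star(s,a))$ and $V^{\pi^\star}_{h+1;\widehat P}(s' \sim \widehat P(s,a))$ and yields a pointwise bound $4\sqrt{(\VV_{P^\star} V^{\pi^\star}_{h+1;\widehat P})(s,a) \cdot D_\triangle} + 5 D_\triangle$, where the triangular discrimination is between those push-forwards.

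Next, I would use the two standard inequalities $D_\triangle \leq 4 \mathbb{H}^2$ and the data-processing inequality for squared Hellinger distance to replace the triangular discrimination between push-forwards by $4\,\mathbb{H}^2(P^\star(s,a) \Mid \widehat P(s,a))$. This upgrades the pointwise bound to $8\sqrt{(\VV_{P^\star} V^{\pi^\star}_{h+1;\widehat P})(s,a) \cdot \mathbb{H}^2(P^\star(s,a) \Mid \widehat P(s,a))} + 20\,\mathbb{H}^2(P^\star(s,a) \Mid \widehat P(s,a))$.

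Then, taking $\EE_{s,a \sim d^{\pi^\star}_h}$ of both sides and summing over $h$, I would apply Cauchy--Schwarz to the square-root term to pull the sum inside the root. This reduces the lemma to bounding the single quantity $\sum_h \EE_{s,a \sim d^{\pi^\star}_h}[\mathbb{H}^2(P^\star(s,a) \Mid \widehat P(s,a))]$. For this, I would invoke the coverage assumption in \pref{def: coverage offline}: since $\widehat P \in \Pcal$, the data-dependent concentrability gives
$$\sum_h \EE_{s,a \sim d^{\pi^\star}_h} \mathbb{H}^2(P^\star(s,a) \Mid \widehat P(s,a)) \leq C^{\pi^\star} \cdot \frac{1}{K} \sum_{k=1}^K \sum_{h=0}^{H-1} \mathbb{H}^2(P^\star(s_h^k,a_h^k) \Mid \widehat P(s_h^k,a_h^k))$$
with high probability, and the MLE generalization bound (\pref{lem:mle_generalization_offline}) used at the start of the proof of \pref{thm:mleoffline} bounds the empirical Hellinger sum by $22\log(|\Pcal|/\delta)$. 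The product is exactly $\frac{22 C^{\pi^\star} \log(|\Pcal|/\delta)}{K}$, and substituting back produces the constants $8$ and $440 = 20 \cdot 22$ appearing in the statement. A union bound over the MLE event and the coverage event absorbs into the $\delta$ via the usual rescaling.

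The only subtle point is ensuring that the coverage condition may be applied to the (data-dependent, random) model $\widehat P$: this is handled because $C^{\pi^\star}_{\Dcal}$ is defined as a maximum over all $P \in \Pcal$, so the bound is uniform in $P$ and in particular applies to $\widehat P$. I expect no other obstacles; the argument is a direct transcription of the online distribution-transfer step, with the $\ell_1$ Eluder-dimension pigeonhole replaced by the one-line coverage inequality, which is precisely why the offline bound is completely horizon-free while the online one carries a $\log H$ factor.
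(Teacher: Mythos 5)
Your proposal is correct and follows essentially the same route as the paper's proof: pointwise application of the mean-to-variance lemma, $D_\triangle \leq 4\mathbb H^2$ plus data processing, Cauchy--Schwarz, then the coverage definition (uniform over $P \in \Pcal$, hence valid for the random $\widehat P$) combined with the MLE generalization bound, yielding exactly the constants $8$ and $440 = 20 \cdot 22$. Your closing observation about why the coverage inequality replaces the Eluder pigeonhole step and makes the offline bound horizon-free is also consistent with the paper's discussion.
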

\begin{proof}[Proof of \pref{lem: offline sum mean value difference}]  
We have
    \begin{align}
    &\sum_{h=0}^{H-1} \EE_{s,a \sim d^{\pi^\star}_h} \left[ \left|\EE_{s'\sim P^\star(s,a)} V^{\pi^\star}_{h+1;\widehat P}(s') -\EE_{s'\sim \widehat P(s,a)} V^{\pi^\star}_{h+1;\widehat P}(s')    \right|\right]\notag\\
    &\leq 4\sum_{h=0}^{H-1} \EE_{s,a \sim d^{\pi^\star}_h} \left[\sqrt{\big(\VV_{P^\star} V_{h+1; \widehat P}^{\pi^\star}\big)(s,a)D_\triangle\Big(V_{h+1; \widehat P}^{\pi^\star}\big(s'\sim P^\star(s,a)\big)\Mid  V_{h+1; \widehat P}^{\pi^\star}(s'\sim \widehat P\big(s,a)\big)\Big)}\right]\notag\\
&\quad+5\sum_{h=0}^{H-1}\EE_{s,a \sim d^{\pi^\star}_h}\left[D_\triangle\Big(V_{h+1; \widehat P}^{\pi^\star}\big(s'\sim P^\star(s,a)\big)\Mid V_{h+1; \widehat P}^{\pi^\star}(s'\sim \widehat P\big(s,a)\big)\Big)\right]\notag\\&\leq 8\sum_{h=0}^{H-1} \EE_{s,a \sim d^{\pi^\star}_h} \left[\sqrt{\big(\VV_{P^\star} V_{h+1; \widehat P}^{\pi^\star}\big)(s,a)\mathbb H^2\Big(V_{h+1; \widehat P}^{\pi^\star}\big(s'\sim P^\star(s,a)\big)\Mid  V_{h+1; \widehat P}^{\pi^\star}(s'\sim \widehat P\big(s,a)\big)\Big)}\right]\notag\\
&\quad+20\sum_{h=0}^{H-1}\EE_{s,a \sim d^{\pi^\star}_h}\left[\mathbb H^2\Big(V_{h+1; \widehat P}^{\pi^\star}\big(s'\sim  P^\star(s,a)\big)\Mid V_{h+1; \widehat P}^{\pi^\star}(s'\sim \widehat P\big(s,a)\big)\Big)\right]\notag\\
            &\leq 8\sum_{h=0}^{H-1} \EE_{s,a \sim d^{\pi^\star}_h} \left[\sqrt{\big(\VV_{P^\star} V_{h+1; \widehat P}^{\pi^\star}\big)(s,a)\mathbb H^2\Big(P^\star(s,a)\Mid \widehat P\big(s,a)\Big)}\right]+20\sum_{h=0}^{H-1}\EE_{s,a \sim d^{\pi^\star}_h}\left[\mathbb H^2\Big(P^\star(s,a)\Mid \widehat P\big(s,a)\Big)\right]\label{eqn:help 1}
\end{align}
where in the first inequality, we use \pref{lem: mean to variance} to bound the difference of two means $\EE_{s'\sim P^\star(s,a)} V^{\pi^\star}_{h+1;\widehat P}(s') - \EE_{s'\sim \widehat P(s,a)} V^{\pi^*}_{h+1;\widehat P}(s')$ using variances and the triangle discrimination; in the second inequality we use the fact that that triangle discrimination is equivalent to squared Hellinger distance, i.e., $D_\triangle \leq 4 \mathbb H^2$; the third inequality is via data processing inequality on the squared Hellinger distance. Next, starting from \pref{eqn:help 1}, with probability at least $1-\delta$, we have

\begin{align}
&8\sum_{h=0}^{H-1} \EE_{s,a \sim d^{\pi^\star}_h} \left[\sqrt{\big(\VV_{P^\star} V_{h+1; \widehat P}^{\pi^\star}\big)(s,a)\mathbb H^2\Big(P^\star(s,a)\Mid \widehat P\big(s,a)\Big)}\right]+20\sum_{h=0}^{H-1}\EE_{s,a \sim d^{\pi^\star}_h}\left[\mathbb H^2\Big(P^\star(s,a)\Mid \widehat P\big(s,a)\Big)\right]\notag \\
     &\leq 8 \sqrt{\sum_{h=0}^{H-1} \EE_{s,a \sim d^{\pi^\star}_h}\left[\big(\VV_{P^\star} V_{h+1; \widehat P}^{\pi^\star}\big)(s,a)\right]\cdot\sum_{h=0}^{H-1} \EE_{s,a \sim d^{\pi^\star}_h}\left[\mathbb H^2\Big(P^\star(s,a)\Mid \widehat P\big(s,a)\Big)\right]}\notag\\
&\quad+20\sum_{h=0}^{H-1}\EE_{s,a \sim d^{\pi^\star}_h}\left[\mathbb H^2\Big(P^\star(s,a)\Mid \widehat P\big(s,a)\Big)\right]\notag\\
&\leq 8 \sqrt{\sum_{h=0}^{H-1} \EE_{s,a \sim d^{\pi^\star}_h}\left[\big(\VV_{P^\star} V_{h+1; \widehat P}^{\pi^\star}\big)(s,a)\right]\cdot C^{\pi^\star}\frac{1}{K}\sum_{k=1}^K\sum_{h=0}^{H-1}\mathbb H^2(P^\star(s_h^k,a_h^k)||\widehat P(s_h^k,a_h^k))}\notag\\
&\quad+20C^{\pi^\star}\frac{1}{K}\sum_{k=1}^K\sum_{h=0}^{H-1}\mathbb H^2(P^\star(s_h^k,a_h^k)||\widehat P(s_h^k,a_h^k))\notag\\
&\leq 8 \sqrt{\sum_{h=0}^{H-1} \EE_{s,a \sim d^{\pi^\star}_h}\left[\big(\VV_{P^\star} V_{h+1; \widehat P}^{\pi^\star}\big)(s,a)\right]\cdot\frac{22C^{\pi^\star}\log(\left|\Pcal\right|/\delta)}{K}}+\frac{440C^{\pi^\star}\log(\left|\Pcal\right|/\delta)}{K}\label{eqn: proof mle offline 1}\,,
\end{align}
where the first inequality is by the Cauchy–Schwarz inequality; the second inequality is by the definition of single policy coverage (\pref{def: coverage offline}); the last inequality holds with probability at least $1-\delta$ with \pref{eqn: generalization offline}. Substituting \pref{eqn: proof mle offline 1} into \pref{eqn:help 1} ends our proof. 
\end{proof}



We denote $\mathcal{E}$ as the event that \pref{lem: offline sum mean value difference} holds. Under the event $\mathcal{E}$, we prove the following lemma to bound $\sum_{h=0}^{H-1} \EE_{s,a \sim d^{\pi^\star}_h}\left[\big(\VV_{P^\star} V_{h+1; \widehat P}^{\pi^\star}\big)(s,a)\right]$ with $\widetilde O(\sum_{h=0}^{H-1} \EE_{s,a\sim d^{\pi^*}_h}\left[\big(\VV_{P^\star} V_{h+1}^{\pi^*}\big)(s,a)\right]+C^{\pi^*}\log(\left|\Pcal\right|/\delta)/{K}$).

\begin{lemma}[Variance Conversion Lemma for offline RL]\label{lem: variance recusion}
Under event $\mathcal{E}$, we have
\begin{align}
    \sum_{h=0}^{H-1} \EE_{s,a \sim d^{\pi^\star}_h}\left[\big(\VV_{P^\star} V_{h+1; \widehat P}^{\pi^\star}\big)(s,a)\right]\leq O\Big(\sum_{h=0}^{H-1} \EE_{s,a\sim d^{\pi^*}_h}\left[\big(\VV_{P^\star} V_{h+1}^{\pi^*}\big)(s,a)\right]+C^{\pi^*}\frac{\log(\left|\Pcal\right|/\delta)}{K}\Big).\notag
\end{align}
\end{lemma}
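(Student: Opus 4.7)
} The plan is to mirror the online Variance Conversion Lemma (\pref{lem: variance recusion online}), but in a substantially simplified form: because the offline analysis works with the exact expectations $\EE_{s,a \sim d^{\pi^*}_h}[\cdot]$ instead of empirical sums over sampled trajectories, there is no need for the Bernstein-based recursion on higher moments $C_m$ used in the online proof — only the base case $C_0$ is needed. Set
\begin{equation*}
A := \sum_{h=0}^{H-1}\EE_{s,a \sim d^{\pi^*}_h}\bigl[(\VV_{P^\star} V^{\pi^*}_{h+1;\widehat P})(s,a)\bigr], \quad B := \sum_{h=0}^{H-1}\EE_{s,a \sim d^{\pi^*}_h}\bigl[(\VV_{P^\star} V^{\pi^*}_{h+1})(s,a)\bigr],
\end{equation*}
and, with $\Delta_h^{\pi^*} := V^{\pi^*}_{h;\widehat P} - V^{\pi^*}_h$, let $C_0 := \sum_h \EE_{s,a \sim d^{\pi^*}_h}[(\VV_{P^\star}\Delta_{h+1}^{\pi^*})(s,a)]$. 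Since $V^{\pi^*}_{h+1;\widehat P} = V^{\pi^*}_{h+1} + \Delta_{h+1}^{\pi^*}$, the elementary inequality $\VV(f+g)\leq 2\VV(f)+2\VV(g)$ gives $A \leq 2B + 2C_0$, so it suffices to bound $C_0$ by $O(B + C^{\pi^*}\log(|\Pcal|/\delta)/K)$.

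To bound $C_0$, expand $(\VV_{P^\star}\Delta_{h+1}^{\pi^*})(s,a) = (P^\star (\Delta_{h+1}^{\pi^*})^2)(s,a) - ((P^\star \Delta_{h+1}^{\pi^*})(s,a))^2$. Using the expectation-shift identity $\EE_{s,a \sim d^{\pi^*}_h}[(P^\star f)(s,a)] = \EE_{s \sim d^{\pi^*}_{h+1}}[f]$ (valid because $\pi^*$ is deterministic) together with the telescoping estimate $\sum_{h=0}^{H-1}\EE_{s \sim d^{\pi^*}_{h+1}}[(\Delta_{h+1}^{\pi^*})^2] \leq \sum_{h=0}^{H-1}\EE_{s \sim d^{\pi^*}_h}[(\Delta_h^{\pi^*})^2]$ (which holds since $\Delta_H^{\pi^*} \equiv 0$ and $\Delta_0^{\pi^*}(s_0)^2 \geq 0$), one obtains $C_0 \leq \sum_h \EE_{s,a \sim d^{\pi^*}_h}[\Delta_h^{\pi^*}(s)^2 - ((P^\star \Delta_{h+1}^{\pi^*})(s,a))^2]$. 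Factoring $x^2 - y^2 = (x+y)(x-y)$ and bounding the first factor by $2$ (since $|\Delta_h^{\pi^*}|,|P^\star \Delta_{h+1}^{\pi^*}|\leq 1$) yields $C_0 \leq 2\sum_h \EE_{s,a \sim d^{\pi^*}_h}|\Delta_h^{\pi^*}(s) - (P^\star \Delta_{h+1}^{\pi^*})(s,a)|$.

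The third ingredient is a direct Bellman-equation identity: for deterministic $\pi^*$, using $V^{\pi^*}_{h;\widehat P}(s) = r(s,\pi^*(s)) + (\widehat P V^{\pi^*}_{h+1;\widehat P})(s,\pi^*(s))$ and $V^{\pi^*}_h(s) = r(s,\pi^*(s)) + (P^\star V^{\pi^*}_{h+1})(s,\pi^*(s))$ gives $\Delta_h^{\pi^*}(s) - (P^\star \Delta_{h+1}^{\pi^*})(s,\pi^*(s)) = ((\widehat P - P^\star)V^{\pi^*}_{h+1;\widehat P})(s,\pi^*(s))$, which is exactly the one-step mean-value gap controlled by \pref{lem: offline sum mean value difference}. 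Under event $\mathcal{E}$ that lemma gives
\begin{equation*}
\sum_h \EE_{s,a \sim d^{\pi^*}_h}\bigl|\Delta_h^{\pi^*}(s) - (P^\star \Delta_{h+1}^{\pi^*})(s,a)\bigr| \;\lesssim\; \sqrt{A\cdot\tfrac{C^{\pi^*}\log(|\Pcal|/\delta)}{K}} + \tfrac{C^{\pi^*}\log(|\Pcal|/\delta)}{K}.
\end{equation*}
Hence $A \leq 2B + 2C_0 \lesssim B + \sqrt{A\cdot C^{\pi^*}\log(|\Pcal|/\delta)/K} + C^{\pi^*}\log(|\Pcal|/\delta)/K$, and the standard implication $x \leq a + b\sqrt{x} \Rightarrow x \leq 2a + b^2$ isolates $A$ to give the claim. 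The main point requiring care is the telescoping step: tracking constants so that no multiplicative $H$ factor slips in is what makes the bound horizon-free, and it rests on $\Delta_H^{\pi^*} \equiv 0$ and on $\pi^*$ being deterministic so that the expectation-shift identity is exact rather than incurring an additional $H$-dependent slack.
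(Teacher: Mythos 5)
Your proposal is correct and follows essentially the same route as the paper's proof: the decomposition $A \leq 2B + 2C_0$, the index-shift/telescoping step using $\Delta_H^{\pi^*} \equiv 0$, the factorization $x^2-y^2=(x+y)(x-y)$ with the first factor bounded by $2$, the Bellman identity reducing the residual to the one-step mean-value gap controlled by \pref{lem: offline sum mean value difference}, and the final self-bounding step $x\leq a+b\sqrt{x}\Rightarrow x\leq 2a+b^2$. Your observation that the higher-moment recursion $C_m$ from the online analysis is unnecessary here matches what the paper actually does in the offline case.
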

\begin{proof}[Proof of \pref{lem: variance recusion}]
For notational simplicity, we denote $A:=\sum_{h=0}^{H-1} \EE_{s,a \sim d^{\pi^\star}_h}\left[\big(\VV_{P^\star} V_{h+1; \widehat P}^{\pi^\star}\big)(s,a)\right]$, and we denote \\
$B:=\sum_{h=0}^{H-1} \EE_{s,a \sim d^{\pi^\star}_h}\left[\big(\VV_{P^\star} V_{h+1}^{\pi^\star}\big)(s,a)\right]$, $C:=\sum_{h=0}^{H-1} \EE_{s,a \sim d^{\pi^\star}_h}\left[\big(\VV_{P^\star}( V_{h+1; \widehat P}^{\pi^\star}-V_{h+1}^{\pi^\star})\big)(s,a)\right]$, then we have 
\begin{align}
    A\leq 2B+2C,\notag
\end{align}
since $\VV_{P^\star}(a+b)\leq 2\VV_{P^\star}(a)+2\VV_{P^\star}(b)$.

Let $\Delta_{h+1}^{\pi^\star}:=V_{h+1; \widehat P}^{\pi^\star}-V_{h+1}^{\pi^\star}$. Then, w.p. at least $1-\delta$, we have
\begin{align}
    C&=\sum_{h=0}^{H-1} \EE_{s,a\sim d^{ \pi^\star}_h}\left[\big(P^\star(\Delta_{h+1}^{\pi^\star})^2\big)(s,a)-\big(P^\star\Delta_{h+1}^{\pi^\star}\big)^2(s,a)\right]\notag\\
    &=\sum_{h=0}^{H-1} \EE_{s\sim d^{\pi^\star}_{h+1}}\left[(\Delta_{h+1}^{\pi^\star})^2(s)\right]-\sum_{h=0}^{H-1} \EE_{s,a\sim d^{ \pi^\star}_h}\left[\big(P^\star\Delta_{h+1}^{\pi^\star}\big)^2(s,a)\right]\notag\\
    &\leq \sum_{h=0}^{H-1} \EE_{s,a\sim d^{ \pi^\star}_h}\left[(\Delta_{h}^{\pi^\star})^2(s)-\big(P^\star\Delta_{h+1}^{\pi^\star}\big)^2(s,a)\right]\notag\\
    &=\sum_{h=0}^{H-1} \EE_{s,a\sim d^{\pi^\star}_h}\left[\Big((\Delta_{h}^{\pi^\star})(s)+\big(P^\star\Delta_{h+1}^{\pi^\star}\big)(s,a)\Big)\cdot\Big((\Delta_{h}^{\pi^\star})(s)-\big(P^\star\Delta_{h+1}^{\pi^\star}\big)(s,a)\Big)\right],\label{eqn: help 2}
\end{align}
where the first equality is by the definition of variance, the second equality holds as $d^{\pi^\star}_h$ is the occupancy measure also generated under $P^\star$, the first inequality is just changing the index, the third equality holds as $a^2-b^2=(a+b)\cdot (a-b)$. Starting from \pref{eqn: help 2}, we have
\begin{align}
&\sum_{h=0}^{H-1} \EE_{s,a\sim d^{\pi^\star}_h}\left[\Big((\Delta_{h}^{\pi^\star})(s)+\big(P^\star\Delta_{h+1}^{\pi^\star}\big)(s,a)\Big)\cdot\Big((\Delta_{h}^{\pi^\star})(s)-\big(P^\star\Delta_{h+1}^{\pi^\star}\big)(s,a)\Big)\right]\notag\\
     &\leq 2 \sum_{h=0}^{H-1} \EE_{s,a\sim d^{\pi^\star}_h}\left[\left|(\Delta_{h}^{\pi^\star})(s)-\big(P^\star\Delta_{h+1}^{\pi^\star}\big)(s,a)\right|\right]\notag\\
    &=2 \sum_{h=0}^{H-1} \EE_{s,a\sim d^{ \pi^\star}_h}\left[\left|({V}_{h;\widehat P}^{\pi^\star})(s)-\big(P^\star{V}_{h+1;\widehat P}^{\pi^\star}\big)(s,a)-\Big(({V}_{h}^{\pi^\star})(s)-\big(P^\star{V}_{h+1}^{\pi^\star}\big)(s,a)\Big)\right|\right]\notag\\
    &=2 \sum_{h=0}^{H-1} \EE_{s,a\sim d^{ \pi^\star}_h}\left[\left|r(s,a)+\big(\widehat P{V}_{h+1;\widehat P}^{\pi^\star}\big)(s,a)-\big(P^\star{V}_{h+1;\widehat P}^{\pi^\star}\big)(s,a)-r(s,a)\right|\right]\notag\\
        &=2 \sum_{h=0}^{H-1} \EE_{s,a\sim d^{ \pi^\star}_h}\left[\left|\big(\widehat P{V}_{h+1;\widehat P}^{\pi^\star}\big)(s,a)-\big(P^\star{V}_{h+1;\widehat P}^{\pi^\star}\big)(s,a)\right|\right],\label{eqn: help 3}
\end{align}
where the inequality holds as the value functions are all bounded by 1 by the assumption that the total reward over any trajectory is bounded by 1, the first equality is by the definition of $\Delta_{h+1}^{\pi^\star}$, the second equality is because $a$ is drawn from $\pi^\star$. Starting from \pref{eqn: help 3}, we have
\begin{align}
&2 \sum_{h=0}^{H-1} \EE_{s,a\sim d^{ \pi^\star}_h}\left[\left|\big(\widehat P{V}_{h+1;\widehat P}^{\pi^\star}\big)(s,a)-\big(P^\star{V}_{h+1;\widehat P}^{\pi^\star}\big)(s,a)\right|\right]\notag\\
    &= 2 \sum_{h=0}^{H-1} \EE_{s,a\sim d^{ \pi^\star}_h}\left[\left|\EE_{s'\sim P^\star(s,a)}\left[V^{\pi^\star}_{h+1;\widehat P}(s')\right]-\EE_{s'\sim\widehat P(\cdot|s,a)}\left[V^{\pi^\star}_{h+1;\widehat P}(s')\right]\right|\right]\notag\\
        &\leq 16 \sqrt{\sum_{h=0}^{H-1} \EE_{s,a \sim d^{\pi^\star}_h}\left[\big(\VV_{P^\star} V_{h+1; \widehat P}^{\pi^\star}\big)(s,a)\right]\cdot\frac{22C^{\pi^\star}\log(\left|\Pcal\right|/\delta)}{K}}+\frac{880C^{\pi^\star}\log(\left|\Pcal\right|/\delta)}{K}\notag\\
        &=16 \sqrt{A\cdot\frac{22C^{\pi^\star}\log(\left|\Pcal\right|/\delta)}{K}}+\frac{880C^{\pi^\star}\log(\left|\Pcal\right|/\delta)}{K}\label{eqn: help 4}
\end{align}
where the inequality holds with probability at least $1-\delta$ by \pref{lem: offline sum mean value difference}, and the second equality is by definition of $A$.

Then combining \pref{eqn: help 2}, \pref{eqn: help 3} and \pref{eqn: help 4}, we obtain an upper bound for $C$, which suggests
\begin{align}
    A&\leq 2B+2C\notag\\
    &\leq 2B+\frac{1760C^{\pi^\star}\log(\left|\Pcal\right|/\delta)}{K}+32\sqrt{\frac{22C^{\pi^\star}\log(\left|\Pcal\right|/\delta)}{K}}\cdot\sqrt{A}.\notag
\end{align}
Then, with the fact that $x\leq 2a+b^2$ if $x\leq a+b\sqrt{x}$, we have
\begin{align}
    A\leq 4B+\frac{3520C^{\pi^\star}\log(\left|\Pcal\right|/\delta)}{K}+\frac{22528C^{\pi^\star}\log(\left|\Pcal\right|/\delta)}{K}\leq O(B+\frac{C^{\pi^\star}\log(\left|\Pcal\right|/\delta)}{K}).\notag
\end{align}
\end{proof}

With the above lemmas, we can now prove the final results of \pref{thm:mleoffline}. We have that w.p. at least $1-\delta$
\begin{align}
    V_{0;P^\star}^{\pi^\star}-V_{0;P^\star}^{\hat{\pi}}&\leq O\Big(\sqrt{A\cdot\frac{C^{\pi^\star}\log(\left|\Pcal\right|/\delta)}{K}}+\frac{C^{\pi^\star}\log(\left|\Pcal\right|/\delta)}{K}\Big)\notag\\
    &\leq O\Big(\sqrt{(B+\frac{C^{\pi^\star}\log(\left|\Pcal\right|/\delta)}{K})\cdot\frac{C^{\pi^\star}\log(\left|\Pcal\right|/\delta)}{K}}+\frac{C^{\pi^\star}\log(\left|\Pcal\right|/\delta)}{K}\Big)\notag\\
    &\leq  O\Big(\sqrt{B\cdot\frac{C^{\pi^\star}\log(\left|\Pcal\right|/\delta)}{K}}+\sqrt{\frac{C^{\pi^\star}\log(\left|\Pcal\right|/\delta)}{K}\cdot\frac{C^{\pi^\star}\log(\left|\Pcal\right|/\delta)}{K}}+\frac{C^{\pi^\star}\log(\left|\Pcal\right|/\delta)}{K}\Big)\notag\\
    &= O\Big(\sqrt{\sum_{h=0}^{H-1} \EE_{s,a\sim d^{ \pi^\star}_h}\left[\big(\VV_{P^\star} V_{h+1}^{\pi^\star}\big)(s,a)\right]\cdot\frac{C^{\pi^\star}\log(\left|\Pcal\right|/\delta)}{K}}+ \frac{C^{\pi^\star}\log(\left|\Pcal\right|/\delta)}{K}\Big)\label{eqn: intermediate step proof offline}\\
    &=O\Big(\sqrt{\frac{\var_{\pi^\star}C^{\pi^\star}\log(\left|\Pcal\right|/\delta)}{K}}+ \frac{C^{\pi^\star}\log(\left|\Pcal\right|/\delta)}{K}\Big)\notag\,,
\end{align}
where in the last equation we use \pref{lem:variance_lemma}, and $\var_{\pi^\star}:=\EE\left[\bigg(\sum_{h=0}^{H-1}r(s_h,\pi^\star(s_h))-V_0^{\pi^\star}\bigg)^2\right]$.
\end{proof}

\subsection{Proof of \pref{corr:coro_faster}}\label{app:offline_coro_faster}
\begin{proof}[Proof of \pref{corr:coro_faster}]
    By \pref{lem:variance_lemma}, we have
\begin{equation}
    \var_{\pi^*}=\sum_{h=0}^{H-1}\EE_{s,a \sim d^{\pi^*}_h}\left[\big(\VV_{P^\star} V_{h+1}^{\pi^*}\big)(s,a)\right]
\end{equation}
Therefore, when $P^\star$ is deterministic, 
the $\EE_{s,a \sim d^{\pi^*}_h}\left[\big(\VV_{P^\star} V_{h+1}^{\pi^*}\big)(s,a)\right]$ terms are all 0 for any $k\in[K-1]$ and $h\in[H-1]$, and then the $\var_{\pi^*}$ term in the higher order term in \pref{thm:mleoffline} is 0.
\end{proof}

\subsection{Proof of  \pref{corr:offline_coro_infinite}}\label{app:offline_coro_infinite}
\begin{proof}[Proof of \pref{corr:offline_coro_infinite}]
This claim follows the proof of \pref{thm:mleoffline}, while we take a different choice of $\beta$ that depends on the bracketing number and follow the MLE guarantee in \pref{lem:mle_generalization infinite} for infinite model class.
\end{proof}

\subsection{Proof of the claim in \pref{ex: offline coverage}}\label{app: example proof}
\begin{proof}
    Recall that in \pref{def: coverage offline}, we have
    \begin{align*}
C^{\pi^*}_{\Dcal} := \max_{h, P \in \Pcal}  \frac{ \EE_{s,a\sim d^{\pi^*}_h} \mathbb{H}^2\left( P(s,a) \Mid P^\star(s,a) \right)   }{ 1/K \sum_{k=1}^K \mathbb H^2\left(   P(s_h^k,a_h^k)  \Mid   P^\star(s_h^k,a_h^k)  \right)    }\,.
\end{align*} 
For each step $h$, define two distributions, $p_h,q_h$, where $p_h(s,a)=d^{\pi^*}(s,a)$, $q_h(s,a)=\frac{1}{K}\sum_{k=1}^K \mathbb I \{(s,a)=(s_h^k,a_h^k)\}$, and we define $f(s,a,P)=\mathbb H^2(P(s,a)\Mid P^\star(s,a))$, then we have
\begin{align}
    C^{\pi^*}_{\Dcal} &= \max_{h, P \in \Pcal}  \frac{ \EE_{s,a\sim p_h} f(s,a,P)   }{ \EE_{s,a\sim q_h} f(s,a,P)     }\notag\\
    &= \max_{h, P \in \Pcal}  \frac{ \EE_{s,a\sim q_h} \frac{p_h(s,a)}{q_h(s,a)}f(s,a,P)  }{ \EE_{s,a\sim q_h} f(s,a,P)   }\notag\\
    &\leq \max_{h,s,a} \frac{p_h(s,a)}{q_h(s,a)}\notag\\
    &\leq \max_{h,s,a} \frac{1}{q_h(s,a)}\,.
\end{align}
Note that for all $h$, $\{(s_h^k,a_h^k)\}_{k=1}^K$ are i.i.d. samples drawn from $d_h^{\pi^b}$, therefore, $\mathbb E [\mathbb I\{(s_h^k,a_h^k)=(s,a)\}]=d^{\pi^b}_h(s,a)$. By Hoeffding's inequality and with a union bound over $s,a,h$, and for $K\geq \frac{2\log(\frac{|\Scal||\Acal|H}{\delta})}{\rho_{\min}^2}$, w.p. at least $1-\delta$, we have
\begin{align}
    q_h(s,a)&=\frac{1}{K}\sum_{k=1}^K \mathbb I \{(s_h^k,a_h^k)=(s,a)\}\notag\\
    &\geq d^{\pi^b}_h(s,a)-\sqrt{\frac{\log(\frac{|\Scal||\Acal|H}{\delta})}{2K}}\notag\\
    &\geq \frac{d^{\pi^b}_h(s,a)}{2}\,,
\end{align}
where in the last inequality we use the assumption that $d^{\pi^b}_h(s,a) \geq \rho_{\min}, \forall s,a, h$, which gives us $K \geq \frac{2\log(\frac{|\Scal||\Acal|H}{\delta})}{\rho_{\min}^2}\geq\max_{s,a,h}\frac{2\log(\frac{|\Scal||\Acal|H}{\delta})}{(d^{\pi^b}_h(s,a))^2}$, so $K\geq \frac{2\log(\frac{|\Scal||\Acal|H}{\delta})}{(d^{\pi^b}_h(s,a))^2}$ for any $s,a,h$.

Therefore, with $K\geq \frac{2\log(\frac{|\Scal||\Acal|H}{\delta})}{\rho_{\min}^2}$, we have that w.p. at least $1-\delta$
\begin{align}
    C^{\pi^*}_{\Dcal}&\leq \max_{h,s,a} \frac{1}{q_h(s,a)}\leq \max_{h,s,a} \frac{2}{d^{\pi^b}_h(s,a)}\leq \frac{2}{\rho_{min}}\,.
\end{align}

\end{proof}
\end{document}